\newcommand{\algorithmicHyper}{\textbf{Hyperparameters:}}
\newcommand{\HYPER}{\item[\algorithmicHyper]}
\newcommand{\AlgComment}[1]{\STATE \textit{$\triangleright$ #1}}
\theoremstyle{plain} 
\newtheorem{lemma}{Lemma}
\newtheorem{theorem}{Theorem}
\newtheorem{corollary}{Corollary}
\newtheorem{proposition}{Proposition}
\theoremstyle{definition}
\newtheorem{example}{Example}
\newcommand{\Null}{\operatorname{Null}}
\icmltitlerunning{Local EGOP for Continuous Index Learning}
\begin{document}

\twocolumn[
\icmltitle{Local EGOP for Continuous Index Learning}

\begin{icmlauthorlist}
\icmlauthor{Alex Kokot}{uwstat}
\icmlauthor{Anand Hemmady}{uwbio}
\icmlauthor{Vydhourie Thiyageswaran}{uwstat}
\icmlauthor{Marina Meila}{uwaterloo}
\end{icmlauthorlist}

\icmlaffiliation{uwstat}{Department of Statistics, University of Washington}
\icmlaffiliation{uwbio}{Department of Biostatistics, University of Washington}
\icmlaffiliation{uwaterloo}{School of Computer Science, University of Waterloo}

\icmlcorrespondingauthor{Alex Kokot}{akokot@uw.edu} 

\icmlkeywords{Machine Learning, ICML, EGOP, Continuous Index Learning}

\vskip 0.3in
]

\printAffiliationsAndNotice{} 

\begin{abstract}
  We introduce the setting of continuous index learning, in which a function of many variables varies only along a small number of directions at each point. For efficient estimation, it is beneficial for a learning algorithm to adapt, near each point $x$, to the subspace that captures the local variability of the function $f$.
  We pose this task as kernel adaptation along a manifold with noise, and introduce Local EGOP learning, a recursive algorithm that utilizes the Expected Gradient Outer Product (EGOP) quadratic form as both a metric and inverse-covariance of our target distribution. We prove that Local EGOP learning adapts to the regularity of the function of interest, showing that under a supervised noisy manifold hypothesis, intrinsic dimensional learning rates are achieved for arbitrarily high-dimensional noise. Empirically, we compare our algorithm to the feature learning capabilities of deep learning. Additionally, we demonstrate improved regression quality compared to two-layer neural networks in the continuous single-index setting.
\end{abstract}

\section{Problem and approach}
\label{sec:intro}
\newcommand{\M}{\mathcal{M}}   


\begin{figure}[tb]
    \centering
    \includegraphics[width=1\linewidth]{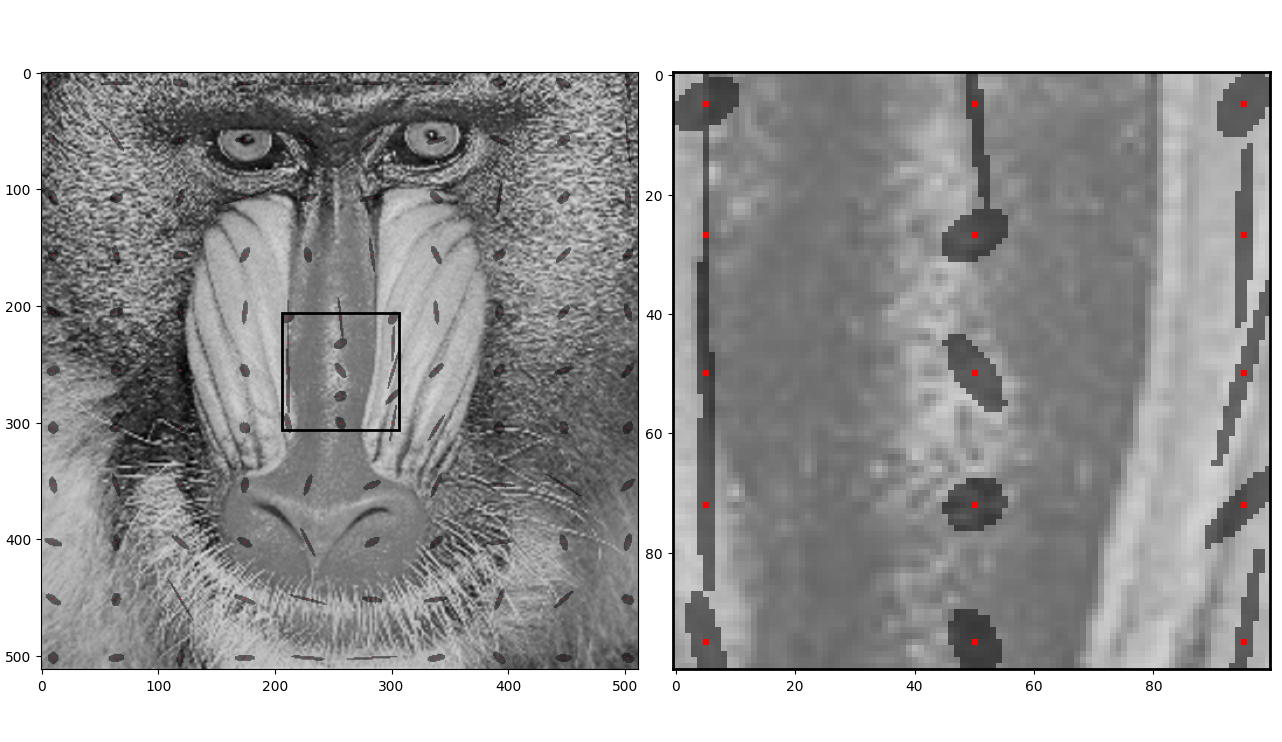}
    \caption{Localizations from Local EGOP Learning (Algorithm 1) centered at each of the highlighted points (in red) trained on a grayscale image of a mandrill \citet{bc9m-f507-21}. Here the inputs $x$ are the pixel location and $f(x)$ is the grayscale value of the image. For visualization purposes the procedure was stopped early, and the 125 highest weight pixels are highlighted at each point. On the right the image is magnified to the highlighted region boxed-off on the left.}
    \label{fig: Mandrill}
\end{figure}


In a variety of contexts, including speech, images, and molecular dynamics, high-dimensional data often lie near a non-linear but low-dimensional manifold.
Motivated by this phenomenon, we formulate the \textit{supervised noisy manifold hypothesis (SNMH)}, which posits that a target function $f(x)$ varies along, but not orthogonally to, such a manifold $\M$. We call the problem of estimating $f$ {\em continuous index learning (CIL)}, analogous to the well-studied problem of {\em multi-index learning (MIL)}, which assumes that $\M$ is a linear subspace.




Given observations $(x_i,y_i)$, $i\in [n]$, multi-index learning is the task of estimating a function $f(x) := \mathbb{E}[Y|X=x]$ that only depends on
$x$ through composition with a low-rank matrix $V$, i.e. $f(x) = g(Vx).$ In
CIL, we allow $V_x$ to be a continuously varying affine map, i.e. $f(x)
=g(V_x x)$, and $\nabla V_x \eta = 0$ for all $\eta$ such that $V_x(x+\eta) = V_x(x)$. Due to the SNMH, $f(x) = g( \pi (x))$ where $\pi(x) :=
\operatorname{argmin}_{q \in \mathcal{M}} \|q-x\|$ denotes the nearest point
projection onto $\mathcal{M}$. For such data we call the space normal to $\M$, denoted $U_{\pi(x)}$, the {\em uninformative subspace}, as $f$ does not vary along these directions. We refer to the
tangent directions, denoted $V_x = U_{\pi(x)}^\perp$, as the {\em
  informative subspace}.
\begin{figure}[tb]
    \centering
    \includegraphics[width=0.5\linewidth]{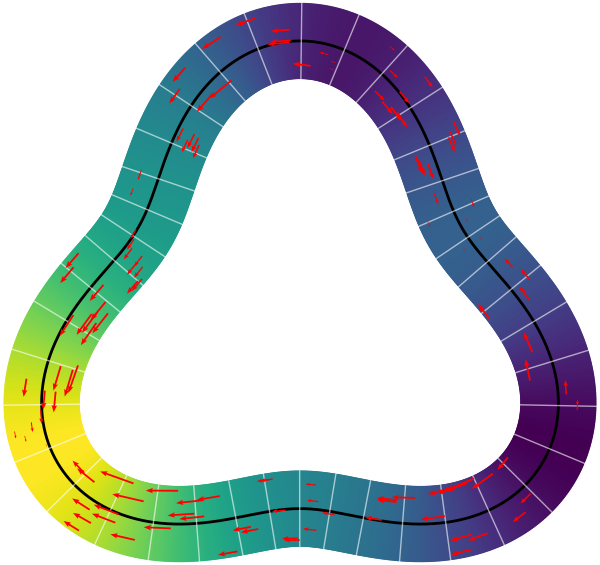}
    \caption{Supervised noisy manifold data in $\mathbb{R}^2$. In a neighborhood about a closed curve  we overlay a heatmap of a function invariant to normal displacement. The curve is displayed in black, the normal spaces are highlighted in white, and red gradients are displayed at randomly sampled points, with lengths proportionate to the gradient magnitude.}
    \label{fig:noisy manifold}
\end{figure}

We approach estimating $f(x)$ via kernel smoothing, letting
\begin{equation} \label{eq:fhat}
\hat{f}(x) = \sum_i w_i Y_i,\;\text{with }w_i \propto k_h(x,X_i)
\end{equation}
for a kernel $k$ and bandwidth $h$. For convenience, we use a Gaussian kernel for $k$, although modifications for the general setting are presented in Appendix \ref{app: general kernels}.  

We propose that the $k_h$ be {\em anisotropic}, emulating the local
multi-index structure of $f$ through the Mahalanobis metrization
$k_h(x,X_i)=k(\|M^{1/2}(x - X_i)\|/\sqrt{2h})$.
Specifically, if $M$
can be selected to degenerate along the normal $U_{\pi(x)}$, then the
weights $w_i$ will decay primarily along the informative tangent
direction $V_x$, leading to estimation rates scaling with the
dimension $d$ of the intrinsic manifold $\M$.

We then develop a
recursive procedure we call \textit{Local EGOP Learning}, which
iteratively refines the metric $M$ by pooling coarse differential
information estimated from the observed data around a target point $x^*,$ without a priori knowledge of the manifold $\mathcal{M}$.

Our method draws inspiration from recent works such as \citet{radhakrishnan2022mechanism, beaglehole2023mechanism,radhakrishnan2025linear, zhu2025iteratively}; more background can be found in Section \ref{sec:related}.
Central in this line of research is the expected gradient outer product (EGOP), $\mathcal{L}(\mu):= \int \nabla f \nabla f^T\, d\mu$. Its empirical counterpart is the average gradient outer product (AGOP), $\hat{\mathcal{L}}(P_n) := \mathcal{L}_{ \hat f}(P_n):= \frac{1}{n} \sum_{i=1}^n  \nabla \hat f(x_i)  \nabla \hat f^T(x_i)$, with $P_n$ the empirical distribution, $\hat f$ an estimator of $f$. 
In this paper, we focus on the problem of point estimation, where $f$
is estimated at a specified query point $x^*$.
Our key insight is to iteratively
\textit{localize} the EGOP (or AGOP) around the point of interest $x^*$. At a given iteration $t$, we set $w_i^{(t)} \propto k(\|\hat{M}_t^{1/2}(x^*-X_i)\|/ \sqrt{2 h_{t+1}})$, use $\hat{f}$ given by \eqref{eq:fhat}, and compute
\begin{align*}
  \hat{\mathcal{L}}_t := \hat{\mathcal{L}} \left(\sum_{i=1}^n w_i^{(t)} \delta_{x_i}\right)\;=
 \sum_{i=1}^n w_i^{(t)}  \nabla \hat f(x_i) \nabla \hat f(x_i)^T.
\end{align*}

That is, rather than averaging
the gradient outer products over the full dataset, we \textit{weigh}
them with respect to the same kernel we utilize for regression.
In the subsequent iteration, we update the metric $\hat{M}_{t+1}$ using $\hat{\mathcal{L}}_t$, thus steering the kernel with this differential information.
 This recursion is at the core of Algorithm \ref{alg:LEGOP} described in Section \ref{sec:algo},

In Section \ref{sec:theory}, we analyze this recurrence through its continuum counterparts, $\mu_{t+1} := N(x^*, h_{t+1} M_t^{-1})$, $M_t := \beta \mathcal{L}(\mu_t) + (1-\beta) \mathcal{L}(\mu_{t-1})$. In Section \ref{sec:expe}, we numerically validate these results, and compare the performance of our proposed estimator to various neural network architectures, while Section \ref{sec:discussion} indicates avenues for future work.
Our main contributions are:
\begin{enumerate}\setlength{\partopsep}{0em}\setlength{\parskip}{0em}
\item Introducing the setting of continuous index learning, generalizing multi-index learning to latent manifold structures.
\item Developing the Local EGOP Learning Algorithm and the localization of the EGOP. 
\item Analyzing the recurrence and the resulting intrinsic dimensional learning rate of Local EGOP Learning under the SNMH (Sections \ref{sec: intuition} and \ref{sec: guarantees}).
\item Establishing a novel
  approach for the study of recursive kernel learning such as
  \citet{radhakrishnan2022mechanism, radhakrishnan2025linear,
    zhu2025iteratively}.
\item  Clarifying the role of the EGOP in the minimization of the mean squared error via a new functional framework.
\end{enumerate}

\begin{figure}[tb]
    \centering
    \begin{subfigure}[b]{0.65\linewidth}
    \centering
    \includegraphics[width=\linewidth]{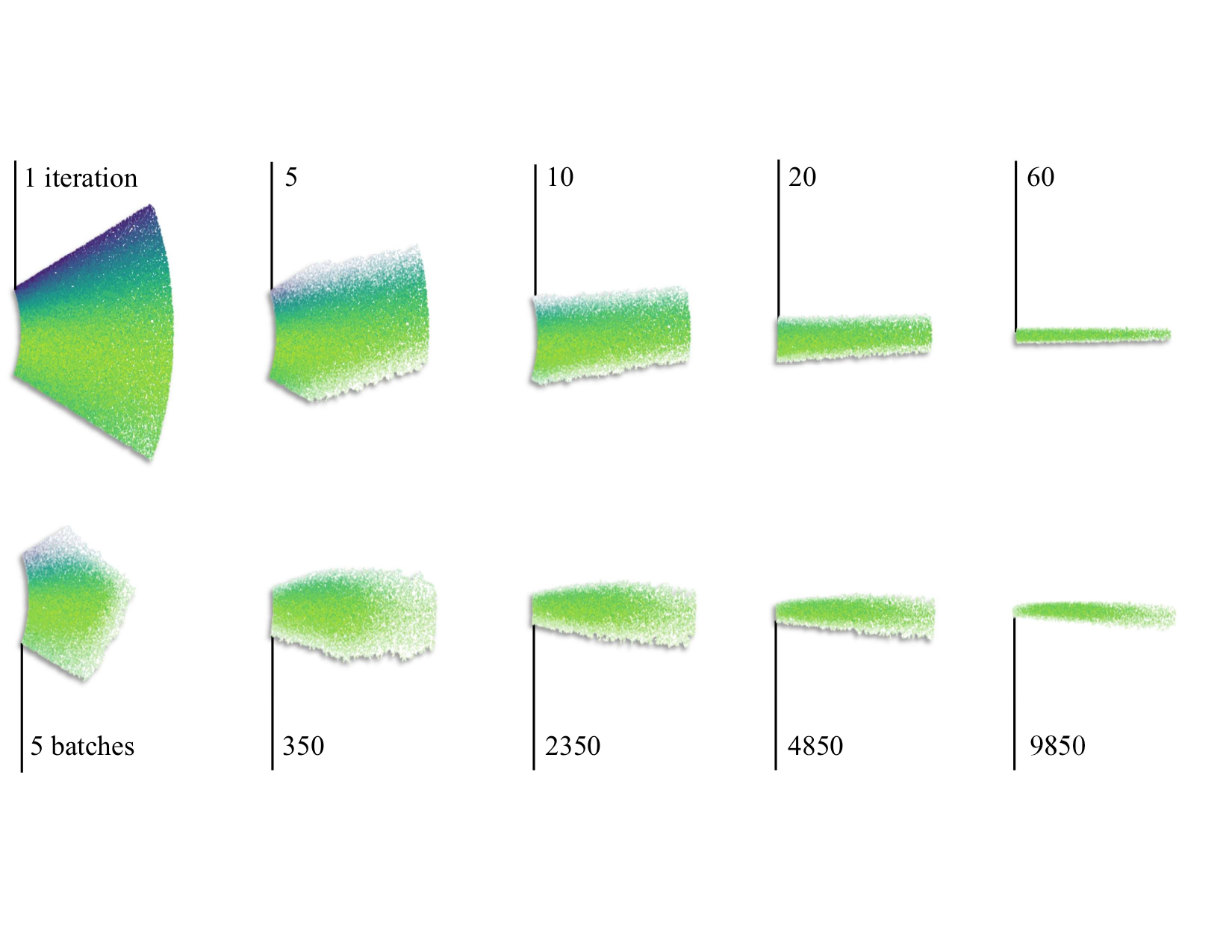}
    \subcaption{Localizations $\mu_i$ from Local EGOP Learning (top) and a deep neural network feature embedding (bottom).}
    \label{fig:agop_descent}
\end{subfigure}
\hfill
\begin{subfigure}[b]{0.32\linewidth}
    \centering
    \includegraphics[angle=90,origin=c,width=\linewidth]{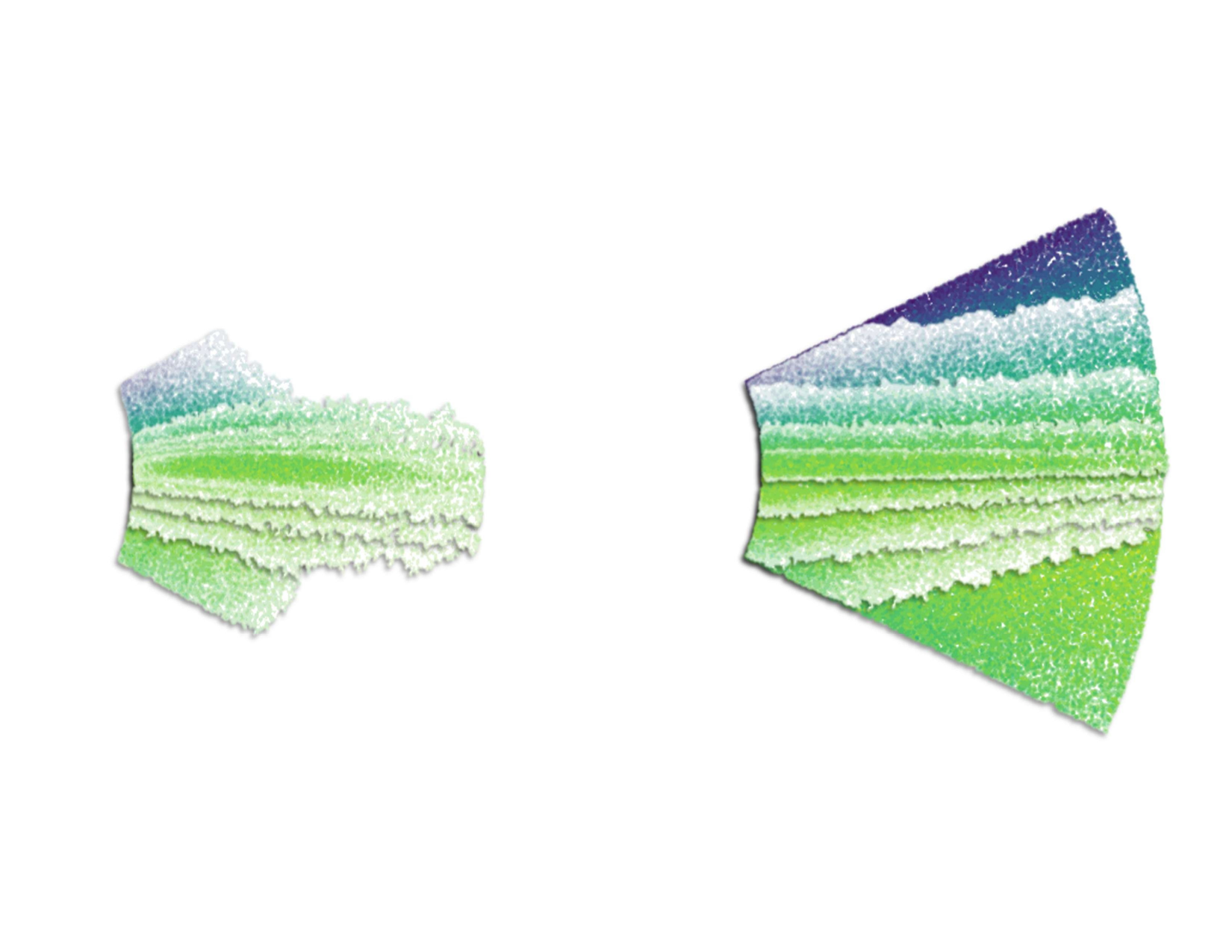}
    \subcaption{Layered EGOP (top) and neural network (bottom) localizations.}
    \label{fig:deep_feet}
\end{subfigure}
    \caption{Local EGOP Learning and a deep transformer architecture applied to data sampled from an annulus. Labels are generated with no dependence on the radius in the parameterization, with these values overlaid as a heatmap. About the point $x^* = (1,0)$, points are displayed with opacity $w \propto \exp(-d(x^*,x_i)^2)$, where $d(\cdot, \cdot)$ is the AGOP Mahalanobis distance (top) and transformer feature embedding (bottom). Both regions are displayed after progressively many iterations/training batches.}
    \label{fig:agop_combined}
\end{figure}

\section{Background and Related Work}
\label{sec:related}
 Kernel methods are a powerful mechanism for studying machine learning algorithms. Many algorithms have leveraged the corresponding RKHS structure for efficient estimation of sufficiently regular functions \citep{wainwright2019high}, and functionals \citep{rao2014nonparametric}. 
    Further, many popular learning algorithms, such as certain neural network architectures and random forests, have been shown to asymptotically correspond to carefully chosen kernels \citep{jacot2018neural, scornet2016random}.
    Thus, the process by which investigators select kernels tailored to problems of interest, known as kernel engineering, is of central importance \citep{belkin2018understand}.
    Beyond allowing for potential efficiency gains, kernel engineering closely emulates the feature learning properties of deep neural networks.

Some objectives of kernel engineering include the design of specialized kernels suited to particular data structures (\citet{odone2005building, kondor2003kernel, joachims1998text, chapelle1999support, barla2002hausdorff, vishwanathan2010graph}, etc.), statistical principles (\citet{genton2001classes, scholkopf1997prior, osborne2010bayesian}, etc.),  and problems of interest (\citet{gong2024supervised, kokot2025coreset}, etc.). In regression settings, a classical approach is local feature learning, in which kernels are augmented by differential information at points of interest (\citet{schmid1997local, lowe1999object, wallraven2003recognition}). Earlier nonparametric methods developed a similar framework, with datasets being recursively partitioned to improve the quality of local fits \citep{heise1971multivariate, breiman1976general, friedman2006tree}.

A modern incarnation of kernel engineering is {\em multi-index learning}, particularly in the case of neural networks (\citet{mousavi2022neural, boix2023transformers, damian2023smoothing}, etc.). This literature aims to show that the desirable properties of kernel engineering, such as data adaptivity and dimension reduction, are captured implicitly by certain machine learning models. Much work has been done in the {\em single-index} case, where the goal is to regress labels $Y$ on features $X$ when $f$ depends on $x$ solely through its evaluation in a fixed direction $v,$ via the quantity $v^T x$. It has been shown that two-layer neural networks not only learn this dependence, but do so efficiently, leading to rapid increases in prediction quality (\citet{bietti2022learning, abbe2024mergedstaircasepropertynecessarynearly, lee2024neural}, etc.). Recent results indicate that these same benefits carry over to the multi-index setting \citep{damian2023smoothing, arnaboldi2024repetita}.

The SNMH feature structure is typical in manifold learning \citep{aamari2018nonasymptoticratesmanifoldtangent, GenovesePVW12, kokotnoisy}. It also appears under a different name (the ``nonlinear single-variable model'') in the recent work \cite{wu2024conditional}, where the analysis is specialized to a 1-dimensional intrinsic manifold. 
They relate this problem to the more general settings of compositional learning (\citet{Juditsky2009, shen2021deep}, etc.) and non-linear dimension reduction (\citet{yeh2008nonlinear, lee2013general}, etc.). In this literature, many recent methods propose similarly motivated optimization schemes that target related Poincar\'{e} inequalities \citep{bigoni2022nonlinear, verdiere2025diffeomorphism, nouy2025surrogate}.

    Our method bears a particularly strong resemblance to ``kernel steering'' as proposed in the image processing literature \citep{takeda2007kernel,takeda2008deblurring}. 
    The EGOP  has been motivated in many statistical applications \citep{samarov1993exploring, hristache2001structure, xia2002adaptive, trivedi2014consistent, yuan2023efficient}. A precursor in a supervised {\em noiseless} manifold setting is \citet{aswani2011regression}; in this paper, intrinsic rates are achieved with an {\em isotropic kernel} and sparse regression, where the regression coefficients are penalized differently in the normal and tangent space to $\M$. Unlike our work and much of the literature, the method is extended to the regime where the number of regressors tends to infinity. The methods are not adaptive, an outer procedure must estimate the local dimension $d$. 
    
    Besides kernel steering, EGOP metrizations have appeared in many different literatures, with common goals being to accelerate learning tasks and learn efficient dimension reductions \citep{trivedi2014consistent, wu2010learning, mukherjee2010learning}.

\section{The Local EGOP Algorithm}\label{sec:algo}
In this section, we introduce the Local EGOP Learning algorithm, detailed in Algorithm \ref{alg:LEGOP}.

At iteration $t$, the algorithm uses the Gaussian kernel with Mahalanobis metric $M_t$ (step 3) to estimate the local density $\mu_{t}$ about $x^*$. Using this weighting, $\hat{f}$ and its gradients are computed  (steps 5--7), which in turn are used to calculate the AGOP, $L_t$. This is then incorporated into the new metric $M_{t+1} = [\beta L_t + (1-\beta) L_{t-1}]$, $\beta\in (0,1)$.
\begin{algorithm}[tb]
   \caption{Local EGOP Learning}
   \label{alg:LEGOP}
\begin{algorithmic}[1] 
   \REQUIRE Data $\{(X_j,Y_j)\}_{i=1}^n$, target $x^*$, subsample size $m$, iterations $T$, bandwidths $\{h_t\}$, $\beta>0$, scale $\alpha>0$.
   \ENSURE Best estimate $\hat{f}(x^*)$.
   \HYPER $M_0 \gets I/\alpha$.
   
   \vspace{0.1cm} 
   \STATE $\text{bestMSE}\gets \infty$, \quad $\hat{f}_{\text{best}}(x)\gets \varnothing$

   \vspace{0.1cm}
   \FOR{$t=0$ {\bfseries to} $T-1$}
       \AlgComment{Compute weights at $x^*$}
       \STATE \(w_i \propto \exp\!\bigl(-(x_i-x^*)^\top M_t (x_i-x^*)\bigr)\)
       $,\ i=1,\dots,n$
       \STATE Normalize $w$, subsample $S\subset\{1,\dots,n\}, |S|=m$ with probs $w$
       
       \AlgComment{Leave-One-Out Local regressions at each $i\in S$}
       \FOR{$i\in S$}
           \STATE Fit weighted local linear regression centered at $x_i$ (weights from metric $M_t$), excluding $(x_i, y_i)$
           \STATE Obtain $\nabla \hat f[i]$ and $\hat{f}[i]$
       \ENDFOR
       
       \AlgComment{Aggregate statistics}
       \STATE $L_t \gets \sum_{i\in S} w_i\, \nabla \hat f[i]\nabla \hat f[i]^\top$
       \STATE $\widehat{\mathrm{MSE}}_t \gets \sum_{i\in S} w_i\,(y_i-\hat{f}[i])^2$

       \vspace{0.1cm}
       \IF{$\widehat{\mathrm{MSE}}_t<\text{bestMSE}$}
           \STATE Estimate $\hat{f}(x^*)$ via local regression centered at $x^*$
           \STATE $\text{bestMSE}\gets\widehat{\mathrm{MSE}}_t$, \quad $\hat{f}_{\text{best}}(x^*)\gets \hat{f}(x^*)$
       \ENDIF

       \AlgComment{Metric update: set $\beta=1$ if $i=0$, $\beta<1$ otherwise}
       \STATE $M_{t+1}\gets\bigl(\beta L_t+(1-\beta)L_{t-1}\bigr)$
       \STATE $M_{t+1}\gets M_{t+1}/[t_{t+1}\operatorname{tr}(M_{t+1})]$
   \ENDFOR

   \vspace{0.1cm}
   \STATE \textbf{return} $\hat{f}_{\text{best}}(x)$
\end{algorithmic}
\end{algorithm}
 In practice, one might choose hyper-parameters $\beta,T$, as well as the schedule $\{h_t\}$ via cross-validation. For computational reasons, we normalize $M_t$ by its trace (step 20) when applying the scaling factor $h_{t+1}$. This has little impact asymptotically, as the trace converges to the leading gradient eigenvalue of the EGOP, making this normalization effectively equivalent to a constant scaling. However, it serves a practical benefit by stabilizing initial iterates, setting the scale of the metric to the predefined rate $1/h_t$. Note that while the optimal number of iterations is dimension-dependent, as argued in Theorems \ref{thm: Generic Rate} and \ref{thm: manifold rate}, our algorithm incorporates a Leave-One-Out style validation loop (step 10) for adaptive stopping.
Further theoretical motivation for the algorithm is presented in Section \ref{sec: intuition}.

The time complexity of Algorithm \ref{alg:LEGOP} is $O(T((m+2)nD^2+(m+1)D^3))$, with memory overhead $O(nD^2+m^2D^2+mD^3)$.

\section{Theoretical analysis}    
\label{sec:theory}
We now study Algorithm \ref{alg:LEGOP} through its continuum $t$ counterparts, described by  $\mu_{t+1} := N(x^*, h_{t+1} M_t^{-1})$ and  $M_t := \beta \mathcal{L}(\mu_t) + (1-\beta) \mathcal{L}(\mu_{t-1})$. 
Setting $\Sigma_t := \Sigma(\mu_t) := \int (x-x^*)(x-x^*)^T\, d\mu_t$, this reduces to the matrix recurrence \begin{align}\label{eq: oracle egop}
\Sigma_{t+1} = h_{t+1} [\beta \mathcal{L}(\mu_t) + (1-\beta)\mathcal{L}(\mu_{t-1})]^{-1}.
\end{align}
In Section \ref{sec:egop-biasvar} we propose a framework for the analysis of the resulting estimator.
We introduce the EGOP form $W(\mu_t) := \operatorname{tr}(\mathcal{L}(\mu_t) \Sigma(\mu_t))$, relating it to the bias of the $M_t$ metrized kernel smoother via a Poincar\'{e} inequality. Building on this, in Section \ref{sec: intuition}, we demonstrate how this recursive kernel adaptation can be naturally framed as a greedy MSE reduction strategy. In Section \ref{sec: guarantees}, our analysis yields an intrinsic learning rate for supervised noisy manifold data, and a significant efficiency gain otherwise.
\subsection{Assumptions and Notation}
    We refer to the population distribution of our feature vectors $X \in \mathbb{R}^D$ as $P$, and its empirical distribution as $P_n$. We assume $P$ has bounded $C^1$ density with respect to the Lebesgue measure on $\mathbb{R}^D$. When invoking the noisy manifold hypothesis, we denote by $\mathcal{M}$ the smooth base manifold of dimension $d$, by $\pi$ the nearest point projection onto the manifold, and by $\mathcal{T}_x$ and $\mathcal{N}_x$ the tangent and normal spaces $U_{\pi(x)}$ and $V_{x}$, respectively, at point $p:=\pi(x)\in \mathcal{M}$. For $\pi$ to be well-defined, we further assume that $X$ lies within the reach of $\mathcal{M}$ \cite{federer1959curvature} almost surely. The labels $Y$ are assumed to be of the form $Y_i = f(X_i) + \varepsilon_i$ for $\varepsilon_i$ iid, mean zero, and finite 4th moment. We assume that $f \in C^4$ and is bounded with uniformly bounded derivatives. We reserve $\mu$ to refer to a localization or target distribution of data appropriately concentrated about the point of interest $x^*$. We denote by $\Sigma$ the covariance of $\mu$, and we enforce $\|\Sigma\| \leq \zeta < \infty$. We denote by $\langle \cdot , \cdot \rangle_F$ the Frobenius inner product.
    We assume $g = \nabla f(x^*) \neq 0$.
    We denote by $k$ a second-order kernel that satisfies typical assumptions in the nonparamtric statistics literature, as in, for example, \cite{Tsybakov2009}[Chapter 1].

    \subsection{Bias Control via EGOP}
    \label{sec:egop-biasvar}
    In this section, we develop a theoretical framework for adaptive kernel learning. Define the empirical kernel estimator for predicting the value of $f$ at $x^*$ by
    \begin{align*}
    \hat{P}_{M_t} (f) &:= \frac{1}{\hat{C}_{M_t}}\frac{1}{n}\sum_{i=1}^n k(\|M_t^{1/2}[X_i-x^*]\|/\sqrt{2h_{t+1}}) Y_i,\\
    \hat{C}_{M_t} &:= \frac{1}{n}\sum_{i=1}^n k(\|M_t^{1/2}[X_i - x^*]\|/\sqrt{2h_{t+1}}).
    \end{align*}
    Its continuous counterpart is 
    \begin{align*}\label{eq:P_M-cont}
    P_{M_t} (f) &:= \frac{1}{C_{M_t}}\int k(\|M_t^{1/2}[y-x^*]\|/\sqrt{2h_{t+1}}) f(y)\, dP(y),\\
    C_{M_t} &:= \int k(\|M_t^{1/2}[y-x^*]\|/\sqrt{2h_{t+1}})\, dP(y).
    \end{align*}
    The bandwidth $h$ is suppressed in our notation as it will be pre-set at each iteration of our algorithm.
    To assess the quality of this predictor, we can measure its bias $\int (P_{M_t}(f) - f)^2\, d\mu$ on a target distribution $\mu$.
    When $M$ and $\mu$ are chosen in tandem, the bias can be related to a Dirichlet form in terms of the EGOP. To make this explicit, we introduce the EGOP form
    \begin{equation}
      W(\mu) := \int \nabla f^T \Sigma(\mu) \nabla f\, d\mu
      =\operatorname{tr}(\mathcal{L}(\mu) \Sigma(\mu)).
    \end{equation}
    \begin{lemma}[Poincar\'{e} Inequality]\label{lem: Poincare}
    Let $M_t\succeq 0$, and set $\mu_{t+1} = N(0, h_{t+1} M_t^{-1})$. Then,
    \[
    \int (P_{M_t}(f) - f)^2\, d\mu_{t+1} = O(W(\mu_{t+1})).
    \]
    \end{lemma}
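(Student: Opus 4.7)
The plan is to split the $L^2(\mu_t)$-bias into a Gaussian-variance piece plus a ``tilt discrepancy'' between $P_{M_t}(f)$ and $\mathbb{E}_{\mu_t}(f)$, and to control each piece via the Brascamp--Lieb (Gaussian Poincar\'{e}) inequality for $\mu_t = N(x^*, M_t^{-1})$. The starting Pythagorean identity is $\int (c - f)^2\, d\mu_t = (c - \mathbb{E}_{\mu_t} f)^2 + \operatorname{Var}_{\mu_t}(f)$ evaluated at the constant $c = P_{M_t}(f)$, which reduces the lemma to showing both summands are $O(W(\mu_t))$. The variance term is immediate: since $\mu_t$ is Gaussian with covariance $M_t^{-1}$, Gaussian Poincar\'{e} yields $\operatorname{Var}_{\mu_t}(f) \leq \int \nabla f^T M_t^{-1} \nabla f\, d\mu_t = W(\mu_t)$.

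For the mean-shift term, the key observation is that with $k(\|u\|)\propto e^{-\|u\|^2}$ the factor $/\sqrt{2}$ in the definition of $P_M$ is chosen precisely so that $k(\|M_t^{1/2}(y-x^*)\|/\sqrt{2})$ is proportional to the $\mu_t$-density itself. Writing $p = dP/dy$, I would use this identification to rewrite $P_{M_t}(f) = \mathbb{E}_{\mu_t}[fp]/\mathbb{E}_{\mu_t}[p]$, from which a short computation yields the exact identity
\[
P_{M_t}(f) - \mathbb{E}_{\mu_t}(f) \;=\; \frac{\operatorname{Cov}_{\mu_t}(f, p)}{\mathbb{E}_{\mu_t}[p]}.
\]
Applying Cauchy--Schwarz to the numerator, together with a second application of Gaussian Poincar\'{e} (now to $p$), bounds its square by $W(\mu_t)\cdot \|\nabla p\|_\infty^2 \operatorname{tr}(M_t^{-1})$, with the last two factors finite by the bounded $C^1$-density assumption and the covariance cap $\|\Sigma\|\leq \zeta$.

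The main obstacle is controlling the denominator $\mathbb{E}_{\mu_t}[p]$ away from zero. I would handle this by continuity of $p$ at $x^*$ together with the uniform covariance cap, which forces $\mathbb{E}_{\mu_t}[p]\geq p(x^*)/2$ on any sufficiently concentrated localization---exactly the regime in which $P_{M_t}(f)$ is a meaningful estimate of $f(x^*)$ in the first place. Combining these three ingredients gives $(P_{M_t}(f)-\mathbb{E}_{\mu_t}f)^2 = O(W(\mu_t))$, which summed with the variance bound completes the lemma. The Gaussian form of $k$ is used only to equate the kernel weight with the $\mu_t$-density; for a general second-order kernel one would instead Taylor-expand $P_{M_t}(f)$ about $x^*$ to quadratic order and match terms against a Poincar\'{e}-style estimate, which is the presumed route in Appendix \ref{app: general kernels}.
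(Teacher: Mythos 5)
Your proposal is correct and follows essentially the same route as the paper: the same decomposition $P_{M_t}(f)=\mathbb{E}_{\mu_t}[f]+\operatorname{Cov}_{\mu_t}(f,p)/\mathbb{E}_{\mu_t}[p]$ (the paper's Basic Decomposition lemma), Gaussian Poincar\'{e} for the variance term, and Cauchy--Schwarz plus Poincar\'{e} for the covariance term, with the denominator controlled by continuity of $p$ and the covariance cap. The only cosmetic difference is that you make the bound on $\operatorname{Var}_{\mu_t}(p)$ explicit via $\|\nabla p\|_\infty^2\operatorname{tr}(M_t^{-1})$, where the paper simply asserts that the factor $\operatorname{Var}_{M}(p)/\mathbb{E}_M[p]^2$ is bounded.
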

    In the anisotropic metric $M_t$, the variation of $f$ is given by $\mathcal{L}(\mu_{t+1})$, and the variance of $x$ is given by $\Sigma(\mu_{t+1})$. 
    If these matrices are close to low-rank and perpendicular, then the bias of the kernel estimator is controlled by $W(\mu_{t+1}) = \langle\mathcal{L}(\mu_{t+1}), \Sigma(\mu_{t+1})\rangle_F$, and will be nearly $0$. 
    Of critical importance are the first and second moments of the target distribution $\mu$, motivating  our Gaussian convention, although Gaussianity is not essential to achieve the desired bound. We note that this general structure, of metrizing by the inverse covariance, appears in disparate literatures, from classical local feature estimators \citep{schmid1997local}, to VAEs \citep{chadebec2022geometricperspectivevariationalautoencoders}, and Langevin preconditioning \citep{cui2024optimal}.

    We now seek to control the variance of the finite sample estimator $\hat{P}_M$ of $f$ at $x^*$.
\begin{lemma}[Variance]\label{lem: kernel variance}
    Let $M_t\succeq 0$, $\Sigma_{t+1} = h_{t+1} M_t^{-1}$. Then
    $\mathbb{E}\left[(\hat{P}_{M_t}(f) - P_{M_t}(f))^2 \right] = O\left(1/(n \sqrt{\det \Sigma_{t+1}})\right).$
    \end{lemma}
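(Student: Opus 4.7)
The plan is to carry out the standard Nadaraya--Watson variance analysis, adapted to the Mahalanobis kernel, and to show that all of the $\det \Sigma_t$ factors arrange themselves correctly. Write $K_i := k(\|M_t^{1/2}(X_i - x^*)\|/\sqrt{2})$. I would begin from the identity
\[
\hat{P}_{M_t}(f) - P_{M_t}(f) \;=\; \frac{\frac{1}{n}\sum_{i=1}^{n} K_i\bigl(Y_i - P_{M_t}(f)\bigr)}{\hat{C}_{M_t}},
\]
in which the numerator is a mean-zero i.i.d.\ average (since $\mathbb{E}[K_1 Y_1] = P_{M_t}(f)\,\mathbb{E}[K_1]$), so its variance is at most $\mathbb{E}[K_1^2 Y_1^2]/n$.

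The core calculation is to evaluate the scaling of the relevant moments of $K_1$. For the Gaussian kernel, $K_1^2 = \exp\bigl(-(X_1 - x^*)^{T} M_t (X_1 - x^*)\bigr)$ is proportional to the density of $N(x^*, \Sigma_t/2)$, so integrating against the bounded density $p$ of $P$ yields $\mathbb{E}[K_1^2] \lesssim \|p\|_\infty\, \pi^{D/2}\sqrt{\det \Sigma_t}$. Similarly $K_1$ itself is proportional to the density of $N(x^*, \Sigma_t)$, and the continuity of $p$ together with the assumption $p(x^*) > 0$ yields the matching lower bound $C_{M_t} = \mathbb{E}[K_1] \gtrsim p(x^*)\,(2\pi)^{D/2}\sqrt{\det \Sigma_t}$. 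Because $f$ is bounded and $\varepsilon$ has finite variance, $\mathbb{E}[K_1^2 Y_1^2] \lesssim \mathbb{E}[K_1^2]$, so the numerator's variance is $O(\sqrt{\det \Sigma_t}/n)$. Heuristically, dividing by $C_{M_t}^2 = \Theta(\det \Sigma_t)$ produces the advertised $O(1/(n\sqrt{\det \Sigma_t}))$ rate.

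The main technical obstacle is making the division by the random $\hat{C}_{M_t}$ rigorous, since it could in principle be close to zero and we want control in expectation, not merely with high probability. I would resolve this via a two-regime argument. On the event $E := \{\hat{C}_{M_t} \ge C_{M_t}/2\}$, the bound $(\hat{P}_{M_t} - P_{M_t})^2 \le 4(\text{numerator})^2/C_{M_t}^2$ combined with the variance calculation above immediately gives the claimed rate. On $E^c$, a Chebyshev bound applied to $\hat{C}_{M_t}$---whose variance is also $O(\sqrt{\det \Sigma_t}/n)$ by the same Gaussian integral computation---yields $\mathbb{P}(E^c) = O(1/(n\sqrt{\det \Sigma_t}))$, and the boundedness of $f$ together with the finite fourth moment of $\varepsilon$ controls $\mathbb{E}[(\hat{P}_{M_t} - P_{M_t})^2 \mathbf{1}_{E^c}]$ via Cauchy--Schwarz, keeping the contribution from $E^c$ subdominant to the main term. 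Combining the two regimes yields the stated bound.
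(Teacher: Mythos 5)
Your overall architecture matches the paper's: the same starting identity, the same Gaussian moment computations giving $\mathbb{E}[K_1^2]=O(\sqrt{\det\Sigma_t})$ and $C_{M_t}=\Theta(\sqrt{\det\Sigma_t})$, and the same event-splitting on the denominator. The main regime (on $E$) is handled correctly and gives the stated rate. However, there is a genuine gap in the $E^c$ regime. Since $\hat{P}_{M_t}(f)-P_{M_t}(f)$ is not bounded (the $\varepsilon_i$ have only four moments), Cauchy--Schwarz gives
\[
\mathbb{E}\bigl[(\hat{P}_{M_t}(f)-P_{M_t}(f))^2\mathbf{1}_{E^c}\bigr]\;\le\;\sqrt{\mathbb{E}\bigl[(\hat{P}_{M_t}(f)-P_{M_t}(f))^4\bigr]}\;\sqrt{\mathbb{P}(E^c)},
\]
i.e.\ you pay $\sqrt{\mathbb{P}(E^c)}$, not $\mathbb{P}(E^c)$. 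With your Chebyshev bound $\mathbb{P}(E^c)=O\bigl(1/(n\sqrt{\det\Sigma_t})\bigr)$, this contributes $O\bigl((n\sqrt{\det\Sigma_t})^{-1/2}\bigr)$, which \emph{dominates} the target rate $O\bigl((n\sqrt{\det\Sigma_t})^{-1}\bigr)$ rather than being subdominant as you claim. Second-moment concentration of the denominator is simply not strong enough here.

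The fix is to sharpen the denominator concentration. The paper rescales to $\gamma_{M_t}K_i\in(0,\gamma_{M_t}]$ (with $\gamma_{M_t}\propto\sqrt{\det M_t}$) and applies Bernstein's inequality to these bounded variables, obtaining $\mathbb{P}(E^c)\le 2\exp\bigl(-c\,n\sqrt{\det\Sigma_t}\bigr)$; the square root of an exponentially small quantity remains negligible, closing the argument. Alternatively, staying within your moment-based framework, a fourth-moment Markov bound on $\hat{C}_{M_t}-C_{M_t}$ yields $\mathbb{P}(E^c)=O\bigl((n\sqrt{\det\Sigma_t})^{-2}\bigr)$, whose square root matches the main term. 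Either repair makes your proof go through; as written, the final step does not.
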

    Combining these bounds yields our fundamental error rate.
    \begin{theorem}[Local MSE]\label{thm: Generic Rate}
         Let $M_t\succeq 0$, and set $\Sigma_{t+1} = h_{t+1} M_t^{-1}$, $\mu_{t+1} = N(x^*, \Sigma_{t+1})$. Then,
         {\small
         \[
         \mathbb{E} \left[ \int (\hat{P}_{M_t}(f) - f)^2\, d\mu_{t+1} \right] = O\left(\frac{1}{n \sqrt{\det \Sigma_{t+1}}} + W(\mu_{t+1})\right).
         \]
         }
         In particular, if $W(\mu_t) = O(h_t)$ and $\det \Sigma_t = O(h_t^q)$ then the asymptotic learning rate is
         \[
          \mathbb{E} \left[ \int (\hat{P}_{M_t}(f) - f)^2\, d\mu_{t+1} \right] = O \left(n^{-\frac{1}{1+q/2}} \right).
         \]
    \end{theorem}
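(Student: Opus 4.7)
The plan is to pivot through the deterministic population quantity $P_{M_t}(f)$ and split the pointwise squared error into a stochastic fluctuation piece plus a bias piece. Specifically, I would apply the elementary inequality $(a+b)^2 \leq 2a^2 + 2b^2$ to write, for each $y$,
\[
(\hat{P}_{M_t}(f) - f(y))^2 \leq 2(\hat{P}_{M_t}(f) - P_{M_t}(f))^2 + 2(P_{M_t}(f) - f(y))^2.
\]
Integrating both sides against $\mu_t$ and then taking expectation over the sample isolates two terms that have already been controlled in the preceding lemmas.

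The first term simplifies because $\hat{P}_{M_t}(f) - P_{M_t}(f)$ is a single (sample-dependent) scalar that does not depend on the integration variable $y$, so integration against the probability measure $\mu_t$ is trivial and leaves $(\hat{P}_{M_t}(f) - P_{M_t}(f))^2$. Lemma~\ref{lem: kernel variance} then bounds its expectation by $O(1/(n\sqrt{\det\Sigma_t}))$. The second term is exactly the object bounded by Lemma~\ref{lem: Poincare}, giving $\int (P_{M_t}(f) - f(y))^2\, d\mu_t(y) = O(W(\mu_t))$. Adding these two contributions yields the first claim of the theorem.

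For the asymptotic rate, I would substitute the scaling hypotheses $\sqrt{\det\Sigma_t} = \Theta(t^{q/2})$ and $W(\mu_t) = \Theta(t)$ into the bound to obtain an error of order $\frac{1}{n\, t^{q/2}} + t$. Balancing the two terms (equivalently, differentiating in $t$ and setting to zero) requires $t^{1+q/2} \asymp 1/n$, i.e.\ $t \asymp n^{-2/(q+2)}$. With this choice both contributions are of the same order, $n^{-2/(q+2)} = n^{-1/(1+q/2)}$, which is the claimed rate.

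There is no real obstacle here; the analytical content is entirely in Lemmas~\ref{lem: Poincare} and~\ref{lem: kernel variance}, and this theorem is a packaging result. The only non-mechanical observation is that the variance term survives integration against $\mu_t$ unchanged, which is what allows the pointwise variance bound of Lemma~\ref{lem: kernel variance} to be directly slotted into an integrated MSE.
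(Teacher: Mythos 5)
Your proposal is correct and matches the paper's argument: the paper likewise obtains the bound by compiling Lemma~\ref{lem: Poincare} (bias) and Lemma~\ref{lem: kernel variance} (variance) through the same decomposition around $P_{M_t}(f)$, noting as you do that the fluctuation term is constant in the integration variable, and then balances $t^{-q/2}/n$ against $t$ to get the stated rate. No gaps.
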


    \subsection{Recursive Kernel Learning as Variance Minimization}
\label{sec: intuition}
We will now show how recursive kernel learning, combined with our basic kernel bounds, constitutes a greedy variance reduction strategy.
Intuitively, the goal is to induce anisotropy in the kernel, stretching or steering it to include additional low-bias points that would otherwise be discarded with an isotropic metric. The inclusion of these additional high-quality points is exactly a reduction of variance, as including more data results in tighter concentration to the mean for a local averaging estimator. 

We now make precise how one can arrive at this by leveraging Theorem
\ref{thm: Generic Rate}. Ideally, as spelled out in this result, we
would like to select $\mu_t$ such that $W(\mu_t)\to 0$, and that $\det \Sigma(\mu_t)$ is as large as
possible. In other words, if we specify the rate $W(\mu_t) =
\operatorname{tr} [\mathcal{L}(\mu_t) \Sigma(\mu_t)] = h_t \to 0$, it
would be optimal to select the largest possible region that achieves
this bias, meaning we would like to select
    \[
    \mu_t = \operatorname{argmax}_{\operatorname{tr} [\mathcal{L}(\mu_t) \Sigma(\mu_t)] \leq h_t} \log \det \Sigma(\mu_t).
    \]
    
    This however cannot be solved, as we lack precise knowledge of the target function $f$. Moreover, $\operatorname{tr} [\mathcal{L}(\mu_t) \Sigma(\mu_t)]\leq h_t$ presents a challenging feasible set, as it is posed over a space of probability distributions, thus making it non-Euclidean. Our Gaussian ansatz reduces the problem to optimizing a finite parameterization, though the optimization is non-convex. 
    
    This leads us to the following discretized relaxation of the problem. Set $\mu_0 = N(x^*,\alpha I)$, an isotropic initialization. If we compute the initial $\hat{\mathcal{L}}(\mu_0)$, we may hope that, as long as $\mu_1$ represents a sufficiently comparable distribution, $\hat{\mathcal{L}}(\mu_0) \approx \hat{\mathcal{L}}(\mu_1)$. Thus, we can compute
    \[
    \Sigma_1 = \operatorname{argmax}_{\operatorname{tr}[\hat{\mathcal{L}}(\mu_0) \Sigma]\leq t_1} \log \det \Sigma = h_1 \hat{\mathcal{L}}(\mu_0)^{-1}/D.
    \]
    This yields $\mu_1 = N(x^*, \Sigma_1)$, and the corresponding kernel estimator is metrized by $\Sigma_1^{-1} \propto \hat{\mathcal{L}}(\mu_0)$. Repeating this, we have a recursive kernel learning algorithm, as we summarize below.

    \begin{proposition}[Alternating Minimization]\label{prop: basic alg}
    Let $h_t\to 0$, fix $\Sigma_0$, $\mu_t = N(x^*, \Sigma_t)$, $\hat{\mathcal{L}}_t:= \hat{\mathcal{L}}(\mu_t)$, and construct $\hat f=\hat{P}_{\Sigma_t^{-1/2}}(f)$. Let $\Sigma_t$ satisfy the recursive kernel recurrence $\Sigma_{t+1} = h_{t+1} \hat{\mathcal{L}}_t^{-1}/D$. Then, equivalently, optimizing over $\Sigma \succeq 0$, 
    \begin{align}
    \Sigma_{t+1} &= \operatorname{argmax}_{\operatorname{tr}[\hat{\mathcal{L}}(\mu_t) \Sigma]\leq h_{t+1}} \log \det \Sigma\\
    &= \operatorname{argmin}_{\operatorname{tr}[\hat{\mathcal{L}}(\mu_t) \Sigma] = h_{t+1}} \operatorname{tr}(\hat{\mathcal{L}}(\mu_t) \Sigma) + \frac{1}{n \sqrt{\det \Sigma}}\nonumber.
    \end{align}
    \end{proposition}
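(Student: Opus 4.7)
The plan is to solve the constrained optimization in (1) directly via Lagrangian/KKT conditions, observe that the solution is precisely the closed form $\Sigma_{i+1}=(t_{i+1}/D)\hat{\mathcal{L}}_i^{-1}$, and then reduce (2) to (1) by using the equality constraint to simplify the objective.

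First I would address (1). Because $\log \det$ is strictly monotone on the positive definite cone with respect to the Löwner order, and the feasible set $\{\Sigma \succeq 0 : \operatorname{tr}[\hat{\mathcal{L}}_i \Sigma] \leq t_{i+1}\}$ is convex, any optimum must satisfy the trace constraint with equality (otherwise, if $\hat{\mathcal{L}}_i$ is positive definite, scaling $\Sigma$ up by a factor $> 1$ stays feasible and strictly increases $\log \det \Sigma$; the degenerate case where $\hat{\mathcal{L}}_i$ has nontrivial null space can be handled separately, or ignored under the standing assumption $\nabla f(x^*) \neq 0$ together with a non-degeneracy condition on $\hat{\mathcal{L}}_i$). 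With the equality $\operatorname{tr}[\hat{\mathcal{L}}_i \Sigma] = t_{i+1}$, I form the Lagrangian
\begin{equation*}
\mathcal{L}(\Sigma,\lambda) = \log \det \Sigma - \lambda\bigl(\operatorname{tr}[\hat{\mathcal{L}}_i \Sigma] - t_{i+1}\bigr).
\end{equation*}
Using $\nabla_\Sigma \log\det \Sigma = \Sigma^{-1}$ and $\nabla_\Sigma \operatorname{tr}[\hat{\mathcal{L}}_i\Sigma] = \hat{\mathcal{L}}_i$, the first-order condition gives $\Sigma^{-1} = \lambda \hat{\mathcal{L}}_i$, i.e.\ $\Sigma = \lambda^{-1}\hat{\mathcal{L}}_i^{-1}$. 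Substituting back into the trace constraint yields $\lambda^{-1}\operatorname{tr}[\hat{\mathcal{L}}_i \hat{\mathcal{L}}_i^{-1}] = \lambda^{-1}D = t_{i+1}$, so $\lambda = D/t_{i+1}$ and the unique maximizer is $\Sigma_{i+1} = (t_{i+1}/D)\hat{\mathcal{L}}_i^{-1}$, matching the recursive update. Uniqueness follows from strict concavity of $\log \det$ on the PD cone.

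For (2), I note that under the equality constraint $\operatorname{tr}[\hat{\mathcal{L}}_i \Sigma] = t_{i+1}$ the first summand of the objective becomes the constant $t_{i+1}$, so minimizing the objective over the feasible set is equivalent to minimizing $1/(n\sqrt{\det \Sigma})$, equivalently to maximizing $\log \det \Sigma$ subject to $\operatorname{tr}[\hat{\mathcal{L}}_i \Sigma] = t_{i+1}$ and $\Sigma \succeq 0$. But this is precisely the tightened version of (1) that we already argued is equivalent to (1) itself (the inequality is active at the optimum). Thus (1) and (2) share the same unique optimizer $\Sigma_{i+1} = (t_{i+1}/D)\hat{\mathcal{L}}_i^{-1}$, completing the proof.

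The main obstacle is not the optimization calculation itself, which is standard, but rather the edge case where $\hat{\mathcal{L}}_i$ is rank-deficient: then $\hat{\mathcal{L}}_i^{-1}$ does not exist and the $\operatorname{argmax}$ in (1) is attained only in a limiting sense (one can send eigenvalues of $\Sigma$ along the null space of $\hat{\mathcal{L}}_i$ to infinity while keeping the trace constraint satisfied). I would handle this by either restricting to the subspace $\operatorname{range}(\hat{\mathcal{L}}_i)$, replacing $\hat{\mathcal{L}}_i^{-1}$ with a pseudoinverse, or by invoking an auxiliary non-degeneracy assumption (e.g.\ a small ridge regularization of $\hat{\mathcal{L}}_i$, or the $\|\Sigma\|\leq\zeta$ cap introduced in the assumptions section, which renders the feasible set compact and restores a genuine maximizer). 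The rest of the proof is an exercise in convex duality.
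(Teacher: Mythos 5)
Your proof is correct and follows essentially the same route as the paper's: a Lagrangian/KKT computation giving $\Sigma^{-1}=\lambda\hat{\mathcal{L}}_i$, with $\lambda$ fixed by the trace constraint, plus the observation that under the equality constraint the first summand in (2) is constant so the problem reduces to (1). You are in fact slightly more careful than the paper, which omits the activity-of-the-constraint argument, glosses over the rank-deficient case, and misstates $\log\det$ as ``strictly convex'' where strict concavity on the positive definite cone is what is meant.
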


We note that in Algorithm \ref{alg:LEGOP}, we use local linear regression to better fit into the EGOP estimation loop, and we develop corresponding theory in Appendix \ref{app: numerical}.

    From this framework we have developed, it is clear that there are many possible schemes to optimize the bias-variance trade-off, and we discuss this in more detail in Section \ref{sec:discussion}. We choose to focus on this alternating optimization because it strongly resembles algorithms present in the existing literature, and is simple to implement.

\subsection{Bias-Variance trade-off -- Convergence at intrinsic rate}
\label{sec: guarantees}

    In this section, we demonstrate concrete theoretical guarantees for estimation via Local EGOP Learning. For ease of presentation, in the main text we consider an iterative scheme in which an oracle EGOP is queried for a particular choice of covariance. 
    In Appendix \ref{app: numerical} we develop theory towards EGOP approximation, showing that anisotropic smoothing is compatible with recursive kernel learning under standard nonparametric estimation rates.
    For our experiments, we compute empirical AGOPs fit via the proposed iterative estimation procedure, leading to sharp estimation.  See Appendix \ref{app:proofs} for proofs of all following arguments.

    \subsubsection{Taylor Expansion}
    Up to this point, our prevailing perspective is that the target distributions $\mu$ matter up to their first and second moments, passing the optimization problem from the space of measures to the PSD cone, and facilitating a Gaussian relaxation. We close the loop to EGOP estimation by additionally Taylor expanding $\mathcal{L}(\mu)$ up to second order, reducing the proposed iteration scheme entirely to a recurrence in $\Sigma$.

    \begin{lemma}[Generic Taylor Expansion]\label{lem: Generic Taylor}
        Let $\mu = N(x^*,\Sigma)$, $g = \nabla f(x^*)$, and $H = \nabla^2 f(x^*)$. Then there exist $T:\mathbb{R}^{D\times D} \to \mathbb{R}^D$, $R:\mathbb{R}^{D\times D} \to \mathbb{R}^{D\times D}$, $C_T\geq 0,$ and $ C_R\geq 0$ such that for any $A\succeq 0$, $\|T(A)\|\leq C_T \|A\|,\ \|R(A)\|\leq C_R \|A\|^2$, and
        \[
        \mathcal{L}(\mu) = gg^T + H\Sigma H + gT(\Sigma)^T + T(\Sigma) g^T + R(\Sigma).
        \]
    \end{lemma}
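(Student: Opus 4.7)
The plan is to Taylor-expand $\nabla f$ about $x^*$, substitute into the outer product $\nabla f(x)\nabla f(x)^\top$, and integrate term-by-term against $\mu=N(x^*,\Sigma)$, harvesting Gaussian-moment parity to kill odd-degree cross terms and isolate the leading contributions. Specifically, by Taylor's theorem applied to the vector-valued map $\nabla f$, write
\begin{equation*}
\nabla f(x^*+y) = g + Hy + Q(y) + E(y),
\end{equation*}
where $Q(y):=\tfrac{1}{2}\,\nabla^3 f(x^*)[y,y]$ is a vector whose $k$-th coordinate is $\tfrac{1}{2}\sum_{i,j}\partial_{ijk}f(x^*)\,y_iy_j$, and the remainder obeys the pointwise bound $\|E(y)\|\leq C\|y\|^3$ thanks to the uniform boundedness of the fourth derivatives. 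Expanding the outer product produces sixteen summands to integrate.

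Three facts then do the heavy lifting: (i) $\int y\,d\mu=0$ and $\int yy^\top d\mu = \Sigma$; (ii) any monomial in $y$ of odd total degree has zero Gaussian expectation; and (iii) the moment bound $\mathbb{E}_\mu[\|y\|^k]\lesssim\|\Sigma\|^{k/2}$. By (ii), the odd-degree cross terms $g(Hy)^\top$, $Hy\,g^\top$, $(Hy)Q(y)^\top$ and $Q(y)(Hy)^\top$ all vanish. The pure quadratic term integrates to $H\Sigma H$. The remaining degree-two cross terms yield $\int gQ^\top d\mu = g\,T(\Sigma)^\top$ and its transpose, where I set $T(\Sigma):=\int Q(y)\,d\mu$ so that $T(\Sigma)_k=\tfrac{1}{2}\langle\nabla^3_k f(x^*),\Sigma\rangle_F$; linearity of $T$ in $\Sigma$ together with boundedness of the third derivatives gives $\|T(\Sigma)\|\leq C_T\|\Sigma\|$.

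All leftover summands are then collected into $R(\Sigma)$: the quartic self-product $\int QQ^\top d\mu$, the mixed integrals $\int(Hy)E^\top d\mu$, $\int QE^\top d\mu$ and $\int EE^\top d\mu$ with their transposes, and the pair $\int gE^\top d\mu + \int Eg^\top d\mu$. For all but the last pair, the integrand is pointwise bounded by $C\|y\|^k$ with $k\geq 4$, so (iii) yields $O(\|\Sigma\|^{k/2})\subseteq O(\|\Sigma\|^2)$ under the uniform cap $\|\Sigma\|\leq\zeta$.

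The main obstacle is the pair $\int gE^\top d\mu+\int Eg^\top d\mu$, for which the naive bound $\|E\|\leq C\|y\|^3$ only yields $O(\|\Sigma\|^{3/2})$. To recover $O(\|\Sigma\|^2)$, I would perform one further Taylor order, writing $E(y)=\tfrac{1}{6}\nabla^4 f(x^*)[y,y,y]+E_2(y)$; the leading cubic is odd in $y$ and integrates to zero by Gaussian parity, while the residual $E_2$ is controlled using the regularity of $\nabla^4 f$, splitting the domain of integration as $\{\|y\|<\delta\}\cup\{\|y\|\geq\delta\}$ and combining the resulting modulus-of-continuity bound with Gaussian tail decay and the cap $\|\Sigma\|\leq\zeta$ to obtain a $C_R\|\Sigma\|^2$ bound. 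Assembling all contributions yields the stated decomposition.
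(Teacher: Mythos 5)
Your proposal follows essentially the same route as the paper's proof: Taylor-expand $\nabla f$ about $x^*$ to second order plus remainder, integrate the outer product against $N(x^*,\Sigma)$, kill the odd-degree terms by Gaussian parity, read off $T(\Sigma)$ as the expectation of the third-derivative quadratic, and sweep everything else into $R(\Sigma)$ controlled via Gaussian moment bounds and the covariance cap. The one place you diverge is actually a point in your favor: the paper's stated $R(\Sigma)$ simply omits the cross term $\mathbb{E}_\mu[g\tilde R(Z)^\top + \tilde R(Z)g^\top]$, whose naive bound is only $O(\|\Sigma\|^{3/2})$, exactly the obstruction you flag. Your fix --- peel off the cubic $\tfrac16\nabla^4 f(x^*)[y,y,y]$, which vanishes by parity, and bound the residual --- is the right idea; note only that under the paper's stated $C^4$ assumption the residual $E_2$ is merely $o(\|y\|^3)$ pointwise, so to literally achieve the advertised $C_R\|\Sigma\|^2$ bound (rather than $o(\|\Sigma\|^{3/2})$) one needs Lipschitz fourth derivatives or one more degree of smoothness; otherwise your argument is complete and, on this point, more careful than the paper's.
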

    Define $T_g(\Sigma) := gT(\Sigma)^T + T(\Sigma) g^T$. Our goal is to analyze
    \begin{align*}
        \Sigma_{t+1} &= h_{t+1}(\beta \mathcal{L}(\mu_t) + (1-\beta) \mathcal{L}(\mu_{t-1}))^{-1}\\
        &= h_{t+1}(gg^T+ \beta[H \Sigma_t H + T_g(\Sigma_t) + R(\Sigma_t)], \\
        &\quad\quad + (1-\beta)[H \Sigma_{t-1} H + T_g(\Sigma_{t-1}) + R(\Sigma_{t-1})])^{-1}.
    \end{align*}
    We initialize this recurrence with the isotropic choice $\Sigma_0 = \alpha I$, $\Sigma_1 = t_1 \mathcal{L}(\mu_0)^{-1}$, $\Sigma_2 = t_1 \mathcal{L}(\mu_1)^{-1}$, then apply the momentum $\beta$ following this step.

\subsubsection{Full Rank Hessian}
    We begin our analysis with the generic setting, where $H$ is full-rank. Here, the Taylor expansion of $\mathcal{L}(\mu)$ is dominated by a rank 1 matrix $gg^T$ and a linear term $H\Sigma H$, which combine to yield a full rank matrix. 
    We argue that $g^T\Sigma_t g = \Theta(h_t)$, while for $v\perp g,$ $v^T \Sigma_t v = \Theta(\sqrt{h_t})$, thus achieving a second-order anisotropy.

    \begin{example}[Scalar Recurrence]\label{exp: scalar}
        Fix a schedule for $h_t\to 0$ such that $h_{t+1}/h_t$ is larger than a fixed threshold.
        Consider the one dimensional recurrences
        \begin{align}
        a_{t+1} &=  \frac{h_{t+1}}{\beta a_{t} + (1-\beta) a_{t-1}},\label{eq: homogeneous}\\
        b_{t+1} &=  \frac{h_{t+1}}{c + \beta b_{t} + (1-\beta) b_{t-1}} \label{eq: constant part}.
        \end{align}
         It is easy to see that $b_t = O(h_t)$, as if $b_t$ decays at all, then we have $b_t = h_{t+1}/c + o(h_t)$. 
         Assume that $a_t = \Theta(h_t^\zeta)$. By the recurrence relation, we then have $a_i = \Theta(h_t^{1-\zeta})$, and so these coincide at the second order anisotropy $\zeta = 1/2$. 
         As our matrix recurrence is primarily given by a homogeneous part $(H\Sigma H)$ plus a rank deficient constant part $(gg^T)$, we argue in Appendix \ref{app:proofs} that the $b_t$ correspond well to the $g\times g$ block of $\Sigma_t$, and $a_i$ to $g^\perp \times g^\perp$.
    \end{example}

    \begin{example}[Momentum]\label{exp: momentum}
        We now take a moment to highlight the necessity of the momentum term $\beta\in (0,1)$. Consider, again, the sequence $a_t$ from Example \ref{exp: scalar}, and suppose we set $\beta = 1$. For convenience, let us assume $\sqrt{h_{t+1}} = \sqrt{h_t}r$, an exact geometric schedule. We reparameterize, writing
        \[
         a_{t+1} =  \frac{h_{t+1}}{a_{t} } \Longleftrightarrow a'_{t+1} = \frac{1}{a'_t}
        \]
        for $a_i' := \frac{a_i}{\sqrt{r t_i}}$. Clearly, $a_t'$ has no limit, as it oscillates about $1$. In the corresponding matrix recurrence, the remainder terms, as well as the estimation error, make the $\sqrt{h_t}$ rate become unstable. We demonstrate the resulting localizations with and without momentum in Figure \ref{fig: momentum}.
        \begin{figure}
            \centering
            \includegraphics[width=01\linewidth]{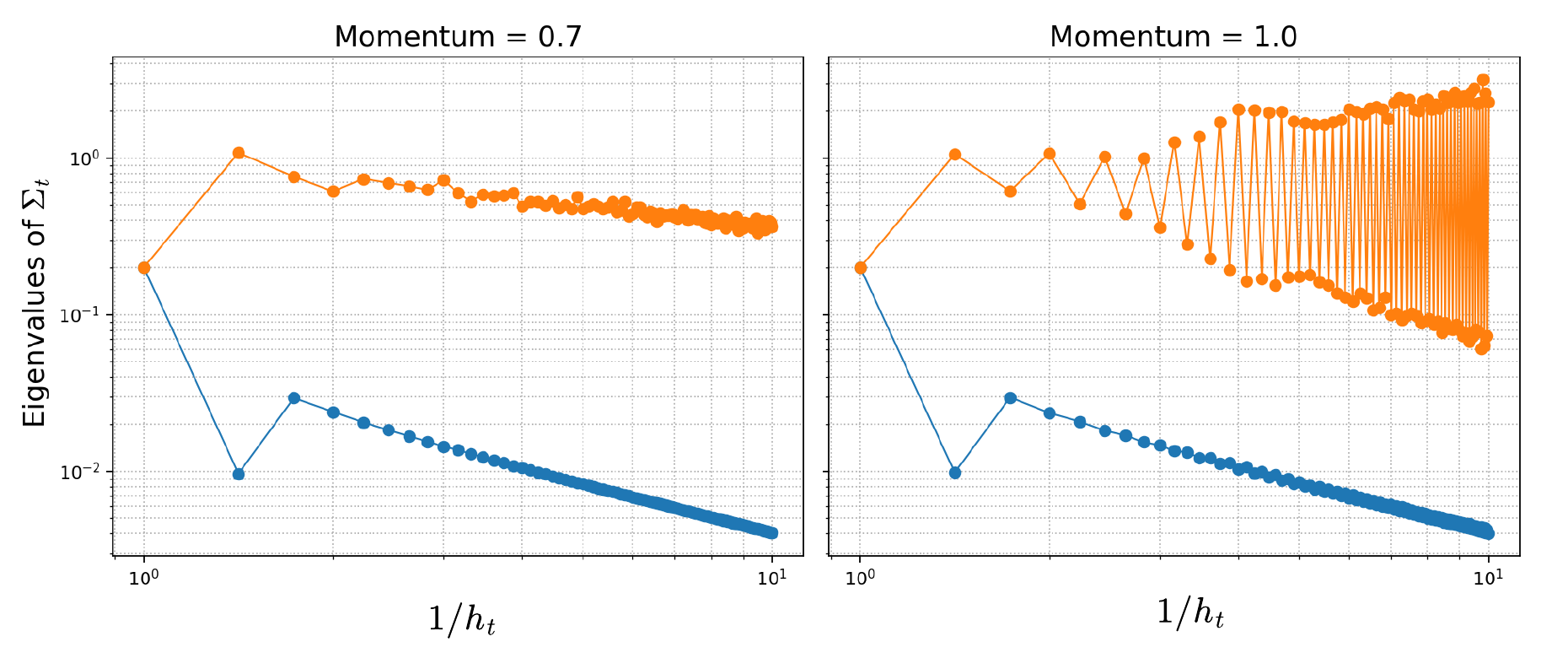}
            \caption{Comparison of the eigenvalue decay of $\Sigma_t$ for Local EGOP Learning with and without momentum. On the left, we set $\beta=0.7$, on the right $\beta = 1$ (no momentum). Without momentum, the second order eigenvalues are prone to wild oscillations.}
            \label{fig: momentum}
        \end{figure}
    \end{example}

     This leads to the following result.

    \begin{theorem}[Second Order Anisotropy]\label{thm: full rank Hessian rate}
    Let $H$ be invertible and $\Sigma_t$ satisfy the recurrence in equation \ref{eq: oracle egop}, $\sqrt{h_{t}} = \alpha r^t$, $0<r<1,\ \beta > 0$. Then, for $\alpha$ sufficiently small, $g^T \Sigma_t g = \Theta(h_t)$, and for $v\perp g$, $v^T \Sigma_t v = \Theta(\sqrt{h_t})$. It follows that, selecting $t_n = \frac{4 \log n}{(D+5) \log(1/r)}$,
    \[
         \mathbb{E}\left[ \int (\hat{P}_{M_{t_n}}(f) - f)^2\, d\mu_{t_n} \right] = O\left(n^{-\frac{4}{D+5}}\right).
    \]
    \end{theorem}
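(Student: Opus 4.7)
I would split the statement into two pieces: (i) establishing the two-scale anisotropy $g^\top \Sigma_i g = \Theta(t_i)$ and $v^\top \Sigma_i v = \Theta(\sqrt{t_i})$ for $v \perp g$ by induction on $i$, and (ii) converting this scaling into the claimed learning rate via Theorem~\ref{thm: Generic Rate}. Since part (ii) is essentially a substitution given part (i), the bulk of the proof lives in the anisotropic recurrence analysis.

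\textbf{Anisotropic recurrence.} I would write $\Sigma_{i+1} = t_{i+1} A_i^{-1}$ with
\[
A_i := gg^\top + \beta\bigl[H\Sigma_i H + T_g(\Sigma_i) + R(\Sigma_i)\bigr] + (1-\beta)\bigl[H\Sigma_{i-1}H + T_g(\Sigma_{i-1}) + R(\Sigma_{i-1})\bigr],
\]
and factor $A_i = gg^\top + \varepsilon_i M_i$ with $\varepsilon_i := \lVert A_i - gg^\top\rVert_{\mathrm{op}}$ and $M_i$ of unit operator norm. Assuming inductively that $\Sigma_{i-1}, \Sigma_i$ satisfy the claimed anisotropy, I would check that $\lVert H\Sigma_\ast H\rVert = \Theta(\sqrt{t_i})$ (driven by the $g^\perp$ block), the cross-term $T_g(\Sigma_\ast) = O(\lVert g\rVert \sqrt{t_i})$ is confined to a rank-$\le 2$ subspace containing $g$, and $R(\Sigma_\ast) = O(\lVert \Sigma_\ast\rVert^2) = O(t_i)$, so $\varepsilon_i = \Theta(\sqrt{t_i})$. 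Sherman--Morrison then gives
\[
g^\top A_i^{-1} g = \frac{g^\top M_i^{-1} g}{\varepsilon_i + g^\top M_i^{-1} g}, \qquad v^\top A_i^{-1} v = \frac{1}{\varepsilon_i}\left(v^\top M_i^{-1} v - \frac{(v^\top M_i^{-1} g)^2}{\varepsilon_i + g^\top M_i^{-1} g}\right)
\]
for $v \in g^\perp$, hence $g^\top A_i^{-1} g = \Theta(1)$ and $v^\top A_i^{-1} v = \Theta(\varepsilon_i^{-1})$. Multiplying by $t_{i+1} = r^2 t_i$ closes the induction: $g^\top \Sigma_{i+1} g = \Theta(t_{i+1})$ and $v^\top \Sigma_{i+1} v = \Theta(t_{i+1}/\sqrt{t_i}) = \Theta(\sqrt{t_{i+1}})$. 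The scalar recurrences of Example~\ref{exp: scalar} supply the underlying dynamics: the rank-one $gg^\top$ anchors the $g$-direction as the constant $c$ does in $b_{i+1}$, while $H\Sigma H$ drives $g^\perp$ along the $\sqrt{t_i}$ fixed orbit of $a_{i+1}$. The momentum $\beta > 0$ and small $\alpha$ play the roles flagged in Example~\ref{exp: momentum}: momentum prevents oscillation of the $\sqrt{t_i}$ mode, and small $\alpha$ forces $\Sigma_0, \Sigma_1, \Sigma_2$ into a regime where the quadratic remainders $R$ stay subdominant throughout the induction.

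\textbf{Rate and main obstacle.} With the anisotropy in hand, $\det \Sigma_i = \Theta(t_i \cdot t_i^{(D-1)/2}) = \Theta(t_i^{(D+1)/2})$, so $q = (D+1)/2$ in Theorem~\ref{thm: Generic Rate}. The bias satisfies $W(\mu_i) = \operatorname{tr}(\mathcal{L}(\mu_i)\Sigma_i) = g^\top \Sigma_i g + \operatorname{tr}(H\Sigma_i H \Sigma_i) + \operatorname{tr}\bigl((T_g(\Sigma_i) + R(\Sigma_i))\Sigma_i\bigr) = \Theta(t_i)$, since each term paired against $\Sigma_i$ is at most $O(t_i)$ by Lemma~\ref{lem: Generic Taylor}. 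Plugging $q = (D+1)/2$ into Theorem~\ref{thm: Generic Rate} yields the claimed rate $n^{-4/(D+5)}$ at the specified $i_n$. The principal technical obstacle is lower-bounding the $g^\perp$-compression of $M_i^{-1}$ uniformly in $i$: although $H$ is globally invertible, the $\sim 1/t_i$ spike of $\Sigma_i^{-1}$ in the $g$-direction can be mixed into $g^\perp$ by $H^{-1}$, so one must verify via a Schur-complement computation that factoring off the rank-one $gg^\top$ component of $A_i$ restores uniform ellipticity of $M_i$ on $g^\perp$, with constants depending only on $\lVert H\rVert$, $\lVert H^{-1}\rVert$, $\lVert g\rVert$, $r$, $\beta$, and the uniform bounds fixed in the setup.
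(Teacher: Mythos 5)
Your overall architecture matches the paper's: induct on the two-scale anisotropy by separating the $g$-direction from $g^\perp$ (your rank-one split $A_i = gg^\top + \varepsilon_i M_i$ plus Sherman--Morrison is essentially the paper's scalar-pivot Schur complement in the $(u,\bar Q)$ basis, Lemma~\ref{lem: schur}), then feed $q=(D+1)/2$ and $W(\mu_i)=O(t_i)$ into Theorem~\ref{thm: Generic Rate}. Your bookkeeping of the Taylor terms is also right: $T_g(\Sigma_i)$ vanishes on the $g^\perp\times g^\perp$ block, $R(\Sigma_i)=O(t_i)$, and the rate arithmetic at the end is correct.

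The gap is that the induction, as written, does not close at the critical step, and you have in effect deferred the entire content of the theorem to the sentence beginning ``one must verify via a Schur-complement computation.'' To conclude $v^\top A_i^{-1} v = \Theta(\varepsilon_i^{-1})$ with constants \emph{uniform in $i$}, you need the normalized $g^\perp$ block $X_i := C_i/\sqrt{c(r,\beta)\,t_i}$ (with $C_i = \bar Q\Sigma_i\bar Q$) to stay in a fixed spectral band $[c,C]\subset(0,\infty)$ for all $i$. A strong induction on unquantified $\Theta$-statements cannot deliver this: the recurrence maps upper bounds on $X_i, X_{i-1}$ to lower bounds on $X_{i+1}$ and vice versa, so the constants $c_i, C_i$ satisfy a coupled reciprocal recursion that can drift or oscillate --- this is exactly the instability of Example~\ref{exp: momentum}, and it is why $\beta\in(0,1)$ is needed for more than ``preventing oscillation of a mode.'' The paper's resolution is the machinery of Appendix~\ref{sec: recurrence}: reduce to the autonomous recurrence $X_{i+1}=J(\eta X_i+(1-\eta)X_{i-1}+R_i)^{-1}J$ (Propositions~\ref{prop: autonomous}, \ref{prop: H reduction}), prove a uniform spectral band via the Thompson metric under summable normalized errors (Lemma~\ref{lem: spectral band}), and identify the limit set via a discrete LaSalle argument (Lemmas~\ref{lem: convergence}, \ref{lem: generic J}). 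Note also that your proposal never engages with the sign structure of $H$: conjugating by $W=U|\Lambda|^{1/2}$ introduces the signature involution $J=\operatorname{sign}(\Lambda)$ into the normalized dynamics, and when $H$ is indefinite the limit satisfies $(LJ)^2=I$ rather than $L=I$, so ``uniform ellipticity of $M_i$ on $g^\perp$ with constants depending only on $\|H\|,\|H^{-1}\|,\dots$'' is not something a single Schur-complement computation can certify --- it is a statement about the long-time behavior of a nonlinear matrix recurrence.
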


    Compared to a vanilla 0-order local polynomial estimator, this method nearly squares the asymptotic error, effectively cutting the effective dimension in half for $D$ large. 
    In our theory, we assume the initial localization has small covariance $\alpha I$. In the resulting application of Lemma \ref{lem: Generic Taylor}, this results in an explicit recurrence up to a higher-order remainder. While a useful theoretical construct, this is not necessary empirically, and we leave it to future research for a complete analysis of the iteration stability.

    \subsubsection{Noisy Manifold}
In the noisy manifold setting, we strengthen Lemma \ref{lem: Generic Taylor}.
    \begin{lemma}[Noisy Manifold Taylor Expansion]\label{lem: Taylor}
        Let $g = \nabla f(x^*)$, $H = \nabla^2 f(x^*)$, $p = \pi(x^*)$, $\mathcal{T}:= \mathcal{T}_p, \mathcal{N}:= \mathcal{N}_p$, and $T,R$ be as in Lemma \ref{lem: Generic Taylor}. There exist non-negative constants $C_{\mathcal{N}}$ and $ C_{\mathcal{T}\mathcal{N}}$ such that, for all $A\succeq 0$,
        \begin{align}
        \|\pi_{\mathcal{N}} R(A) \pi_{\mathcal{N}}\| &\leq C_{\mathcal{N}}\|\pi_{\mathcal{T}} A \pi_{\mathcal{T}}\|^2,\\
        \|\pi_{\mathcal{T}} R(A) \pi_{\mathcal{N}}\| &\leq C_{\mathcal{T}\mathcal{N}} \|\pi_{\mathcal{T}} A \pi_{\mathcal{T}}\|.
        \end{align}
        Additionally, $\operatorname{rank}(H) \leq 2d.$
    \end{lemma}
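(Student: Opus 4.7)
The approach leans on two geometric facts from the supervised noisy manifold hypothesis: since $f = g\circ\pi$, one has $\nabla f(x) = D\pi(x)^T\nabla_\mathcal{M} g(\pi(x)) \in \mathcal{T}_{\pi(x)}$; and within the reach, $\pi$ is constant along normal fibers. Set $p = \pi(0)$, $\mathcal{T}=\mathcal{T}_p$, $\mathcal{N}=\mathcal{N}_p$. I will work as if $p=0$ for brevity; the general case adds only a shift and proceeds analogously.

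For the rank statement, differentiate the expression above at the origin: the chain rule gives $H = D\pi(0)^T \nabla^2_\mathcal{M} g(p)\, D\pi(0) + (\partial_x D\pi|_0)^T g$. The first summand has range in $\mathcal{T}$ (rank $\leq d$). The second measures the variation of the tangent-space projection, governed by the second fundamental form, and when contracted against $g$ it lands in $II_p(\mathcal{T}, g) \subseteq \mathcal{N}$, the image of a linear map $\mathcal{T}\to\mathcal{N}$ (rank $\leq d$). In the $\mathcal{T}\oplus\mathcal{N}$ block decomposition this yields $H = \bigl(\begin{smallmatrix}A & B^T \\ B & 0\end{smallmatrix}\bigr)$ with $B = II_p(\cdot, g)$ and $\pi_\mathcal{N} H\pi_\mathcal{N} = 0$, so $\operatorname{rank}(H)\leq 2d$.

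The remainder bounds rest on the observation that $v_\mathcal{N}(x) := \pi_\mathcal{N}\nabla f(x)$ vanishes identically on the normal fiber through $p$: for $x\in p+\mathcal{N}$ within the reach, $\pi(x)=p$ so $\nabla_\mathcal{M} g(\pi(x))=g\in\mathcal{T}$, and since $D\pi(x)$ annihilates $\mathcal{N}$ at such $x$, one checks $D\pi(x)^T g\in\mathcal{T}$. Hadamard's lemma factors $v_\mathcal{N}(x) = M(x)\pi_\mathcal{T} x$ with $M(0)=B$; vanishing of $v_\mathcal{N}$ to all orders along the fiber further forces $M(0,\,\cdot\,)\equiv B$, which makes the quadratic Taylor term of $v_\mathcal{N}$ purely a $(\pi_\mathcal{T} x)^2$-polynomial with no $(\pi_\mathcal{T} x)(\pi_\mathcal{N} x)$ cross. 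To bound $\pi_\mathcal{N} R(\Sigma)\pi_\mathcal{N}$, I will compute $\int v_\mathcal{N} v_\mathcal{N}^T\,d\mu - \pi_\mathcal{N} H\Sigma H\pi_\mathcal{N}$: the linear piece of $v_\mathcal{N}$ contributes $B\pi_\mathcal{T}\Sigma\pi_\mathcal{T} B^T$, which exactly matches $\pi_\mathcal{N} H\Sigma H\pi_\mathcal{N}$ from the block form of $H$; third-moment cross terms between linear and quadratic pieces vanish by Gaussian parity; and the residual is a fourth-moment expression whose every monomial carries at least two factors of $\pi_\mathcal{T} x$, yielding $O(\|\pi_\mathcal{T}\Sigma\pi_\mathcal{T}\|^2)$ by Wick's theorem. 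For $\pi_\mathcal{T} R(\Sigma)\pi_\mathcal{N}$, an analogous expansion of $\int v_\mathcal{T} v_\mathcal{N}^T\,d\mu$ produces $\pi_\mathcal{T} H\Sigma H\pi_\mathcal{N}$ plus a contribution absorbed by $\pi_\mathcal{T} T_g(\Sigma)\pi_\mathcal{N}$ upon identifying $\pi_\mathcal{N} T(\Sigma)$ with the expectation of the normal part of the quadratic Taylor term of $\nabla f$, leaving a fourth-moment residual that carries at least one $\pi_\mathcal{T} x$ factor from $v_\mathcal{N}$ and is therefore of order $\|\pi_\mathcal{T}\Sigma\pi_\mathcal{T}\|$.

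The main obstacle I anticipate is the careful control of these Gaussian fourth moments when the cross-block $\pi_\mathcal{T}\Sigma\pi_\mathcal{N}$ is non-trivial: contributions like $\int(\pi_\mathcal{T} x)^3(\pi_\mathcal{N} x)\,d\mu$ scale a priori as $\|\pi_\mathcal{T}\Sigma\pi_\mathcal{T}\|\|\pi_\mathcal{T}\Sigma\pi_\mathcal{N}\|$, not as $\|\pi_\mathcal{T}\Sigma\pi_\mathcal{T}\|^2$. These must be absorbed into the desired bound via the PSD inequality $\|\pi_\mathcal{T}\Sigma\pi_\mathcal{N}\|^2 \leq \|\pi_\mathcal{T}\Sigma\pi_\mathcal{T}\|\|\pi_\mathcal{N}\Sigma\pi_\mathcal{N}\|\leq \zeta\|\pi_\mathcal{T}\Sigma\pi_\mathcal{T}\|$, combined with the uniform cap $\|\Sigma\|\leq \zeta$ and the $(\pi_\mathcal{T} x)^2$-purity of the leading quadratic term established above. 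The resulting constants $C_\mathcal{N}$ and $C_{\mathcal{T}\mathcal{N}}$ will depend on $\zeta$, the reach of $\mathcal{M}$, and the uniform derivative bounds on $f$.
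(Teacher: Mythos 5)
Your rank argument is sound and essentially the paper's: both reduce to the observation that the $\mathcal{N}\times\mathcal{N}$ block of $H$ vanishes (the paper gets this from $u^T\nabla^2 f\,v=0$ for $u,v$ normal; you get it from the chain-rule block structure $H=\bigl(\begin{smallmatrix}A&B^T\\B&0\end{smallmatrix}\bigr)$), so that part stands. The remainder bounds, however, contain a genuine gap. Your entire control of $\pi_{\mathcal{N}}R(A)\pi_{\mathcal{N}}$ hinges on the claim that ``vanishing of $v_{\mathcal{N}}$ to all orders along the fiber forces $M(0,\cdot)\equiv B$,'' i.e.\ that the quadratic Taylor coefficient of $v_{\mathcal{N}}=\pi_{\mathcal{N}}\nabla f$ has no $(\pi_{\mathcal{T}}x)(\pi_{\mathcal{N}}x)$ cross term. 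That deduction is invalid: $v_{\mathcal{N}}\equiv 0$ on $\{\pi_{\mathcal{T}}x=0\}$ constrains only derivatives taken \emph{within} the fiber and says nothing about the mixed block $M(\eta)=\pi_{\mathcal{N}}\nabla^2 f(p+\eta)\pi_{\mathcal{T}}$, which Hadamard's lemma leaves completely free along the fiber. And the conclusion is false in general: by the paper's own gradient formula $\nabla f(p+\eta)=(I-S_p(\eta))^{-1}\nabla f(p)$, both the gradient and the differential $D\pi_{p+\eta}=(I-S_p(\eta))^{-1}\pi_{\mathcal{T}}$ vary with $\eta$, so the second-fundamental-form contraction that produces the normal component of $\nabla f$ under a tangential step is $\eta$-dependent, and the cross term survives. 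Once it is present, Wick's theorem applied to $\pi_{\mathcal{N}}\mathbb{E}[T[Z,Z]T[Z,Z]^T]\pi_{\mathcal{N}}$ produces a contribution of order $\|\pi_{\mathcal{T}}\Sigma\pi_{\mathcal{T}}\|\cdot\|\pi_{\mathcal{N}}\Sigma\pi_{\mathcal{N}}\|$, which is not $O(\|\pi_{\mathcal{T}}\Sigma\pi_{\mathcal{T}}\|^2)$ precisely in the regime the lemma is used (the $\operatorname{Null}$ directions of $\Sigma$ stay $\Theta(1)$ while the tangential block shrinks). Your closing ``obstacle'' paragraph addresses a different and milder issue — the cross-covariance block $\pi_{\mathcal{T}}\Sigma\pi_{\mathcal{N}}$ — and the inequality $\|\pi_{\mathcal{T}}\Sigma\pi_{\mathcal{N}}\|^2\le\zeta\|\pi_{\mathcal{T}}\Sigma\pi_{\mathcal{T}}\|$ does not absorb a $\Sigma_{\mathcal{T}\mathcal{T}}\otimes\Sigma_{\mathcal{N}\mathcal{N}}$ term; for the same reason your argument for the $\mathcal{T}\mathcal{N}$ block (``carries at least one $\pi_{\mathcal{T}}x$ factor, hence order $\|\pi_{\mathcal{T}}\Sigma\pi_{\mathcal{T}}\|$'') is too quick, since a single tangential factor paired against a normal one yields only $\|\pi_{\mathcal{T}}\Sigma\pi_{\mathcal{N}}\|=O(\|\pi_{\mathcal{T}}\Sigma\pi_{\mathcal{T}}\|^{1/2})$.

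The paper avoids term-by-term Taylor matching of $v_{\mathcal{N}}$ altogether. Its key input is the uniform pointwise estimate $\|\pi_{\mathcal{N}}\nabla f(x)\|\le C\|\pi_{\mathcal{T}}(x-x^*)\|$ valid throughout the tube (Lemma~\ref{lem: grad rot}, obtained by controlling the rotation of the normal projector along geodesics via the Weingarten map, Lemmas~\ref{lem: geodesic} and~\ref{lem: tan to orthog}); it then bounds the relevant blocks of $\mathcal{L}(\mu)$ by squaring this pointwise bound and integrating against the Gaussian. That route is insensitive to whether individual Taylor coefficients of $v_{\mathcal{N}}$ contain cross terms, which is exactly the information your expansion needs and cannot supply. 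To repair your argument you would need either to prove the pointwise fiber-uniform bound (at which point you have reproduced the paper's mechanism) or to carry the cross term explicitly and show its Wick contribution cancels against the corresponding piece of $\pi_{\mathcal{N}}H\Sigma H\pi_{\mathcal{N}}$ — neither of which the current proposal does.
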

    We will leverage this weak orthogonal dependence to show that $\mu_t$ elongates along the space $\mathcal{N}_{x^*}$ orthogonal to the manifold. However, while the function is constant on this fiber, its gradient is not. Indeed, we are only guaranteed that the gradient remains tangent. Due to the effects of curvature, $\nabla f(p+v)$ may shrink or grow along different coordinate directions.
    We characterize this precisely as follows.

    \begin{lemma}[Gradient Geometry]\label{lem: grad form}
    Let $p \in \mathcal{M}$, $\eta \in \mathcal{N}_p$, and $\|\eta\|<\tau$, where $\tau$ denotes the reach of $\mathcal{M}$. For $S_p$ the shape operator,
        \[
        \nabla f(p+\eta) = (I - S_{p}(\eta))^{-1} \nabla f(p).
        \]
        For any such $\eta$, there exists a subspace $\operatorname{Null} = \operatorname{Null}_{p+\eta}$ of dimension at least $D - 2d$ such that, for any $v,z \in \operatorname{Null}$, $\nabla f(p + \eta + v) = \nabla f(p + \eta)$, $\nabla^2 f(p + \eta + v) z = 0$. 
    \end{lemma}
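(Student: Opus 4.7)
My plan is to combine the chain rule $\nabla f = (d\pi)^{*}\nabla g$ for $f = g\circ \pi$ with the classical tubular-neighborhood formula for $d\pi$, and then read the null subspace directly off the resulting gradient expression.

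To obtain the formula for $\nabla f(p+\eta)$, I would parametrize a smooth variation $\xi(t) = p(t) + n(t)$ with $p(t)\in\mathcal{M}$, $n(t)\in\mathcal{N}_{p(t)}$, and $\xi(0) = p+\eta$, then differentiate and apply the Weingarten identity $(\nabla_v n)^{\mathcal{T}} = -S_p(n)\,v$ for normal vector fields along $\mathcal{M}$. A short calculation gives $\dot\xi(0) = (I - S_p(\eta))\dot p(0) + \dot n^{\mathcal{N}}(0)$, whose tangential component shows that $d\pi_{p+\eta}$ annihilates $\mathcal{N}_p$ and acts on $\mathcal{T}_p$ as $(I - S_p(\eta))^{-1}$. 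Because $S_p(\eta)$ is self-adjoint on $\mathcal{T}_p$, taking the adjoint and using $\nabla f(p) = \nabla g(p)$ (since $d\pi_p$ is the orthogonal projection onto $\mathcal{T}_p$) delivers the stated formula $\nabla f(p+\eta) = (I - S_p(\eta))^{-1}\nabla f(p)$.

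For the null subspace, I would define the linear map $\Phi_\eta : \mathcal{N}_p \to \mathcal{T}_p$ by $\Phi_\eta(v) := S_p(v)\,\nabla f(p+\eta)$, where $v\mapsto S_p(v)$ is the tensorial extension of the shape operator to all of $\mathcal{N}_p$, and set $\operatorname{Null}_\eta := \ker \Phi_\eta$. Since $\dim \mathcal{N}_p = D-d$ and $\dim \mathcal{T}_p = d$, the rank--nullity theorem gives $\dim \operatorname{Null}_\eta \geq D - 2d$. For $v \in \operatorname{Null}_\eta$ with $\|\eta+v\|<\tau$, the vector $\eta+v$ still lies in $\mathcal{N}_p$, so the first part yields
\[
\nabla f(p+\eta+v) = (I - S_p(\eta) - S_p(v))^{-1}\nabla f(p).
\]
Writing $A := I - S_p(\eta)$, the elementary identity $(A - S_p(v))^{-1}a = A^{-1}a$ holds iff $S_p(v)\,A^{-1}a = 0$, which with $a = \nabla f(p)$ is precisely the condition $S_p(v)\nabla f(p+\eta) = \Phi_\eta(v) = 0$; hence $\nabla f(p+\eta+v) = \nabla f(p+\eta)$. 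For the Hessian claim, for $z \in \operatorname{Null}_\eta$ I observe that $S_p(z)\nabla f(p+\eta+v) = S_p(z)\nabla f(p+\eta) = 0$, so the same argument with base point $\eta+v$ and perturbation $tz$ gives $\nabla f(p+\eta+v+tz) = \nabla f(p+\eta+v)$ for all small $t$; differentiating in $t$ at $0$ yields $\nabla^2 f(p+\eta+v)\,z = 0$.

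The main obstacle is justifying the Jacobian formula for $\pi$ inside the reach tube in the self-adjoint form used above. This fact is classical (going back to Federer), but the precise form needed here is most cleanly produced via the Weingarten calculation sketched above, so a brief self-contained derivation is warranted. Everything else reduces to rank--nullity and a one-line Neumann-style identity, and the bound $\operatorname{rank}(H)\leq 2d$ from Lemma \ref{lem: Taylor} is recovered as the $\eta = 0$ specialization.
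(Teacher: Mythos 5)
Your proposal is correct and takes essentially the same route as the paper: the Weingarten/normal-bundle computation of $D\pi_{p+\eta}=(I-S_p(\eta))^{-1}\pi_{\mathcal{T}_p}$ followed by the chain rule, the null subspace cut out by the shape operator acting on $\nabla f(p+\eta)$ (your rank--nullity count for $\Phi_\eta:\mathcal{N}_p\to\mathcal{T}_p$ is the adjoint of the paper's count via the image of $w\mapsto \mathrm{I\!I}(u,w)$, giving the same $D-2d$ bound), and the Hessian claim by differentiating the constant gradient along $\operatorname{Null}$. No gaps.
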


    These geometric implications allow us to further adapt Lemma \ref{lem: Taylor} to our geometry.

    \begin{corollary}[Shifted Taylor]\label{cor: shifted taylor}
        Let $x^* = p + \eta$, $v \in \operatorname{Null}_{p+\eta}$. Define $\mu_v$ as the measure $\mu = N(x^*,\Sigma)$ conditioned on $\pi_{\operatorname{Null}}[X- x^*] = v$, i.e. the draws where the $\operatorname{Null}$ component is fixed at $v$. Let $\Sigma_v$ be the covariance of this distribution. Then,
        \[
        \mathcal{L}(\mu_v) = gg^T + H_v \Sigma_v H_v + g T_v(\Sigma_v)^T + T_v(\Sigma_v) g^T + R_v(\Sigma_v)
        \]
        where $\operatorname{Null}\subseteq \operatorname{nullspace}(H_v)$, $\sup_{v \in \operatorname{Null}}\|R_v(A)\| \leq C_{\operatorname{Null}} \|A\|^2,\ T_v(A) \leq C_{\operatorname{Null}}\|A\|$ for a uniform constant $C_{\operatorname{Null}}$, and $H_v, T_v, R_v$ are continuous in $v$ and Lipschitz in $\|A\|$.
    \end{corollary}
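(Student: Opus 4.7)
The plan is to apply the generic Taylor expansion (Lemma~\ref{lem: Generic Taylor}) to $\mu_v$ about its own mean, and then invoke the gradient geometry lemma (Lemma~\ref{lem: grad form}) to eliminate $v$-dependence from the leading terms. Because $\mu = N(x^*,\Sigma)$ is Gaussian and $\pi_{\operatorname{Null}}[X-x^*]$ is a linear function of $X$, the conditional distribution $\mu_v$ is itself Gaussian, supported on an affine plane parallel to $\operatorname{Null}^\perp$ through a conditional mean $y_v$, with covariance $\Sigma_v$ given by the Schur complement of $\Sigma$ against its $\operatorname{Null}$ block.

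With $\mu_v = N(y_v,\Sigma_v)$ in hand, Lemma~\ref{lem: Generic Taylor} applied about $y_v$ yields an expansion $\mathcal{L}(\mu_v) = g_v g_v^T + \tilde{H}_v \Sigma_v \tilde{H}_v + g_v \tilde{T}(\Sigma_v)^T + \tilde{T}(\Sigma_v) g_v^T + \tilde{R}(\Sigma_v)$, where $g_v = \nabla f(y_v)$, $\tilde{H}_v = \nabla^2 f(y_v)$, and $\tilde{T},\tilde{R}$ satisfy the generic operator bounds. Granting that the recurrence \eqref{eq: oracle egop} preserves block structure so that $y_v = x^* + v$ (see the last paragraph), the shift $y_v - p = \eta + v$ lies in $\mathcal{N}_p$ with $v \in \operatorname{Null}_\eta$. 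Lemma~\ref{lem: grad form} then gives $g_v = \nabla f(p+\eta+v) = \nabla f(p+\eta) = g$ and $\tilde{H}_v z = 0$ for every $z\in \operatorname{Null}$, so setting $H_v := \tilde{H}_v$, $T_v := \tilde{T}$, and $R_v := \tilde{R}$ produces the claimed identification with $\operatorname{Null}\subseteq \operatorname{nullspace}(H_v)$.

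For the uniform constants, the standing assumption that $f\in C^4$ with uniformly bounded derivatives makes the constants $C_T, C_R$ in Lemma~\ref{lem: Generic Taylor} bounded independently of the expansion center, so taking $C_{\operatorname{Null}}$ to be the maximum of these yields $\|T_v(A)\|\le C_{\operatorname{Null}}\|A\|$ and $\|R_v(A)\|\le C_{\operatorname{Null}}\|A\|^2$ simultaneously for all admissible $v$. Continuity of $H_v$, $T_v$, $R_v$ in $v$ follows from smoothness of $\nabla^2 f$ together with the fact that $y_v$ depends affinely on $v$ while $\Sigma_v$ does not depend on $v$ at all; the Lipschitz dependence on $\|A\|$ is inherited verbatim from the linear and quadratic bounds of Lemma~\ref{lem: Generic Taylor}.

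The main obstacle, as hinted above, is the bookkeeping for the conditional mean: in full generality the Schur-complement formula produces $y_v = x^* + v + \Sigma_{\operatorname{Null}^\perp,\operatorname{Null}}\Sigma_{\operatorname{Null},\operatorname{Null}}^{-1}v$, which carries an unwanted component in $\operatorname{Null}^\perp$, so without extra structure $y_v - p$ need not lie in $\operatorname{Null}_\eta$ and Lemma~\ref{lem: grad form} cannot be invoked directly. The cleanest resolution is to verify, by induction along the recurrence \eqref{eq: oracle egop} starting from $\Sigma_0 = \alpha I$, that each $\Sigma_i$ is block diagonal in the $\operatorname{Null}\oplus \operatorname{Null}^\perp$ decomposition. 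This invariant propagates because $gg^T$ is supported on $\operatorname{Null}^\perp$, $H\Sigma_i H$ respects the decomposition whenever $\Sigma_i$ does (since $\operatorname{Null}\subseteq \operatorname{nullspace}(H)$ by Lemma~\ref{lem: grad form}), and the higher-order remainder inherits the block structure from its inputs. Establishing this block invariance—rather than the Taylor expansion step itself—is the real technical core of the corollary.
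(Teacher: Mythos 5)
Your first three paragraphs reproduce the paper's argument: the paper's proof of Corollary~\ref{cor: shifted taylor} is exactly ``apply Lemma~\ref{lem: Generic Taylor} to the conditional Gaussian $\mu_v$, get uniform constants from the uniformly bounded derivatives of $f$, and get $g_v = g$ and $\operatorname{Null}\subseteq\operatorname{nullspace}(H_v)$ from Lemma~\ref{lem: grad form}.'' So on the main content you and the paper agree.

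The problem is the step you yourself flag as the ``real technical core.'' You condition the conclusion on the recurrence \eqref{eq: oracle egop} preserving exact block diagonality of $\Sigma_i$ in the $\operatorname{Null}\oplus\operatorname{Null}^\perp$ splitting, so that the conditional mean is exactly $x^*+v$. That invariant does not hold, and the paper never claims it. While $gg^T$ and $H\Sigma H$ are indeed supported on $\operatorname{Null}^\perp\times\operatorname{Null}^\perp$ (since $g\in\mathcal{T}_p\subseteq\operatorname{Null}^\perp$ and $\pi_{\operatorname{Null}}H=0$), the third-derivative term $T(\Sigma)$ and the remainder $R(\Sigma)$ involve $\partial_i\partial_j\partial_k f$ and fourth derivatives with mixed indices, and these have no reason to annihilate $\operatorname{Null}$; consequently $gT(\Sigma)^T + T(\Sigma)g^T + R(\Sigma)$ generically has a nonzero $\operatorname{Null}^\perp\times\operatorname{Null}$ block, which survives inversion and propagates. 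This is visible in the paper's own downstream treatment: Lemma~\ref{lem: weak dependence} only \emph{assumes} $\eta^T\Sigma v = O(\|\Sigma_{\operatorname{Null}^\perp}\|^2)$ for the cross block, and the inductive hypothesis in Lemma~\ref{lem: filtered} carries $\|\bar Q_{\operatorname{Null}^\perp}\Sigma\bar Q_{\operatorname{Null}}\| = O(\sqrt{t_i})$ --- small, not zero. The correct resolution of the conditional-mean issue you raise is therefore not an exact block-diagonality induction but a smallness argument: with the cross block of higher order and $\Sigma_{\operatorname{Null}\operatorname{Null}}=\Theta(1)$, the shift $\Sigma_{\operatorname{Null}^\perp\operatorname{Null}}\Sigma_{\operatorname{Null}\operatorname{Null}}^{-1}v$ perturbs the expansion center by a quantity that is absorbed into $T_v$ and $R_v$ on the effective support of $v$. (To your credit, the conditional-mean shift is a genuine subtlety that the paper's two-line proof does not address, and it also strains the literal $\sup_{v\in\operatorname{Null}}$ uniformity in the statement since the shift grows linearly in $\|v\|$; but your proposed fix, as stated, would fail.)
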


    This allows us to perform an iterated integration, freezing the Taylor expansion along this shifted base point while preserving the fundamental first order behavior.
    To simplify our argument we assume that the covariance cap $\zeta$ is small, allowing a simple reduction of the dynamics to the non-degenerate case. This greatly streamlines the analysis, although we do not believe it is fundamental to the convergence.

    \begin{theorem}[Intrinsic Learning Rate]\label{thm: manifold rate}
        Assume that $D \geq 2d$ and $\operatorname{rank}(H) = 2d$. Let $\Sigma_t$ satisfy the recurrence in equation \ref{eq: oracle egop}, $\sqrt{h_{t}} = \alpha r^t$, $0<r<1,\ \beta > 0$. Then, for $\alpha, \zeta$ sufficiently small, $g^T \Sigma_t g = \Theta(h_t)$. If $v\in \operatorname{Null}$, then $v^T \Sigma_t v = \Theta(1)$, and if $\eta \in \operatorname{Null}^\perp \cap g^\perp$, $\eta^T \Sigma \eta = \Theta(\sqrt{h_t})$. It follows that, selecting $t_n = \frac{4 \log n}{(2d+5) \log(1/r)}$,
    \[
         \mathbb{E} \left[ \int (\hat{P}_{M_{t_n}}(f) - f)^2\, d\mu_{t_n} \right] = O\left(n^{-\frac{4}{2d+5}}\right).
    \]
    \end{theorem}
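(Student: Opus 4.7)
The plan is to reduce the iteration to the full-rank analysis of Theorem \ref{thm: full rank Hessian rate}, applied inside the $2d$-dimensional subspace $\operatorname{Null}^\perp$, while showing that the remaining $D-2d$ directions comprising $\operatorname{Null}$ stay saturated at the cap $\zeta$ and decouple from the main dynamics. Concretely, I first apply Corollary \ref{cor: shifted taylor} via iterated integration along $\operatorname{Null}$ slices, giving a Taylor form of $\mathcal{L}(\mu_i)$ whose Hessian $H_v$ annihilates $\operatorname{Null}$ while preserving the rank-one first-order term $gg^T$ (with $g\in\operatorname{Null}^\perp$). I then block-decompose $\Sigma_i$ along $\operatorname{span}(g)\oplus(\operatorname{Null}^\perp\cap g^\perp)\oplus\operatorname{Null}$ and track each block under the recurrence.

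For the $\operatorname{Null}$ block, the terms $gg^T$, $H_v\Sigma_i H_v$, and $T_{g,v}(\Sigma_i)$ all vanish there; only the remainder $R_v(\Sigma_i)$ contributes, and Corollary \ref{cor: shifted taylor} bounds its Null block by $\|\pi_{\operatorname{Null}^\perp}\Sigma_i\pi_{\operatorname{Null}^\perp}\|^2$, which is of order $t_i$ once the other blocks decay. Hence the Null block of $M_i^{-1}$ wants to grow, but the cap $\|\Sigma_i\|\le\zeta$ pins it at $\Theta(\zeta)=\Theta(1)$ every iteration. Taking $\zeta$ small then ensures that the cross-block contributions of $T_{g,v}$ and $R_v$ fed back into the $\operatorname{Null}^\perp$ dynamics are dominated by the leading $gg^T+H_v\Sigma_i H_v$ terms, so the two subsystems effectively decouple.

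Restricted to the $2d$-dimensional $\operatorname{Null}^\perp$, the operator $H$ has full rank $2d$ by hypothesis, so the recurrence becomes the same structured problem analyzed in Theorem \ref{thm: full rank Hessian rate}, but in ambient dimension $2d$ rather than $D$. The scalar-recurrence analysis of Example \ref{exp: scalar} then gives $g^T\Sigma_i g=\Theta(t_i)$ and $\eta^T\Sigma_i\eta=\Theta(\sqrt{t_i})$ for $\eta\in\operatorname{Null}^\perp\cap g^\perp$, by essentially the same argument used for the non-degenerate case, with the momentum $\beta>0$ again preventing the oscillation highlighted in Example \ref{exp: momentum}. Combining with the $\Theta(1)$ Null block yields $\det\Sigma_i=\Theta(t_i\cdot(\sqrt{t_i})^{2d-1})=\Theta(t_i^{(2d+1)/2})$, while a direct trace computation using the Taylor form gives $W(\mu_i)=\operatorname{tr}(\mathcal{L}(\mu_i)\Sigma_i)=O(t_i)$. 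Applying Theorem \ref{thm: Generic Rate} with $q=(2d+1)/2$ then yields the rate $n^{-1/(1+q/2)}=n^{-4/(2d+5)}$ at the advertised iteration count $i_n=4\log n/((2d+5)\log(1/r))$.

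The main obstacle is the coupled matrix inversion: using Schur complements, the Null block of $M_i^{-1}$ depends on the off-diagonal blocks of $M_i$, and those same off-diagonal blocks feed back into the $\operatorname{Null}^\perp$ Schur complement. Showing that the Null block saturates at $\zeta$ uniformly in $i$ without disrupting the $\sqrt{t_i}$ decay in the $(2d-1)$-dimensional $\operatorname{Null}^\perp\cap g^\perp$ block is where smallness of both $\alpha$ and $\zeta$ must be carried through an inductive argument, leveraging the Lipschitz bounds on $T_v$ and $R_v$ from Corollary \ref{cor: shifted taylor} to control the cross-block perturbations at every step. Once this decoupling is established, the rest of the argument amounts to transcribing the proof of Theorem \ref{thm: full rank Hessian rate} into the effective $2d$-dimensional subsystem.
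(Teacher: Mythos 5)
Your proposal follows essentially the same route as the paper: the shifted Taylor expansion integrated over $\operatorname{Null}$ slices, the block Schur decomposition along $\operatorname{span}(g)\oplus(\operatorname{Null}^\perp\cap g^\perp)\oplus\operatorname{Null}$, the observation that the $\operatorname{Null}$ block of the EGOP is $O(\|\Sigma_{\operatorname{Null}^\perp}\|^2)$ so its inverse saturates at the cap, the reduction of the $\operatorname{Null}^\perp$ dynamics to the full-rank recurrence of Theorem \ref{thm: full rank Hessian rate}, and the final determinant/trace computation feeding Theorem \ref{thm: Generic Rate}. The only presentational difference is that the paper makes your ``the cap pins the Null block at $\Theta(1)$'' step rigorous by first analyzing an auxiliary \emph{filtered} recurrence with the $\operatorname{Null}$ component frozen at $\zeta$ and then showing, via a Thompson-metric perturbation bound, that the capped and filtered dynamics coincide for $\alpha,\zeta$ small --- precisely the inductive decoupling you flag as the main obstacle.
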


    Rather than fully denoising the manifold, our analysis reveals that the manifold tangent, as well as the normal vectors that do not fall in $\operatorname{Null}$, are subject to the same second-order anisotropy as observed in the full rank setting.
    These compose precisely those directions in which the gradient varies at first order. The resulting rate is comparable to that of a standard estimator applied to a completely denoised manifold, yielding an effective dimension of $d+1/2$.

    \begin{example}[Multi-Index Learning]
        When the underlying manifold $\mathcal{M}$ is flat, the supervised noisy manifold hypothesis reduces to a multi-index setting. The key distinction is that displacement away from the manifold not only preserves the function value, but also all higher order function derivatives. In particular, the Hessian has rank no more than $d$. 
        When performing expansions such as Corollary \ref{cor: shifted taylor}, the orthogonal has no contribution to the EGOP, yielding a learning rate of $(d+1)/2$.
        
    \end{example}

    \begin{example}[Continuous Single-Index]
When the underlying manifold is 1-dimensional, the tangent space coincides with the span of the gradient, hence $\pi_{g^\perp} H \pi_{g^\perp} = 0$, removing the second order anisotropy. Thus no normal directions contract asymptotically, leading to an effective dimension of 1 for the learning rate. This corroborates the result of \citet{wu2024conditional}.
\end{example}

    \section{Experiments}\label{sec:expe}

    Details for data generation, evaluation, and training of each method can be found in Appendix \ref{app:expe}.

    \begin{figure}[tb]
        \centering
        \includegraphics[width=0.9\linewidth]{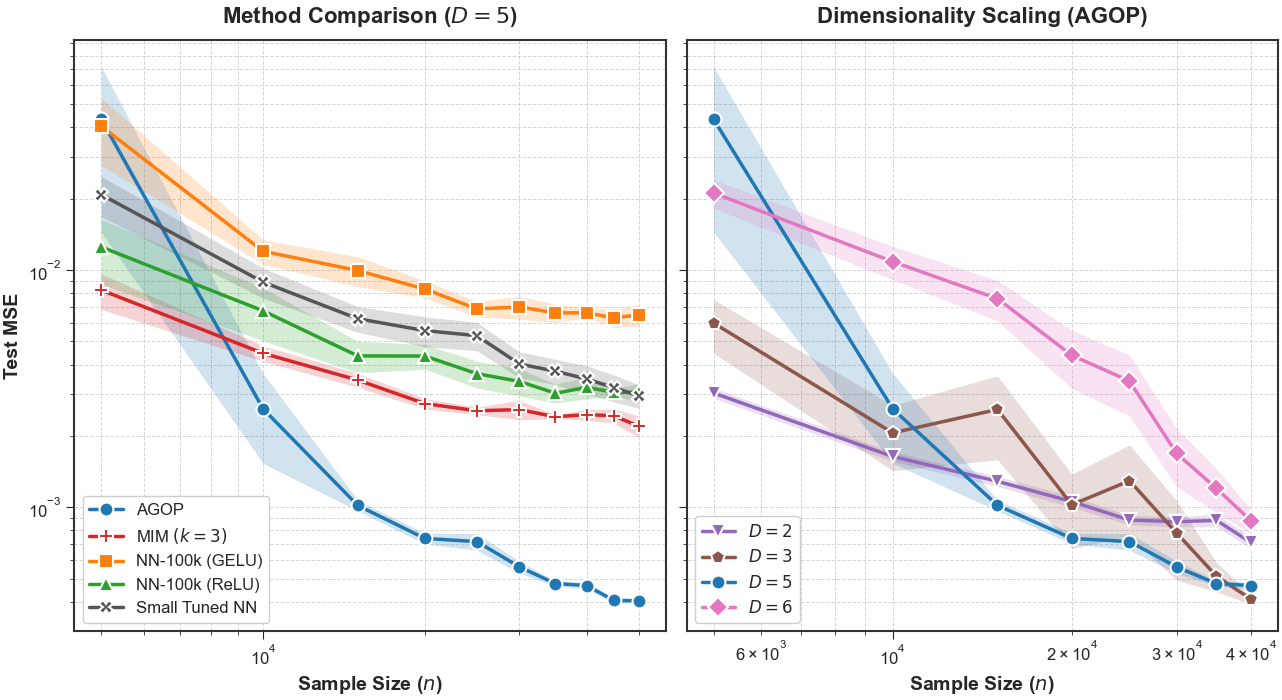}
        \caption{(Left) Comparison of Local EGOP Learning to the performance of two-layer neural network architectures trained on helical data in ambient dimension $D=5$. (Right) Local EGOP Learning trained on helical data in various ambient dimensions $D$.}
        \label{fig: combined}
    \end{figure}
    
    \subsection{Two-layer Neural Network}\label{sec: two-layer}
    
    We assess a variety of two-layer neural network architectures, with details in Appendix \ref{app:expe}.
    The covariate distribution is drawn from helical data, which is continuous single-index, being drawn from a noise contaminated curve, as well as multi-index 2 for $D$ even, and 3 for $D$ odd. Further, by construction it is well-approximated by a low intrinsic dimensional manifold, a typical setting for accelerated learning rates (\citet{kiani2024hardness,liu2021besov}, etc.).

    As demonstrated in Figure \ref{fig: combined}, the two-layer neural networks readily adapt to the multi-index data structure. For $D=5$, the overparameterized ReLU network achieves comparable error to a multi-index architecture pre-specified to learn a 3-dimensional subspace. However, Local EGOP Learning vastly reduces the error for moderate to large sample sizes. This suggests that no architecture was able to fully adapt to the intrinsic continuous single-index structure.

    \subsection{Learning Rate}
    
    We generate helical data in a variety of dimensions, testing the Local EGOP Learning algorithm. As seen in Figure \ref{fig: combined}, the asymptotic learning rate has little dimensional dependence. However, error scaling depends strongly on whether $D$ is even or odd, and this is because the underlying manifold changes in these settings, although its intrinsic dimension does not.

    \subsection{Feature Learning}
    
    In this section, we compare the local feature learning capabilities of a deep transformer-based neural network \citep{gorishniy2023embeddingsnumericalfeaturestabular} to Local EGOP Learning. We consider data generated from a 1-sphere under the supervised noisy manifold hypothesis. On this simple dataset, both algorithms result in nearly complete denoising of the data, as shown in Figures \ref{fig:agop_descent} and \ref{fig:deep_feet}. In Appendix \ref{app:expe}, we provide additional unsupervised embeddings for comparison.

    \subsection{Predicting the backbone angles in Molecular Dynamics (MD)  data}
    \label{sec:md}
    
    Our final example comes from the analysis of molecular geometries.
    The raw data consist of $X,Y,Z$ coordinates for each of the $N_a$ atoms of a molecule, which, due to interatomic interactions, lie near a low-dimensional manifold \citep{dasMSKClementi:06}.
    While the governing equations of the simulated dynamics are unknown, for small organic molecules, certain backbone angles \citep{dasMSKClementi:06} have been observed to vary along the aforementioned low-dimensional manifold. Specifically, for the malonaldehyde molecule, the two backbone angles denoted $\tau_{1,2}$ are shown in Figure \ref{fig:malon}. We used a subsample of molecular configurations of size $n=10^4$ from the MD simulation data of \citet{chmielaTkaSauceSchuPMull:force-fields17} as input data.
    The configuration data, pre-processed as in \citet{koelleZhangMchenyuchia:jmlr22}, consist of $D=50$ dimensional vectors and lie near a 2-dimensional surface with a torus topology (see Figure \ref{fig:malon}). On a hold-out set of 500 test points, Local EGOP Learning yields an MSE of 0.00043, compared to 0.012 for Gaussian kernel Nadaraya-Watson with cross-validated bandwidth selection.

\section{Discussion and Future Work}\label{sec:discussion}

The problem of learning an EGOP \citep{trivedi2014consistent, radhakrishnan2025linear}, its local counterpart \citep{takeda2007kernel}, or gradients \citep{mukherjee2010learning, wu2010learning}, has been a topic of frequent theoretical interest, with applications to image processing \citep{takeda2007kernel, takeda2008deblurring}, machine learning \citep{radhakrishnan2022mechanism, beaglehole2023mechanism}, and dimension reduction \citep{yuan2023efficient}.
Our research establishes a new functional framework to understand these methods, directly relating the EGOP to the MSE learning objective, and recursive kernel learning to a greedy variance reduction strategy.
The resulting algorithm yields not only new efficiency guarantees for point estimation, but also tractable access to a local EGOP metrization.

In this paper, we introduce continuous index learning and the closely related supervised noisy manifold hypothesis. We demonstrate that Local EGOP Learning is able to achieve an intrinsic dimensional learning rate for such data. For generic manifolds, however, it does not completely denoise the covariate distribution.
Our theory shows that a small subspace of the orthogonal will contract at the same rate as the tangent, despite the invariance of $f$. 
This counterintuitive result reflects the fact that our EGOP optimization scheme is derived via an upper-bound on the bias, and is thus not always sharp.
The refinement of this methodology to close this gap is thus of both practical and theoretical importance.

The Local EGOP Learning algorithm is \textit{adaptive}, in that it outperforms standard kernel regression algorithms when the noisy supervised manifold hypothesis fails to hold (Theorem \ref{thm: full rank Hessian rate}), and yields an intrinsic dimensional rate when $f$ is structured. 
In neither case is knowledge of the regularity of $f$ or the latent manifold required to enhance estimation quality. Importantly, this means that users may use the algorithm without a priori knowledge of problem dimensionality, noise level, or whether the supervised manifold hypothesis holds.

 Algorithm \ref{alg:LEGOP} was presented for clarity, not efficiency; there are many possible computational improvements, including partitioning schemes as in \citet{beaglehole2025xrfm}.
 Local EGOP Learning is based firmly on a MSE formulation of regression, restricting its utility to $\ell_2$ loss. This method requires suitable modification to be applied directly to problems such as classification or image generation. This leaves open how an appropriate adaptation should be formulated in these settings. 

    
    Our methodology is primarily framed through a second order, Gaussian relaxation. One promising direction is to drop this restriction, and view $W(\mu_t)$ and $\Sigma(\mu_t)$ as functionals, and to then optimize the MSE using a Wasserstein gradient flow \citep{ambrosio2006gradient} or other methods that operate at the distribution level. In particular, this would allow for non-ellipsoidal localizations, potentially allowing for improved learning rates in settings where the level-set is curved. A fundamental hurdle for any of these approaches is that, generally, the EGOP form can only upper bound localization-induced bias, and the resulting bound is not always sharp.

\subsubsection*{Acknowledgements}
The authors would like to thank Misha Belkin for the initial suggestion of comparing Local EGOP Learning to the performance of two-layer neural networks, and Brennan Dury for suggestions and experiments related to earlier conceptions of this research.

        
\bibliography{refs}
\bibliographystyle{icml2026}

\newpage

\onecolumn 
\appendix

\begin{center}
    {\Large \textbf{Local EGOP for Continuous Index Learning: Supplementary Material}}
\end{center}
\bigskip

\section{Experiment Details}\label{app:expe}

The simulations were run on a machine of Intel\textsuperscript{\textregistered} Xeon\textsuperscript{\textregistered} processors with 48 CPU cores, and 50GB of RAM.

\begin{figure}[ht]
    \centering
    \begin{subfigure}[b]{0.55\linewidth}
        \centering
        \includegraphics[width=\linewidth]{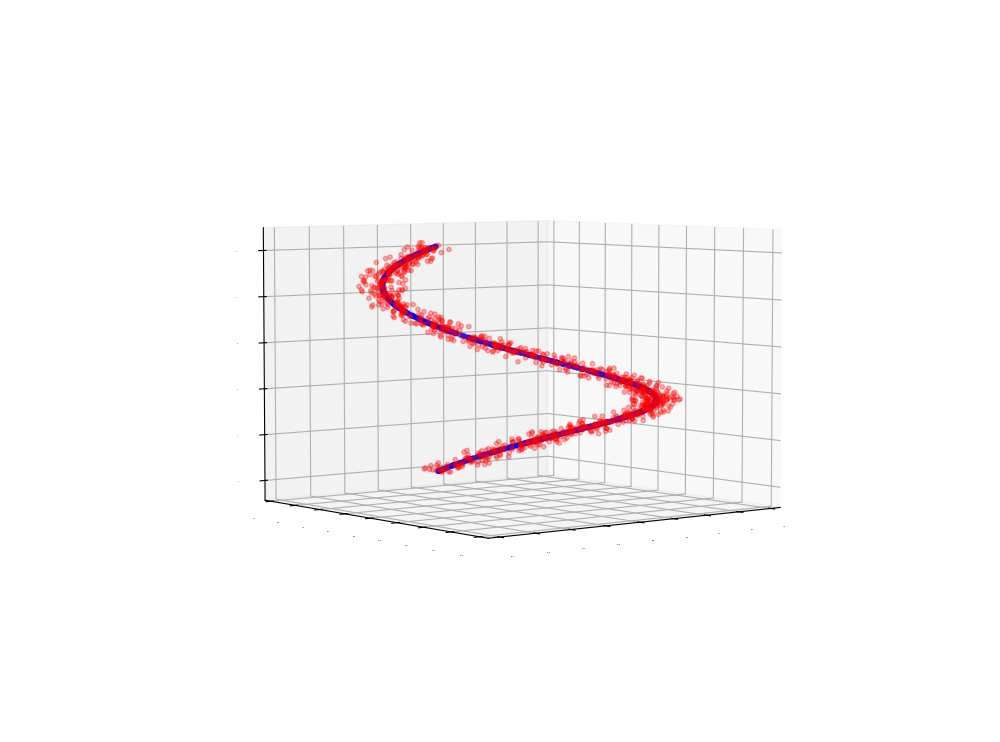}
        \caption{3-dimensional Helical data.}
        \label{fig:helix}
    \end{subfigure}
    \hfill
    \begin{subfigure}[b]{0.4\linewidth}
        \centering
        \includegraphics[width=\linewidth]{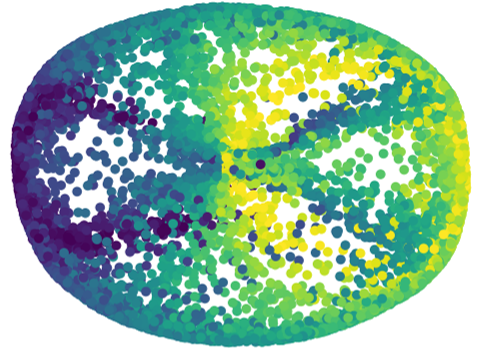}
        \caption{Coordinates of Malonaldehyde embedded into 2 dimensions via Diffusion Maps \citep{coifman2006diffusion} and colored by the backbone angles $\tau_1$.}
        \label{fig:malon}
    \end{subfigure}
    \caption{Visualizations of the experimental datasets.}
    \label{fig:helix_malon}
\end{figure}

\subsection{General Set-up}

The following simulations to empirically justify our theoretical results. First, expanding on the noisy manifold setting, we show how the regression error rates of Local EGOP Learning remain invariant to the injection of high-dimensional noise. We demonstrate that two-layer neural networks are not able to efficiently learn in the noisy manifold setting, with a sharp decrease in performance compared to Local EGOP Learning. We next compare the performance of our method to that of a deep neural network on toy data. In particular, we show how the feature embeddings produced by transformers trained on a simple example are qualitatively similar to the localizations generated by our procedure. Finally, we apply this procedure to estimate backbone angles in Molecular Dynamics (MD) data, where we leverage the noisy manifold structured data to improve prediction quality.

In our simulations, we consider helical data, parameterized by a curve $\theta(t) = (\sin(t + w_1), \cos(t+w_1), \sin(t+w_2), \cos(t+w_2),\dots, g(t))$, where $g(t)=t$ is a linear term included if $D$ is odd dimensional, and the $w_i$ are constant offsets taken as a mesh from $0$ to $2\pi$. We rescale this data by a constant $\tau$, then contaminate it with uniform orthogonal noise of radius $r$. In our simulations, we set $\tau = 0.8,\, r = 0.5$, and sample $t$ from 0 to $2\pi$. See Figure \ref{fig:helix} for a visualization. For the outcomes $y$, we generate a third-degree polynomial with coefficients uniformly sampled from $(-3, 3)$, then evaluate it at the projection point onto $\theta(t)$.

In our synthetic experiments, to generate labeled data we randomly generated a homogeneous cubic polynomial
\[
f(x) = \sum_{|\alpha| \leq 3} c_\alpha x^\alpha,
\]
where $c_\alpha \sim \mathrm{Uniform}[-3, 3]$. The resulting function $f$ is then evaluated at the unperturbed datapoints before the features $X$ were contaminated with noise orthogonal to the underlying manifold $\mathcal{M}$. The coefficients $c_\alpha$ are randomly sampled across each replicate of the experiments. In our plots, confidence bands are displayed with a width of $\pm 1$ standard error. We generically apply the below hyperparameters to the Local EGOP Learning algorithm in each example.
\begin{itemize}
    \item Number of EGOP iterations \( T_{\text{test}} = 150 \),
    \item Monte Carlo sample size \( m = 300 \),
    \item Bandwidth decay $h_t = (1+t)^{-1.2}$,
    \item Initialization scale \( \alpha = 0.2 \),
    \item Exclusion radius \(\varepsilon = 1.6\).
\end{itemize}

\subsection{Two-layer Neural Network}
We train fully connected NNs with both ReLU and GELU activations. All first layers use Kaiming initialization, and output layers are initialized at variance $1/\sqrt{m}$ where $m$ is hidden width. All networks are trained with Adam.

For moderate-scale neural models, we conduct a grid search over hyperparameters. Widths are selected from $\{128, 256, 512, 1024, 2048, 4096, 8192\}$, learning rates are selected from $\{10^{-3}, 3 \times 10^{-3}\}$, weight decay from $\{0, 10^{-5}, 10^{-4}\}$, and batch size from $\{16, 32, 64\}$. The default Adam momentum parameters are used. Early stopping is applied with patience $200$ and improvement tolerance $10^{-4}$ monitored on a validation split. In each experiment, only the optimal test error of these methods is aggregated into the ``Small Tuned NN'' plot.

We train overparameterized two-layer models with width $100{,}000$, Adam with learning rate $0.003$, batch size $64$, and fixed weight decay $10^{-4}$, applying early stopping with patience $300$ and improvement tolerance $10^{-4}$.

We also consider multi-index models $f(x) = g(U x)$, $g$ a two-layer NN, with width 256 and GELU activations, $U$ a rank three projection matrix updated at each iteration.

\subsection{Feature Learning}

We generated $n = 50{,}000$ samples on a spherical cap in $\mathbb{R}^2$, restricted to an angular width of $25^\circ$. Each point $x_i$, we evaluate $f(x_i)$ for $f$ a cubic polynomial with $\mathrm{Uniform}[-1,1]$ coefficients. The features $x_i$ are then scaled radially by a random factor $r_i \sim \mathrm{Uniform}[r, R]$, with $r = 0.6$ and $R = 1.8$. Additive Gaussian noise with standard deviation $0.05$ was applied to the labels. We set $x^* = (1, 0)^T$.

We trained an \texttt{FTTransformer} regression model \citep{gorishniy2021revisiting} on this dataset using the \texttt{rtdl} and \texttt{zero} frameworks. The network was trained for 40 epochs with a batch size of 1024 and learning rate $2 \times 10^{-3}$. After every 50 training steps, we computed weights:
\[
w_i = \frac{\exp(-\|z_i - z^*\|^2 / 8)}{\sum_j \exp(-\|z_j - z^*\|^2 / 8)},
\]
where $z_i$ denotes the learned feature representation of $x_i$.

\subsection{Unsupervised Embedding}

As noted in works such as \citep{coifman2006diffusion, elkaroui2010spectrum, landa2023robust, kokotnoisy}, unsupervised methods such as diffusion maps are robust to noise. The resulting low-frequency eigenfunctions closely match those of the underlying manifold. 
However, as demonstrated in Figure \ref{fig: DM}, they fail to capture this structure exactly, and thus cannot match the metrization produced by Local EGOP Learning and deep learning architectures in Figure \ref{fig:agop_descent}.

\begin{figure}[tb]
    \centering
    \includegraphics[width=0.5\linewidth]{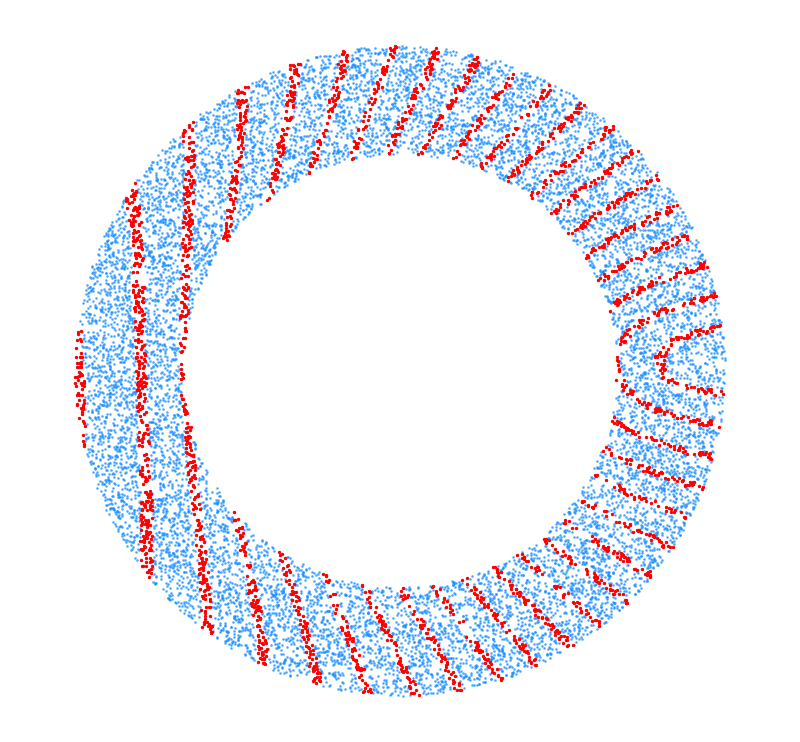}
    \caption{Data sampled from an annulus, with red rings displaying uniform distances in diffusion map embedding distance from the point $(1,0)^T$, for a 2-dimensional embedding.}
    \label{fig: DM}
\end{figure}

\section{Proofs}\label{app:proofs}

In our theory, we focus on the kernel $k = \exp(-\|x\|^2)$. We discuss general kernels in Appendix \ref{app: general kernels}.

\subsection{Bias Control via EGOP}

\subsubsection{Bias Bound}\label{app: bias bound}

To derive the upper bound from Lemma \ref{lem: Poincare}, we argue in stages. We first decompose the kernel smoother into two parts. The first is given by a pure Gaussian convolution and a density dependent covariance relating $f$ and $p$. Both terms can readily be controlled via the EGOP form, leading to our bound. In the following, we absorb the bandwidth into the metric and covariance, suppressing $h$ in the notation.

Define $\mu_{M} : = N(x^*, M^{-1}),$ $\mathbb{E}_{M}[\cdot ] := \mathbb{E}_{\mu_{M}}[\cdot ]$, $\operatorname{Cov}_M$, $\operatorname{Var}_M$ similarly.

\begin{lemma}[Basic Decomposition]
    \[
    P_M f = \mathbb{E}_{M} f + \frac{\operatorname{Cov}_{M}[f, p]}{\mathbb{E}_{M} p} =: G_M f + A_M f
    \]
\end{lemma}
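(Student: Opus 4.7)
The plan is to recognize that the Mahalanobis-weighted kernel $k(\|M^{1/2}(y-x^*)\|/\sqrt{2}) = \exp\!\bigl(-\tfrac12 (y-x^*)^T M (y-x^*)\bigr)$ is, up to a normalizing constant $Z_M := (2\pi)^{D/2}(\det M)^{-1/2}$, exactly the Lebesgue density $\phi_M$ of the Gaussian target distribution $\mu_M = N(x^*, M^{-1})$. Under the standing assumption that $P$ admits a density $p$ with respect to Lebesgue measure, the numerator and denominator defining $P_M f$ can therefore be rewritten as expectations against $\mu_M$ of the product $fp$ and $p$ respectively, with the constant $Z_M$ cancelling.

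Carrying this out, I first substitute $dP(y) = p(y)\,dy$ and pull out $Z_M$:
\[
P_M f = \frac{\int \phi_M(y) f(y) p(y)\, dy}{\int \phi_M(y) p(y)\, dy} = \frac{\mathbb{E}_M[f\cdot p]}{\mathbb{E}_M[p]}.
\]
Then I apply the elementary covariance identity $\mathbb{E}_M[fp] = \mathbb{E}_M[f]\,\mathbb{E}_M[p] + \operatorname{Cov}_M[f,p]$ in the numerator and divide term by term:
\[
P_M f = \mathbb{E}_M[f] + \frac{\operatorname{Cov}_M[f,p]}{\mathbb{E}_M[p]},
\]
which is exactly the claimed decomposition $G_M f + A_M f$.

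There is essentially no obstacle here beyond bookkeeping: the content of the lemma is that the kernel normalization is chosen to match a Gaussian measure, so the smoothed estimator splits into (i) a pure Gaussian convolution of $f$ against $\mu_M$ and (ii) a correction term that records how the sampling density $p$ is correlated with $f$ inside the localization window. The only assumption needed beyond the setup in Section 3 is that $P$ is absolutely continuous with density $p$, and that $\mathbb{E}_M[p] > 0$, which holds since $p$ is continuous and positive on a neighborhood of $x^*$ (else the kernel estimator itself is ill-defined); the interchange of integration and the use of Fubini are justified because $k$ is bounded, $p$ is integrable, and $f$ is bounded by assumption.
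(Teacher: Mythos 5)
Your proof is correct and follows essentially the same route as the paper's: both rewrite $P_M f$ as $\mathbb{E}_M[fp]/\mathbb{E}_M[p]$ and then apply the covariance identity $\mathbb{E}_M[fp]=\mathbb{E}_M[f]\mathbb{E}_M[p]+\operatorname{Cov}_M[f,p]$. The only difference is that you explicitly justify the identification of the normalized kernel with the Gaussian density of $\mu_M$, a step the paper leaves implicit.
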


\begin{proof}
    We can simply compute
    \begin{align*}
    P_M f &= \frac{\mathbb{E}_{M}[fp]}{\mathbb{E}_{M}[p]} = \frac{\mathbb{E}_{M}[f]\mathbb{E}_{M}[p]}{\mathbb{E}_{M}[p]} + \frac{\operatorname{Cov}_M[f, p]}{\mathbb{E}_{M}[p]}\\
    & = \mathbb{E}_{M}[f] + \frac{\operatorname{Cov}_M[f, p]}{\mathbb{E}_{M}[p]} =: G_M[f]+ A_M f.
    \end{align*}
\end{proof}

\begin{lemma}[Gaussian Poincar\'{e}]\label{lem: Gaussian Bias}
    \[
    \int (\mathbb{E}_{M}[f] - f)^2\, d\mu_M \leq W(\mu_M)
    \]
\end{lemma}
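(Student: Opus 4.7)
The claim is a direct restatement of the anisotropic Gaussian Poincar\'{e} (Brascamp--Lieb) inequality, so my plan is to reduce it to the standard isotropic Poincar\'{e} bound via an affine change of variables.

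\textbf{Step 1: Reformulate as a variance.} The left-hand side is by definition
\[
\int (\mathbb{E}_M[f] - f)^2\, d\mu_M = \operatorname{Var}_{\mu_M}(f),
\]
while the right-hand side, using $\Sigma(\mu_M) = M^{-1}$, is $\int \nabla f^T M^{-1} \nabla f\, d\mu_M$. So it suffices to show $\operatorname{Var}_{\mu_M}(f) \leq \int \nabla f^T M^{-1} \nabla f\, d\mu_M$.

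\textbf{Step 2: Whiten.} Let $Z \sim N(0, I_D)$ and define $g(z) := f(x^* + M^{-1/2} z)$. Then $x^* + M^{-1/2} Z \sim \mu_M$, so $\operatorname{Var}_{\mu_M}(f) = \operatorname{Var}_{N(0,I)}(g)$. A direct chain-rule computation gives $\nabla g(z) = M^{-1/2} \nabla f(x^* + M^{-1/2} z)$, hence
\[
\|\nabla g(z)\|^2 = \nabla f(y)^T M^{-1} \nabla f(y), \qquad y = x^* + M^{-1/2} z,
\]
so $\mathbb{E}_{N(0,I)}[\|\nabla g\|^2] = \int \nabla f^T M^{-1} \nabla f\, d\mu_M$.

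\textbf{Step 3: Apply standard Gaussian Poincar\'{e}.} It therefore suffices to invoke the classical isotropic inequality $\operatorname{Var}_{N(0,I)}(g) \leq \mathbb{E}_{N(0,I)}[\|\nabla g\|^2]$, which can be obtained, e.g., via Hermite polynomial expansion (noting that the degree-$k$ Hermite modes are eigenfunctions of the Ornstein--Uhlenbeck generator with eigenvalue $k$, so the spectral gap is $1$) or via the Ornstein--Uhlenbeck semigroup interpolation argument. Under the mild regularity assumed on $f$ ($f \in C^4$ with uniformly bounded derivatives), $g$ is smooth with tempered growth, so the inequality applies. Combining Steps 1--3 yields the result, and in fact with the sharp constant $1$ rather than the asymptotic $O(\cdot)$ stated in Lemma~\ref{lem: Poincare}.

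\textbf{Obstacles.} There is no real obstacle at this level: the content is the Gaussian Poincar\'{e} inequality, and the only subtlety is checking that integration by parts and the semigroup argument are justified under the regularity assumptions on $f$, which follows from the uniform bounds on $f$ and its derivatives. The harder work in the surrounding Lemma~\ref{lem: Poincare} lies in controlling the $A_M f = \operatorname{Cov}_M[f,p]/\mathbb{E}_M[p]$ term (which is the reason for the $O(\cdot)$ rather than an exact inequality in the statement of Lemma~\ref{lem: Poincare}), but that is handled in the subsequent lemmas, not here.
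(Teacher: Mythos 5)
Your proposal is correct and matches the paper's approach: the paper proves this lemma by directly citing the Gaussian Poincar\'{e} inequality (Bakry--Gentil--Ledoux, Proposition 4.1.1), and your whitening reduction to the isotropic case is precisely the standard derivation of that anisotropic statement (the paper itself carries out the same change-of-variables argument explicitly in its general-kernel appendix). No gaps.
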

\begin{proof}
    This is simply the Gaussian Poincar\'{e} inequality, see for instance \cite{bakry2013analysis}[Proposition 4.1.1].
\end{proof}

\begin{lemma}[Poincar\'{e} $\times$ $p$ variance]\label{lem: density bias}
    \[
    (A_M f)^2 \leq W(\mu_M) \frac{\operatorname{Var}_M(p)}{\mathbb{E}_{M}[p]^2}
    \]
\end{lemma}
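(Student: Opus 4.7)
The plan is to recognize that $A_M f = \operatorname{Cov}_M[f,p]/\EM[p]$ factors cleanly into a covariance numerator and a constant-in-$f$ denominator, so the whole problem reduces to bounding $\operatorname{Cov}_M[f,p]^2$.

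First I would apply the Cauchy--Schwarz inequality for covariances, which gives
\[
\operatorname{Cov}_M[f,p]^2 \;\leq\; \operatorname{Var}_M(f)\,\operatorname{Var}_M(p).
\]
Dividing by $\EM[p]^2$ yields
\[
(A_M f)^2 \;\leq\; \operatorname{Var}_M(f) \cdot \frac{\operatorname{Var}_M(p)}{\EM[p]^2},
\]
so it remains only to control $\operatorname{Var}_M(f)$ by $W(\mu_M)$.

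For that step I would invoke exactly the Gaussian Poincar\'{e} inequality already used in Lemma~\ref{lem: Gaussian Bias}: under the Gaussian localization $\mu_M = N(x^*, M^{-1})$ with covariance $\Sigma(\mu_M) = M^{-1}$, Poincar\'{e} gives $\operatorname{Var}_M(f) \leq \int \nabla f^T \Sigma(\mu_M) \nabla f\, d\mu_M = W(\mu_M)$. Substituting yields the claim.

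The argument has no real obstacle — it is a two-line chain of Cauchy--Schwarz followed by Poincar\'{e}. The only mild subtlety is that the same Gaussian Poincar\'{e} inequality from the preceding lemma has to be read in its \emph{variance} formulation $\operatorname{Var}_M(f) \leq W(\mu_M)$ rather than its $\int(\EM f - f)^2\, d\mu_M$ formulation, but these are literally the same statement, so nothing new is needed.
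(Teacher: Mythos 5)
Your proof is correct and is essentially identical to the paper's: both apply Cauchy--Schwarz to $\operatorname{Cov}_M[f,p]^2$ and then bound $\operatorname{Var}_M(f)$ by $W(\mu_M)$ via the Gaussian Poincar\'{e} inequality. Your observation that $\int(\EM f - f)^2\,d\mu_M$ is literally $\operatorname{Var}_M(f)$ is also right, so nothing is missing.
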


\begin{proof}
    \begin{align*}
    A_M f^2 &= \frac{\operatorname{Cov}_M[f, p]^2}{\mathbb{E}_{M}[p]^2} \leq \frac{\operatorname{Var}_M(f) \operatorname{Var}_M(p)}{\mathbb{E}_{M}[p]^2} \leq W(\mu_M) \frac{\operatorname{Var}_M(p)}{\mathbb{E}_{M}[p]^2},
    \end{align*}
    applying the Gaussian Poincar\'{e} inequality \cite{bakry2013analysis}[Proposition 4.1.1] for the last line.
\end{proof}

\begin{proof}[Proof of Lemma \ref{lem: Poincare}]
We can apply Lemmas \ref{lem: Gaussian Bias} and \ref{lem: density bias} immediately to yield
\begin{align*}
    \int (P_M - f)^2\, d\mu_M &= \int (\mathbb{E}_{M}[f] - f + A_M f)^2\, d\mu_M = \int (\mathbb{E}_{M}[f] - f)^2\, d\mu_M + A_M f^2 \\
    &\leq \left(1 + \frac{\operatorname{Var}_M(p)}{\mathbb{E}_{M}[p]^2}\right)W(\mu_M).
\end{align*}
Note that the density dependent factor $\frac{\operatorname{Var}_M(p)}{\mathbb{E}_{M}[p]^2}$ is bounded by the continuity of $p$ and the capped covariance assumptions.
\end{proof}

\subsubsection{Variance Bound}

We operate by a typical Nadaraya-Watson style argument. Our first steps are to show that the denominator of our estimator concentrates, allowing us to treat it as a constant up to a negligible residual. Let $\gamma_M \propto \det(M)^{1/2}$ by the normalization factor of $\mu_M$. Define
\[
\hat{C}'_M := \gamma_M \hat{C}_M := \gamma_M \frac{1}{n} \sum_{i=1}^n \exp(-(X_i - x^*)^T M (X_i - x^*) ).
\]

\begin{lemma}[Denominator Concentration]\label{lem: denom concentration}
    \[
    \mathbb{P}(|\hat{C}'_M - \mathbb{E}_{M}(p)| \leq \mathbb{E}_{M}(p)/2) \leq 2\exp\left(-\frac{n \mathbb{E}_{M}(p)}{10 \gamma_M}\right)
    \]
\end{lemma}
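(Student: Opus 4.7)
The plan is to recognize $\hat{C}'_M$ as an empirical average of $n$ i.i.d.\ bounded random variables and apply a Bernstein-type bound. Set $Z_i := \gamma_M \exp(-\tfrac{1}{2}(X_i-x^*)^T M (X_i-x^*))$, so $\hat{C}'_M = \tfrac{1}{n}\sum_i Z_i$; with the natural normalization $\gamma_M = \det(M)^{1/2}/(2\pi)^{D/2}$, the integrand defining $\mathbb{E}[Z_1]$ is $p(y)$ times the $N(x^*, M^{-1})$ density, hence $\mathbb{E}[Z_1] = \EM(p)$, matching the centering in the lemma. (I read the lemma's event $\{|\hat{C}'_M - \EM(p)| \leq \EM(p)/2\}$ as a typographical slip for the large-deviation event $\{|\hat{C}'_M - \EM(p)| \geq \EM(p)/2\}$; the latter is what the variance argument in the rest of the appendix invokes.)

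I would next collect the two inputs needed for Bernstein. Boundedness is immediate: $0 \leq Z_i \leq \gamma_M$. For the second moment I would avoid the computation via the $N(x^*,(2M)^{-1})$ density and instead use the sharper inequality $Z_i^2 \leq \gamma_M Z_i$, giving $\mathrm{Var}(Z_i) \leq \gamma_M \EM(p)$. This particular form is what produces the $\EM(p)/\gamma_M$ scaling in the exponent of the claim, rather than something scaling with a density evaluated at the wider bandwidth.

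The final step is Bernstein's inequality applied to $\hat{C}'_M$ at deviation $t = \EM(p)/2$:
\begin{align*}
\mathbb{P}\!\left(\left|\hat{C}'_M - \EM(p)\right| \geq \tfrac{1}{2}\EM(p)\right)
&\leq 2\exp\!\left(-\frac{n(\EM(p))^2/4}{2\gamma_M\EM(p) + \tfrac{1}{3}\gamma_M\EM(p)}\right) \\
&= 2\exp\!\left(-\frac{3\,n\EM(p)}{28\,\gamma_M}\right),
\end{align*}
and since $3/28 > 1/10$, the claimed bound follows. The main obstacle I anticipate is minor: the variance bound via $Z_i^2 \leq \gamma_M Z_i$ is the one non-routine observation, and a brief arithmetic check verifies the stated constant; everything else is a textbook plug-in of Bernstein.
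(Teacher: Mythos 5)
Your proof is correct and follows essentially the same route as the paper's: identify $\hat{C}'_M$ as a mean of i.i.d.\ variables bounded in $(0,\gamma_M]$, bound the variance by $\gamma_M \EM(p)$ via $Z_i^2 \leq \gamma_M Z_i$ (the paper's displayed variance computation is exactly this bound), and apply Bernstein at deviation $\EM(p)/2$ to get the exponent $3/28 \geq 1/10$. You are also right that the inequality direction in the stated event is a typo for the large-deviation event $\{|\hat{C}'_M - \EM(p)| \geq \EM(p)/2\}$, a slip that appears in the paper's own proof as well.
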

\begin{proof}
    We prove this by applying Bernstein's inequality. First, observe $\mathbb{E}[\hat{C}_M] = G_M(p)$. Next, $\exp(-v^T M v) \in (0, 1]$ for any $v$, hence $\hat{C}'_M = \frac{\gamma_M}{n} \sum_{i=1}^n \exp(-(X_i - x)^T M (X_i - x) )$ is the sample mean of iid bounded random variables in $(0, \gamma_M]$. Further, we can compute
    \begin{align*}
    \operatorname{Var}[\gamma_M \exp(-(X_i - x)^T M (X_i - x) )] &\leq \gamma_M \mathbb{E}[\gamma_M \exp(-(X_i - x)^T M (X_i - x) )^2]\\
    &\leq \gamma_M \mathbb{E}[\exp(-(X_i - x)^T M (X_i - x) )] = \gamma_M G_M(p).
    \end{align*}
    Hence,
    \begin{align*}
        \mathbb{P}(|\hat{C}'_M - G_M(p)|&\leq - G_M(p)/2) \\
        &\leq 2 \exp\left(- \frac{n G_M(p)^2}{8 \operatorname{Var}(\gamma_M \exp(-(X_i - x)^T M (X_i - x) )) +\frac{4}{3} \gamma_M G_M(p)^2  }\right)\\
        &\leq 2 \exp\left(- \frac{n G_M(p)^2}{8 \gamma_M G_M(p) +\frac{4}{3} \gamma_M G_M(p)  }\right) \\
        &\leq 2 \exp\left(- \frac{n G_M(p)}{10\gamma_M} \right).
    \end{align*}
\end{proof}

\begin{lemma}[Full Variance Expression]\label{lem: variance full}
    Assume that the noise $\varepsilon_i$ have finite fourth moments. Then, there exists a constant $C>0$ such that
    \begin{align*}
    \mathbb{E}[(\hat{P}_M(f) - P_M(f))^2] &\leq 2 \frac{ \mathbb{E}[\gamma_M^2 \exp(-(X_i - x)^T M (X_i - x))^2\{Y_i - P_M(f)\}^2]}{n G_M(p)^2}\\
    &\quad\quad + 2\frac{\mathbb{E}[\{\gamma_M \exp(-(X_i - x)^T M (X_i - x)) Y_i\}^2]}{nG_M(p)^2} + C \exp\left(- \frac{n G_M(p)}{10\gamma_M} \right)
    \end{align*}
\end{lemma}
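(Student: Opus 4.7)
My plan is to follow the standard Nadaraya--Watson variance decomposition: split the probability space according to a concentration event for the random denominator $\hat C_M$, bound the good event via direct moment computations, and handle the bad event through the exponential tail supplied by Lemma \ref{lem: denom concentration}.

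I would begin by denoting the (unnormalized) kernel weight at sample $i$ by $k_i$ and the numerator of the estimator by $\hat N_M := \tfrac{1}{n}\sum_i k_i Y_i$, so that $\hat P_M - P_M = (\hat N_M - P_M\hat C_M)/\hat C_M$. The critical algebraic fact is that $\hat N_M - P_M\hat C_M = \tfrac{1}{n}\sum_i k_i(Y_i - P_M(f))$ is a mean-zero average of iid random variables, because $\mathbb{E}[k_1 Y_1] = P_M(f)\,C_M$ by the very definition of $P_M$. Two moment identities will drive the bound:
\[
\mathbb{E}\bigl[(\hat N_M - P_M\hat C_M)^2\bigr] = \tfrac{1}{n}\,\mathbb{E}[k_1^2(Y_1-P_M)^2], \qquad \mathrm{Var}(\hat N_M) \le \tfrac{1}{n}\,\mathbb{E}[k_1^2 Y_1^2].
\]

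Next I would introduce the good event $A := \{|\hat C'_M - G_M(p)| \le G_M(p)/2\}$, whose complement has probability at most $2\exp(-nG_M(p)/(10\gamma_M))$ by Lemma \ref{lem: denom concentration}, and on which $\hat C_M \ge C_M/2$. Decomposing
\[
\mathbb{E}[(\hat P_M - P_M)^2] = \mathbb{E}[(\hat P_M - P_M)^2\mathbf 1_A] + \mathbb{E}[(\hat P_M - P_M)^2\mathbf 1_{A^c}],
\]
on $A$ I would combine two views of the error. The direct ratio bound $(\hat P_M - P_M)^2\mathbf 1_A \le 4(\hat N_M - P_M\hat C_M)^2/C_M^2$, together with the first moment identity and the conversion $1/C_M^2 = \gamma_M^2/G_M(p)^2$, yields the first term of the Lemma. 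The additive split $\hat P_M - P_M = (\hat N_M - \mathbb{E}\hat N_M)/\hat C_M - P_M(\hat C_M - \mathbb{E}\hat C_M)/\hat C_M$ gives, via $\mathrm{Var}(\hat N_M)$, the second term; the residual $P_M^2\mathrm{Var}(\hat C_M)/C_M^2$ is controlled because $|P_M|\le\|f\|_\infty$ and can be absorbed into the two main terms up to the stated constant $2$.

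On the bad event $\hat C_M$ admits no deterministic lower bound, so the ratio form is useless; instead I exploit the convex-combination representation $\hat P_M = \sum_i \omega_i Y_i$ with $\omega_i := k_i/\sum_j k_j$, whence $\hat P_M^2 \le \max_i Y_i^2$. Writing $(\hat P_M - P_M)^2 \le 2\hat P_M^2 + 2P_M^2$ and using the assumed finite $4$th moment of $\varepsilon$ together with boundedness of $f$, Cauchy--Schwarz yields
\[
\mathbb{E}[\hat P_M^2 \mathbf 1_{A^c}] \le \sqrt{\mathbb{E}[\max_i Y_i^4]}\,\sqrt{\mathbb{P}(A^c)} \le \sqrt{n\,\mathbb{E}[Y_1^4]}\cdot\sqrt{2}\,\exp\!\bigl(-nG_M(p)/(20\gamma_M)\bigr).
\]
The polynomial prefactor $\sqrt n$ is absorbed into the constant $C$, and the exponent rate $(20\gamma_M)^{-1}$ can be restored to the stated $(10\gamma_M)^{-1}$ by enlarging $C$ (with a trivial separate handling of small $n$, for which the left-hand side is bounded a priori). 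The main obstacle, and the reason a Bernstein-type bound on $\hat C_M$ was required in the first place, is precisely this absence of a deterministic lower bound for $\hat C_M$ on $A^c$; it is circumvented by the weighted-average rather than ratio form of $\hat P_M$, combined with the exponentially small probability of $A^c$.
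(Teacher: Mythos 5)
Your proposal follows the paper's proof almost step for step: the same concentration event for the (rescaled) denominator supplied by Lemma \ref{lem: denom concentration}, the same good/bad--event split, the same mean-zero identity $\mathbb{E}[k_1(Y_1-P_M(f))]=0$ driving the $1/n$ in the two main terms, and the same convex-combination representation $\hat P_M=\sum_i\omega_iY_i$ to control the bad event where the denominator has no deterministic lower bound. The good-event algebra is organized slightly differently (you use a direct ratio bound plus an additive split, the paper expands $1/\hat C'_M$ around $1/G_M(p)$), but this is cosmetic.

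The one genuine gap is your final claim that the exponent rate $(20\gamma_M)^{-1}$ produced by Cauchy--Schwarz ``can be restored to the stated $(10\gamma_M)^{-1}$ by enlarging $C$.'' The inequality runs the wrong way: $\exp(-x/20)\ge\exp(-x/10)$, so a bound of the form $C'\sqrt{n}\,\exp(-nG_M(p)/(20\gamma_M))$ does not imply one of the form $C\exp(-nG_M(p)/(10\gamma_M))$; their ratio grows like $\sqrt{n}\,\exp(+nG_M(p)/(20\gamma_M))$, which is unbounded in $n$, and handling small $n$ separately does not help. The honest output of your (correct) Cauchy--Schwarz step is the lemma with $10$ replaced by $20$, which is harmless for every downstream use (Lemma \ref{lem: kernel variance} only needs the term to be $o(\sqrt{\det M}/n)$), but it is not the statement as written. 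For what it is worth, the paper's own proof applies Cauchy--Schwarz and then records $\mathbb{P}(E^c)$ to the first power rather than $\sqrt{\mathbb{P}(E^c)}$, which is likewise not justified for unbounded $Y_i$; so the discrepancy you noticed is real, and the clean fix is to weaken the constant in the exponent rather than to absorb it into $C$.
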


\begin{proof}
    Let $E$ be the event $|\hat{C}_M - G_M(p)| \leq G_M(p)/2$. We compute
    \begin{align*}
    \hat{P}_M(f) - P_M(f) &= \frac{\frac{1}{n}\sum_{i=1}^n \gamma_M\exp(-(X_i - x^*)^TM(X_i - x)) Y_i}{\hat{C}'_M} - \frac{G_M(fp)}{G_M(p)}\\
    & = \frac{\frac{1}{n}\sum_{i=1}^n  [\gamma_M\exp(-(X_i -x^*)^TM(X_i -x^*)) \{Y_i - P_M(f)\}]}{G_M(p)} \mathbf{1}_E\\
    &\quad\quad + \frac{G_M(p) - \hat{C}'_M}{G_M(p)\hat{C}'_M} \frac{1}{n}\sum_{i=1}^n [\gamma_M\exp(-(X_i -x^*)^TM(X_i -x^*)) Y_i] \mathbf{1}_E\\
    &\quad \quad + \left\{\frac{\frac{1}{n}\sum_{i=1}^n \gamma_M\exp(-(X_i -x^*)^TM(X_i -x^*)) Y_i}{\hat{C}'_M} - \frac{G_M(fp)}{G_M(p)}\right\} \mathbf{1}_{E^c}.
    \end{align*}
    Hence,
    \begin{align*}
     \mathbb{E}[(\hat{P}_M(f) - P_M(f))^2] &\leq 2 \frac{ \mathbb{E}[\gamma_M^2 \exp(-(X_i -x^*)^T M (X_i -x^*))^2\{Y_i - P_M(f)\}^2]}{n P_M(p)^2}\\
     &\quad\quad + 2\mathbb{E}\left[\left\{ \frac{G_M(p) - \hat{C}'_M}{G_M(p)\hat{C}'_M} \frac{1}{n}\sum_{i=1}^n [\gamma_M\exp(-(X_i -x^*)^TM(X_i -x^*)) Y_i]\right\}^2 \mathbf{1}_E\right]\\
     &\quad\quad + 2\mathbb{E}\left[\left\{\hat{P}_M(f) - P_M(f) \right\}^2 \mathbf{1}_{E^c}\right].
    \end{align*}
    The first term is as desired. Next, by definition of the event, we have
    \[
    \mathbb{E}\left[\left\{ \frac{G_M(p) - \hat{C}'_M}{G_M(p)\hat{C}'_M} \frac{1}{n}\sum_{i=1}^n [\gamma_M \exp(-(X_i -x^*)^TM(X_i -x^*)) Y_i]\right\}^2 \mathbf{1}_E\right] \leq \frac{\mathbb{E}[\left\{\gamma_M \exp(-(X_i -x^*)^TM(X_i -x^*)) Y_i\right\}^2]}{n G_M(p)^2}.
    \]
    For the remaining term, observe that
    \[
    \hat{P}_M(f) = \sum_{i=1}^n w_i Y_i
    \]
    where $w_i$ are a convex combination. Hence,
    \[
    \mathbb{E}\left[\left\{\hat{P}_M(f) - P_M(f) \right\}^2 \mathbf{1}_{E^c}\right] \leq \sqrt{\mathbb{E}\left[\left\{2\hat{P}_M(f)^2 + 2P_M(f)^2 \right\}^2\right]} \mathbb{P}[E^c]\leq C \exp\left(- \frac{n G_M(p)}{10\gamma_M} \right),
    \]
    by Lemma \ref{lem: denom concentration} and the hypothesis on the moments of $\varepsilon_i$ and the boundedness of $f$.
\end{proof}

With this out of the way, we bound the two remaining expectations.

\begin{lemma}[Normalization Constant]\label{lem: normalization}
    There exists a constant $C>0$ such that
    \begin{align*}
    \mathbb{E}[\gamma_M^2 \exp(-(X_i -x^*)^T M (X_i -x^*))^2\{Y_i - P_M(f)\}^2] &\leq C \gamma_M G_{2M}(p),\\
    \mathbb{E}[\{\gamma_M \exp(-(X_i -x^*)^T M (X_i -x^*))Y_i\}^2] &\leq C \gamma_M G_{2M}(p).
    \end{align*}
\end{lemma}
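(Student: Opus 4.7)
The plan is to interpret each expectation as an integral against the feature density $p$, then recognize the squared Gaussian weight as a constant multiple of the density of the ``doubled precision'' Gaussian measure. First, by the tower property, conditioning on $X_i$ and using that $Y_i = f(X_i) + \varepsilon_i$ with $\varepsilon_i$ independent of $X_i$, mean zero, and having finite second moment, together with the uniform boundedness of $f$, both $\mathbb{E}[Y_i^2 \mid X_i = y]$ and $\mathbb{E}[(Y_i - P_M(f))^2 \mid X_i = y]$ are uniformly bounded in $y$ by some constant $C_0$. For the latter we additionally use that $P_M(f)$ is a convex combination of the $f$-values, so $|P_M(f)| \leq \|f\|_\infty$.

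The key algebraic step is the identity, valid for the Gaussian density $q_{2M}$ corresponding to the contracted measure $\mu_{2M}$,
\[
\gamma_M^2 \exp\bigl(-2(y-x^*)^T M(y-x^*)\bigr) \;=\; \frac{\gamma_M^2}{\gamma_{2M}}\, q_{2M}(y-x^*) \;=\; 2^{-D/2}\, \gamma_M\, q_{2M}(y - x^*),
\]
where we used $\gamma_M \propto \det(M)^{1/2}$, so that $\gamma_{2M}/\gamma_M = 2^{D/2}$. In words, squaring the Gaussian weight reproduces, up to a dimension-dependent constant times $\gamma_M$, the density of $\mu_{2M}$.

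Combining these two observations via Fubini yields
\begin{align*}
\mathbb{E}\bigl[\gamma_M^2 e^{-2(X_i - x^*)^T M(X_i - x^*)} (Y_i - P_M(f))^2\bigr]
&= \int \gamma_M^2 e^{-2(y-x^*)^T M(y-x^*)}\, \mathbb{E}[(Y - P_M(f))^2 \mid X=y]\, p(y)\, dy\\
&\leq C_0 \cdot 2^{-D/2}\, \gamma_M \int q_{2M}(y-x^*)\, p(y)\, dy\\
&= C\, \gamma_M\, G_{2M}(p),
\end{align*}
and the identical chain, substituting $Y_i^2$ for $(Y_i - P_M(f))^2$, gives the second inequality.

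The only subtlety is notational: one must correctly track the normalization, identifying $\gamma_M^2 e^{-2 v^T M v}$ with $2^{-D/2} \gamma_M$ times the density of $\mu_{2M}$, which is precisely what produces the factor of $\gamma_M$ on the right-hand side. Beyond this, the argument is routine; no structural use of the manifold hypothesis is required, only boundedness of $f$ and finite second moment of $\varepsilon$.
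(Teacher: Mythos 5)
Your proposal is correct and follows essentially the same route as the paper's proof: both condition on $X_i$ to bound the second moment of $Y_i - P_M(f)$ (resp. $Y_i$) via bounded $f$ and finite noise moments, and both absorb the squared Gaussian weight into the density of the doubled-precision measure $\mu_{2M}$, producing the factor $\gamma_M G_{2M}(p)$. If anything, your bookkeeping of the normalization ($\gamma_{2M} = 2^{D/2}\gamma_M$) is cleaner and more consistent with the stated bound than the paper's own, which writes $\gamma_{4M}$ against the exponent $2M$.
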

\begin{proof}
    Notice that $\gamma_M/\gamma_{4M} = C$ for some constant. Let $\varepsilon \sim \rho$. Focusing on the first expression,
    \begin{align*}
        &\mathbb{E}[\gamma_M^2 \exp(-(X_i -x^*)^T M (X_i -x^*))^2\{Y_i - P_M(f)\}^2]\\
        &\quad\quad = C \gamma_M  \int  \gamma_{4M}\exp(-(z-x)^T[2M] (z-x)) p(y)(f(y) + \varepsilon - P_M(f))^2\, dz\, d\rho(\varepsilon)\\
        &\quad\quad = C \gamma_M \mathbb{E}_{2M}[p(x+Z) \mathbb{E}_\rho [ (f(x+Z) + \mathcal{E} - P_M(f)(x))^2]] \leq C' \gamma_M G_{2M}(p)(x),
    \end{align*}
    using bounded moments and bounded $f$ assumptions. Similarly,
    \begin{align*}
        \mathbb{E}[\{\gamma_M \exp(-(X_i -x^*)^T M (X_i -x^*))Y_i\}^2] & = C \gamma_M \mathbb{E}_{2M, \rho}[p(Z) [f(Z) + \mathcal{E}]^2 ] = C\gamma_M G_{2M}(p).
    \end{align*}
\end{proof}

Combining these yields the desired variance bound multiplied by density dependent factors.

\begin{proof}[Proof of Lemma \ref{lem: kernel variance}]
    By Lemmas \ref{lem: variance full} and \ref{lem: normalization}, there exists a constant $C> 0$ such that
    \begin{align*}
    \mathbb{E}[(\hat{P}_M(f) - P_M(f))^2] &\leq C \frac{\gamma_M}{n}\left(\frac{G_{2M}(p)}{G_M(p)^2}\right) + C \exp\left(-\frac{n}{\gamma_M}\left(\frac{G_M(p)}{10}\right)\right)\\
    &= O\left(\frac{\sqrt{\det M}}{n}\right).
    \end{align*}
\end{proof}

\subsubsection{MSE}

We compile the results from the previous section to provide guarantees for the MISE against Gaussians.

\begin{proof}[Proof of Theorem \ref{thm: Generic Rate}]
It follows immediately from Lemmas \ref{lem: Poincare} and \ref{lem: kernel variance} that
\begin{align*}
&\mathbb{E}_M[(\hat{P}_M (f) - f)^2] \\
&\quad\quad \leq  C \frac{\sqrt{\det M}}{n}\left(\frac{G_{2M}(p)}{G_M(p)^2}\right) + C \exp\left(-\frac{n}{\sqrt{\det M}}\left(\frac{G_M(p)}{10}\right)\right)+  C\left(1+\frac{\operatorname{Var}_{M_t}(p(Z))}{G_{M_t} (p)^2}\right)W(\mu_M)\\
&\quad\quad =O\left(\frac{\sqrt{\det M_t}}{n} + W(\mu_t)\right).
\end{align*}
\end{proof}

\subsubsection{General Kernels} \label{app: general kernels}

In this section, we discuss which of results above are applicable to kernels of the form $k(M^{1/2} x)$. We start with the following Poincar\'{e} inequality. Let $\mu_{k,M}$ denote the probability distribution with density $\propto k(M^{1/2} [x-x^*])$.

\begin{lemma}[Kernel Poincar\'{e}]
    The measure $\mu_{k,M}$ has covariance proportionate to $M^{-1}$. If $\mu_{k, I}$ satisfies a Poincar\'{e} inequality with constant $1/C_I$, then
    \[
    \operatorname{Var}_{\mu_{k,M}}(f) \leq \frac{1}{C_I}\int \nabla f^T M^{-1} \nabla f\, d\mu_{k,M}.
    \]
\end{lemma}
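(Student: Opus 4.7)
The plan is a change of variables argument that transfers the assumed Poincar\'e inequality for the isotropic measure $\mu_{k,I}$ to the anisotropic measure $\mu_{k,M}$. First I would handle the covariance statement: since $k$ is a radial kernel (depending only on $\|M^{1/2}[x-x^*]\|$), the density of $\mu_{k,I}$ is a radial function centered at $x^*$, so by symmetry its covariance has the form $\sigma^2 I$ for some constant $\sigma^2$ depending only on $k$. Performing the linear change of variable $y = M^{1/2}(x - x^*)$ then gives covariance $\sigma^2 M^{-1}$ for $\mu_{k,M}$, as claimed.

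For the variance bound, define the pullback $g(y) := f(x^* + M^{-1/2} y)$, and note that under the change of variables $y = M^{1/2}(x - x^*)$ the pushforward of $\mu_{k,M}$ is precisely $\mu_{k,I}$. Consequently
\[
\operatorname{Var}_{\mu_{k,M}}(f) = \operatorname{Var}_{\mu_{k,I}}(g).
\]
Applying the assumed Poincar\'e inequality for $\mu_{k,I}$ to $g$ yields
\[
\operatorname{Var}_{\mu_{k,I}}(g) \leq \frac{1}{C_I} \int \|\nabla g(y)\|^2 \, d\mu_{k,I}(y).
\]

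The final step is the chain rule computation $\nabla g(y) = M^{-1/2} \nabla f(x^* + M^{-1/2} y)$, so that $\|\nabla g(y)\|^2 = \nabla f^T M^{-1} \nabla f$ evaluated at $x = x^* + M^{-1/2} y$. Undoing the change of variables gives
\[
\int \|\nabla g\|^2 \, d\mu_{k,I} = \int \nabla f(x)^T M^{-1} \nabla f(x)\, d\mu_{k,M}(x),
\]
which combined with the preceding displays completes the proof.

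There is no real obstacle here; the argument is essentially just the standard affine-invariance of Poincar\'e inequalities. The only subtlety worth flagging is that the result as stated requires $k$ to be radial (so that $\mu_{k,I}$ has identity-proportional covariance and so that the pushforward under $M^{1/2}$ is literally $\mu_{k,I}$); if $k$ were only assumed non-radial and symmetric about the origin, one would get the same inequality but with $M^{-1}$ replaced by an appropriately rescaled matrix, and the covariance would be a linear image of the reference covariance rather than a scalar multiple of $M^{-1}$.
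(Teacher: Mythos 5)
Your proof is correct and follows essentially the same route as the paper's: pushforward of $\mu_{k,M}$ under $y = M^{1/2}(x-x^*)$ onto $\mu_{k,I}$, the assumed isotropic Poincar\'e inequality, and the chain rule $\nabla_y f(x^*+M^{-1/2}y) = M^{-1/2}\nabla f$. Your remark that rotation invariance (not mere symmetry about the origin) is what forces the reference covariance to be a multiple of $I$ is a slightly more careful justification than the paper gives, but the argument is the same.
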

\begin{proof}
For convenience, we translate $x^* = 0$ without loss of generality. For the first claim, notice that when $M = I$, the density is symmetric about the origin, hence the covariance is proportionate to the identity. Thus the claim follows as $\mu_{k,M}$ is the pushforward of $\mu_{k,I}$ by $M^{-1/2}$.

Further,
\begin{align*}
   \operatorname{Var}_{\mu_{k,M}}(f) &=  \frac{1}{\sqrt{\det{M}}}\operatorname{Var}_{\mu_{k,I}}(f \circ M^{-1/2}) \leq \frac{1}{C_I} \int \frac{1}{\sqrt{\det M}}\| \nabla f(M^{-1/2}y) \|^2 d\mu_{k, I}(y).
\end{align*}
An application of the chain rule gives that
\begin{align*}
    \nabla_y f(M^{-1/2} y) = M^{-1/2} \nabla f(M^{-1/2}y).
\end{align*}
Hence,
\begin{align*}
    \operatorname{Var}_{\mu_{k,M}}(f) &\leq \frac{1}{C_I} \int \frac{1}{\sqrt{\det M}}\| M^{-1/2} \nabla f(M^{-1/2}y) \|^2\, \mu_{k,I}(y).
\end{align*}
Changing variable back to $\mu_{k,M}$ yields the desired result.
\end{proof}
With this in hand, the following is immediate for $\hat{P}_M$ with $k$ generic. The claim for the covariance similarly follows by change of variables, after observing that $\mu_{k,I}$ is rotation invariant, and therefore has isotropic covariance.

\begin{corollary}[Kernel MSE]
    Take $k$ to be nonnegative, and satisfying a Poincar\'{e} inequality for isotropic covariance. Assume that $f$ is bounded and $\varepsilon_i$ has finite fourth moments. Then there exists an absolute constant $C$ such that
\begin{align*}
    &\mathbb{E}\left[\int (\hat{P}_{M} (f) - f)^2\, d\mu_{k,M}\right] \leq\\
    &\quad\quad C \frac{\sqrt{\det M}}{n}\left(\frac{\mathbb{E}_{\mu_{k^2,2M}}[p]}{\mathbb{E}_{\mu_{k,2M}}[p]^2}\right) + C \exp\left(-\frac{n}{\sqrt{\det M}}\left(\frac{\mathbb{E}_{\mu_{k,2M}}[p]}{10}\right)\right) + C\left(1+\frac{\operatorname{Var}_{\mu_{k,M}}(p)}{\mathbb{E}_{\mu_{k,M}} [p]^2}\right)W(\mu_{k,M}).
\end{align*}
\end{corollary}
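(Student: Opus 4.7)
The plan is to closely mirror the argument for Theorem \ref{thm: Generic Rate} (the Gaussian case), substituting $\mu_{k,M}$ for $\mu_M$ throughout and using the Kernel Poincar\'e inequality just established in place of the Gaussian one. As before, write $\hat{P}_M(f) - f = [\hat{P}_M(f) - P_M(f)] + [P_M(f) - f]$ so that the MSE splits (after $(a+b)^2 \leq 2a^2+2b^2$) into a variance term and a bias term, each of which I will bound separately.

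For the bias, I would carry over the decomposition $P_M f = \mathbb{E}_{\mu_{k,M}}[f] + \operatorname{Cov}_{\mu_{k,M}}[f,p]/\mathbb{E}_{\mu_{k,M}}[p]$ from the ``Basic Decomposition'' lemma; its proof is purely algebraic and does not use Gaussianity. Applying the Kernel Poincar\'e inequality to $\operatorname{Var}_{\mu_{k,M}}(f)$ yields $\operatorname{Var}_{\mu_{k,M}}(f) \leq C W(\mu_{k,M})$ since the covariance of $\mu_{k,M}$ is proportional to $M^{-1}$ (and hence $\int \nabla f^T M^{-1} \nabla f\, d\mu_{k,M}$ is proportional to $W(\mu_{k,M})$). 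The same inequality controls the covariance term via Cauchy-Schwarz, producing the factor $(1+\operatorname{Var}_{\mu_{k,M}}(p)/\mathbb{E}_{\mu_{k,M}}[p]^2)$ exactly as in Lemmas \ref{lem: Gaussian Bias} and \ref{lem: density bias}.

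For the variance, I would adapt the Nadaraya-Watson machinery of Lemmas \ref{lem: denom concentration}, \ref{lem: variance full}, and \ref{lem: normalization}. The normalization constant $\gamma_M$ in the Gaussian case is chosen to make $\gamma_M k(M^{1/2}(X_i-x^*))$ bounded and have the density interpretation; here I would rescale by $\gamma_{k,M} \propto \sqrt{\det M}$ (the normalization for $\mu_{k,M}$). A Bernstein bound on the denominator then gives concentration around $\mathbb{E}_{\mu_{k,2M}}[p]$ with deviation probability $\exp(-n \mathbb{E}_{\mu_{k,2M}}[p]/(10\sqrt{\det M}))$, yielding the stated exponential term after bounding the remaining event using the boundedness of $f$ and finite fourth moments of $\varepsilon_i$. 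For the numerator variance, a change of variables rewrites $\mathbb{E}[k(M^{1/2}(X-x^*))^2 Y^2]$ as an integral against the density of $\mu_{k^2, 2M}$ (since $k\cdot k = k^2$ and the sum of two copies of the quadratic form in $M$ gives $2M$); combining this with the denominator lower bound produces the first term $C \sqrt{\det M}\, \mathbb{E}_{\mu_{k^2,2M}}[p] / (n \mathbb{E}_{\mu_{k,2M}}[p]^2)$.

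The main obstacle will be the bookkeeping around the various normalization constants and the fact that for a generic kernel $k$, the product $k^2$ is not itself of the form $k(\cdot\, ; 2M)$ --- so one must be careful that the substitution ``$k \mapsto k^2$, $M \mapsto 2M$'' really does correspond to integration against $\mu_{k^2,2M}$ as written in the statement. A secondary subtlety is that the boundedness assumption on $\gamma_M \exp(-v^T M v)$, which was automatic in the Gaussian case, must be replaced by a boundedness hypothesis on $k$ (inherited from nonnegativity plus the implicit requirement that $k$ defines a probability density when normalized). Once these identifications are made, the remaining calculations are direct transcriptions of the Gaussian proofs, and summing the three contributions yields the claimed bound.
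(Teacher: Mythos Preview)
Your proposal is correct and follows essentially the same route as the paper: the paper does not give a separate proof of this corollary at all, merely declaring it ``immediate'' from the Kernel Poincar\'e lemma together with the Gaussian-case variance machinery (Lemmas \ref{lem: denom concentration}--\ref{lem: normalization}), which is precisely the transcription you outline. Your explicit flagging of the normalization bookkeeping and the $k^2$ versus $k(\cdot;2M)$ identification is more careful than the paper itself, which leaves these details implicit.
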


\subsection{Recursive Kernel Learning as Variance Minimization}

The proof of Proposition \ref{prop: basic alg} is clear from the main text, but we fill in the missing details here.

\begin{proof}[Proof of Proposition \ref{prop: basic alg}]
    The only thing left to clarify is the identity
    \[
    t A^{-1}/D = \operatorname{argmax}_{X \succeq 0, \operatorname{tr}(AX)\leq t} \log \det X .
    \]
    Note that $\log \det X$ is strictly convex, and KKT conditions yield that the minimum is a solution to the Lagrangian system of equations. Evaluating, we have
    \[
    L(X,\lambda) = \log \det X - \lambda (\operatorname{tr}(AX) - t) \Longrightarrow \nabla_X L = X^{-1} - \lambda A = 0.
    \]
    Hence the optimum is proportionate to $A^{-1}$, and evaluating the constraint yields the result.
\end{proof}

\subsection{Guarantees}

To derive the learning rate for the Local EGOP Learning algorithm, we mostly operate at the level of matrices, studying the recurrence from Equation \ref{eq: oracle egop}. To supplement this in the noisy manifold setting, we rely on critical geometric relations verified in Appendix \ref{app: geometric}.

\subsubsection{Sample Trimming}

A practical concern when implementing the Local EGOP Learning algorithm is that two points $p, p' \in \mathcal{M}$ may be such that $p-p' \in \mathcal{N}_p$, but $f(p) \neq f(p')$. Such a function can still be continuous index as the line segment between these points goes beyond the reach of the manifold, and thus outside the data sampling region. A simple fix is to exclude data points outside of a fixed isotopic radius of the point of interest $x^*,$ thus trimming the population distribution $P.$ The resulting EGOP matrix will then be defined with respect to an extension outside this domain. We adopt such an extension so that Lemma \ref{lem: grad rot}, and the secondary conclusions of Lemma \ref{lem: grad form}, that there is an affine space where the gradient is constant and the Hessian is degenerate, hold globally. Alternatively, one can broach this by taking the covariance cap to 0 at a logarithmic rate, reducing the sampling weight of the non-local region at the expense of a small variance inflation. For simplicity, we adopt the former assumptions.

\subsubsection{Supporting Arguments for Lemma \ref{lem: Taylor}} \label{app: geometric}

We try to keep our proofs as elementary as possible. We start with basic properties of our function of interest $f$.

\begin{lemma}[Derivatives]\label{lem: derivatives}
    Suppose that $f = f\circ \pi$. Then $\nabla f(x) \in \mathcal{T}_{x}$ and for any $u,v\in \mathcal{N}_{x}$, $ u^T \nabla^2 f(x) v = 0$.
\end{lemma}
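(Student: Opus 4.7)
The plan is to exploit the fact that, inside the reach tube of $\mathcal{M}$, the nearest point projection $\pi$ collapses each normal fiber to a single point. Since $f = f\circ\pi$ by hypothesis, $f$ is constant on normal fibers, and both claims reduce to differentiating a locally constant function.

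Concretely, let $p = \pi(x)$. The first step is the geometric observation that, because $x$ lies within the reach of $\mathcal{M}$, the segment $\{p + \eta' : \eta' \in \mathcal{N}_p,\ \|\eta'\| < \tau\}$ projects entirely onto $p$, that is, $\pi(p + \eta') = p$ whenever $\eta' \in \mathcal{N}_p$ and $\|\eta'\| < \tau$. In particular, since $x - p \in \mathcal{N}_x = \mathcal{N}_p$, we have $x + \eta = p + (x-p) + \eta \in p + \mathcal{N}_p$ for every $\eta \in \mathcal{N}_x$, and for $\|\eta\|$ small enough the total displacement stays inside the reach tube, so $\pi(x+\eta) = p$ and hence $f(x+\eta) = f(p)$.

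For the gradient claim, I would take any $\eta \in \mathcal{N}_x$ and differentiate $t \mapsto f(x + t\eta) = f(p)$ at $t = 0$, which yields $\nabla f(x)^T \eta = 0$. Since $\eta \in \mathcal{N}_x$ is arbitrary, $\nabla f(x) \perp \mathcal{N}_x$, i.e., $\nabla f(x) \in \mathcal{T}_x$. For the Hessian claim, I would apply the same trick to the two-parameter family $\varphi(t,s) := f(x + tu + sv)$ with $u, v \in \mathcal{N}_x$; the displacement $tu + sv$ remains in $\mathcal{N}_p$, so for $(t,s)$ near the origin the projection is still $p$ and $\varphi(t,s) \equiv f(p)$ is constant. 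Computing $\partial_t \partial_s \varphi(0,0) = u^T \nabla^2 f(x) v$ gives $0$, as required.

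There is essentially no hard step here; the only thing to be careful about is verifying that the perturbed points $x + t u + s v$ remain inside the reach tube so that $\pi$ is smooth and the identity $\pi(x + tu + sv) = p$ is valid. This is automatic for $(t,s)$ small because $x$ is strictly inside the reach, and the $C^1$ regularity of $\pi$ (or simply continuity) suffices to justify differentiating through the identity $f = f\circ\pi$.
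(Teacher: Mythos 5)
Your proposal is correct and follows essentially the same route as the paper: both proofs rest on the observation that, within the reach tube, normal perturbations do not change the projection point, so $f = f\circ\pi$ is constant on normal slices. The only cosmetic difference is in the Hessian step, where you differentiate the constant two-parameter family $f(x+tu+sv)$ directly, while the paper instead differentiates the already-established identity $\nabla f^T v \equiv 0$ along a normal path; the two computations are interchangeable.
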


\begin{proof}
    Let $u,v\in \mathcal{N}_{x}$ be arbitrary. Then,
    \begin{align*}
    \nabla f(x)^T v &= \lim_{t\to 0} [f(x + t v) - f(x)]/t = \lim_{t\to 0} [f(\pi[x + t v]) - f(\pi[x])]/t \\
    &= \lim_{t\to 0} [f(\pi[x]) - f(\pi[x])]/t = 0,
    \end{align*}
    as for small $t$ orthogonal perturbations do not change the nearest point on $\mathcal{M}$. As this holds for generic $v$ the first claim follows. To simplify notation, define $\partial_\eta h(x) := \lim_{t\to 0 } [h(x+t\eta)-h(x)]/t$.
    We can thus compute
    \begin{align*}
        u^T \nabla^2 f(x) v = \partial_{u} (\nabla f^T v)(x) = 0
    \end{align*}
    as $\nabla f^T v$ is identically 0 along the path $x+tv$, $t \in [0, \delta]$ for $\delta$ small enough. That is, as $\nabla f$ remains tangent along this path, as verified above, it is always orthogonal to $v$.
\end{proof}

In the interest of further studying the structure of $\nabla f$ we verify the following fundamental Geometric facts. We denote by $\pi_V$ the projection onto the vector space $V$. Denote by $d(\cdot, \cdot)$ the geodesic distance on $\mathcal{M}$. Let $\kappa = \sup_{p\in\mathcal{M}, \|v\|=1} \|S_v(p)\|$, $S$ the Weingarten map of $\mathcal{M}$. This constant is automatically finite as $\mathcal{M}$ has nontrivial reach.

\begin{lemma}[Orthogonal Curvature]\label{lem: geodesic}
    There exists a universal constant $C$ such that, for any $p,p^* \in \mathcal{M}$,
    \[
    \|\pi_{\mathcal{N}_p} - \pi_{\mathcal{N}_{p^*}}\| \leq C d(p, p^*).
    \]
\end{lemma}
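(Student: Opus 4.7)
The plan is to parametrize the variation from $\pi_{\mathcal{N}_p}$ to $\pi_{\mathcal{N}_{p^*}}$ along a minimizing geodesic, bound the pointwise derivative of the tangent projection in terms of the Weingarten norm $\kappa$, and then integrate. Let $\gamma \colon [0, L] \to \mathcal{M}$ be a unit-speed minimizing geodesic with $\gamma(0) = p$ and $\gamma(L) = p^*$, so that $L = d(p, p^*)$. Since $\pi_{\mathcal{N}_q} = I - \pi_{\mathcal{T}_q}$, it suffices to bound $\|P(L) - P(0)\|$, where $P(t) := \pi_{\mathcal{T}_{\gamma(t)}}$.

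To differentiate $P(t)$, I would construct an orthonormal frame $\{e_1(t), \ldots, e_d(t)\}$ of $\mathcal{T}_{\gamma(t)}$ that is parallel along $\gamma$ with respect to the Levi-Civita connection on $\mathcal{M}$; such a frame exists by standard ODE theory (pick any orthonormal basis at $p$ and parallel-transport it). Then $P(t) = \sum_{i=1}^d e_i(t) e_i(t)^\top$. For a parallel vector field along a curve in a submanifold of $\mathbb{R}^D$, the ambient derivative $\dot{e}_i(t)$ has vanishing tangential component and its normal component equals the second fundamental form $\mathrm{II}(\dot{\gamma}(t), e_i(t))$. Since the Weingarten map $S_\eta$ and $\mathrm{II}$ are adjoint, the hypothesis $\|S_\eta\| \leq \kappa$ (finite by the positive reach of $\mathcal{M}$) gives $\|\mathrm{II}(\dot{\gamma}, e_i)\| \leq \kappa$, hence $\|\dot{e}_i(t)\| \leq \kappa$ pointwise.

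Differentiating $P = \sum_i e_i e_i^\top$ and applying the triangle inequality then yields $\|\dot{P}(t)\| \leq 2\sum_{i=1}^d \|\dot{e}_i(t)\|\,\|e_i(t)\| \leq 2 d \kappa$. Integrating over $[0, L]$,
\[
\|\pi_{\mathcal{N}_p} - \pi_{\mathcal{N}_{p^*}}\| \;=\; \|P(L) - P(0)\| \;\leq\; \int_0^L \|\dot{P}(t)\|\, dt \;\leq\; 2 d \kappa \cdot d(p, p^*),
\]
which is the claimed Lipschitz bound with universal constant $C := 2 d \kappa$.

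The only nontrivial step is identifying the normal part of $\dot{e}_i$ with $\mathrm{II}(\dot{\gamma}, e_i)$, a standard differential-geometric fact once phrased for vector fields along a curve rather than on all of $\mathcal{M}$; I would simply cite a reference such as do Carmo. As an alternative route, one could invoke the stronger classical fact (Federer) that when $\mathcal{M}$ has reach $\tau$ the map $q \mapsto \pi_{\mathcal{T}_q}$ is $(1/\tau)$-Lipschitz in the ambient Euclidean metric, and then use $\|p - p^*\| \leq d(p, p^*)$; this bypasses the parallel-frame construction but replaces it with a heavier black-box tool than strictly necessary.
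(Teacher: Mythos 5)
Your proof is correct and follows essentially the same route as the paper's: parametrize along a unit-speed geodesic, write the projection via a parallel orthonormal frame, bound the frame's ambient derivative by $\kappa$ using the second fundamental form, and integrate. The only cosmetic difference is that you parallel-transport a \emph{tangent} frame and invoke the Gauss formula $\dot e_i = \mathrm{II}(\dot\gamma, e_i)$, while the paper parallel-transports a \emph{normal} frame and uses the dual Weingarten relation $v_i' = -S_{v_i}(\gamma)\gamma'$; both yield the same Lipschitz bound (yours with an inessential extra factor of $d$ from summing norms rather than bounding the operator norm of the frame derivative directly).
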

\begin{proof}
    Let $\gamma_t$, $t\in [0, d(p, p^*)]$ be a unit speed geodesic connecting $p$ and $p^*$.  We can compute
    \[
    \|\pi_{\mathcal{N}_p} - \pi_{\mathcal{N}_{p^*}}\| = \left\|\int_0^1 \partial_t \pi_{\mathcal{N}_{\gamma_t}}\, dt  \right\| \leq \int_0^{d(p, p^*)} \|\partial_t \pi_{\mathcal{N}_{\gamma_t}}\|\, dt.
    \]
    Thus, if we can show a uniform bound for $\|\partial_t \pi_{\mathcal{N}_{\gamma_t}}\|$ then we are done.

    Let $v_1,\dots, v_{D-d}$ be an orthonormal basis of $\mathcal{N}_{x^*}$, and form the orthonormal moving frame $v_1(t),\dots, v_{D-d}(t)$ along $\gamma_t$ by parallel translation. Let
    \[
    V(t) :=
    \begin{bmatrix}
        v_1(t) & \dots & v_{D-d}(t)
    \end{bmatrix}.
    \]
    By construction, $\pi_{\mathcal{N}_{\gamma_t}} = V(t) V(t)^T$, $V(t)$ is orthogonal for all $t$. Differentiating,
    \[
    \|\partial_t \pi_{\mathcal{N}_{\gamma_t}}\| = \|V'(t) V(t)^T + V(t) V'(t)^T\| \leq 2 \|V'(t)\| \|V(t)\| = 2 \|V'(t)\|.
    \]
    This reduces the problem to uniformly bounding $\|V'(t)\|$. The key relations to prove this are
    \[
    V'(t) =
    \begin{bmatrix}
        v_1'(t) & \dots & v_{D-d}'(t)
    \end{bmatrix},\quad v_i'(t) = -S_{v_i(t)}(\gamma_t) \gamma'_t.
    \]
    The first identity is definitional, the second can be found in \cite{do2016differential}.
    For $\|u\| = 1$, we have
    \begin{align*}
        \|V'(t) u\| = \|\sum_{i=1}^{D-d} u_i v_i'(t)\| = \|-S_{\sum_i u_i v_i(t)}(\gamma_t) \gamma_t'\| \leq \| S_{\sum_i u_i v_i(t) }(\gamma_t)\gamma_t'\| \leq \kappa,
    \end{align*}
    as $\|\sum_i u_i v_i(t)\| \leq \sum_i u_i^2 = 1$, hence $\|V'(t)\| \leq \kappa$ holds uniformly.
\end{proof}

\begin{lemma}[Projection Normal]\label{lem: proj diff}
    For any $v \in \mathcal{N}_{\pi[x]}$, $D\pi_x v = 0$.
\end{lemma}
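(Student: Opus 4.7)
The plan is to differentiate the first-order optimality condition characterizing the nearest-point projection. Within the reach, $\pi(y)$ is the unique minimizer of $\|y - q\|^2$ over $q \in \mathcal{M}$, so $y - \pi(y) \in \mathcal{N}_{\pi(y)}$. Testing this against any smooth tangent vector field $w$ on $\mathcal{M}$ yields a scalar identity $\langle y - \pi(y), w(\pi(y))\rangle \equiv 0$, and I would extract $D\pi_x v$ by differentiating this identity along the curve $c(t) = x + tv$ with $v \in \mathcal{N}_{\pi(x)}$.

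Concretely, set $p = \pi(x)$, $\eta = x - p$, and $\gamma(t) = \pi(x + tv)$, so that $\gamma'(0) = D\pi_x v$ lies in $T_p\mathcal{M}$ since $\gamma$ is constrained to $\mathcal{M}$. Differentiating $\langle c(t) - \gamma(t), w(\gamma(t))\rangle \equiv 0$ at $t=0$ gives
\[
\langle v - \gamma'(0), w(p)\rangle + \langle \eta, Dw(p)\gamma'(0)\rangle = 0.
\]
The assumption $v \in \mathcal{N}_p$ kills $\langle v, w(p)\rangle$. Choosing $w$ to extend $\gamma'(0)$ tangentially so that $w(p) = \gamma'(0)$, the first term becomes $-\|\gamma'(0)\|^2$. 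Decomposing $Dw(p)\gamma'(0)$ into tangential and normal parts, only the normal part survives pairing with $\eta \in \mathcal{N}_p$, and by the Gauss formula this normal part is the second fundamental form $II(\gamma'(0),\gamma'(0))$. Using the standard identity $\langle II(u,u'), \eta\rangle = \langle S_\eta u, u'\rangle$ with the Weingarten map from Lemma \ref{lem: grad form}, I get
\[
\|\gamma'(0)\|^2 = \langle S_\eta \gamma'(0), \gamma'(0)\rangle, \quad \text{i.e.,}\quad \langle (I - S_\eta)\gamma'(0), \gamma'(0)\rangle = 0.
\]

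The main obstacle is concluding positivity. Since $\|\eta\| < \tau$, the standard reach estimate $\|S_\eta\| \leq \|\eta\|/\tau < 1$ makes $I - S_\eta$ strictly positive definite on $T_p\mathcal{M}$, forcing $\gamma'(0) = 0$ and hence $D\pi_x v = 0$. Verifying this positivity cleanly is the only delicate point; otherwise the argument is a direct differentiation of orthogonality. Note that invertibility of $I - S_\eta$ is already assumed in the setup leading to Lemma \ref{lem: grad form}, so the same hypothesis suffices here.
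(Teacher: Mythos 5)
Your proof is correct, but it takes a genuinely different and much heavier route than the paper's. The paper's argument is a one-liner: for $v\in\mathcal{N}_{\pi[x]}$ and $t$ small, $x+tv$ stays on the same normal fiber within the reach, so $\pi[x+tv]=\pi[x]$ \emph{identically} and the difference quotient is exactly zero --- no differentiation of an optimality condition, no shape operator, no positivity argument. You instead differentiate the orthogonality relation $\langle c(t)-\gamma(t), w(\gamma(t))\rangle\equiv 0$, arrive at $\langle (I-S_\eta)\gamma'(0),\gamma'(0)\rangle=0$, and invoke the reach bound $\|S_\eta\|\le\|\eta\|/\tau<1$ to conclude $\gamma'(0)=0$. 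That computation is sound (indeed, running it with a tangential perturbation instead of a normal one recovers the paper's full formula $D\pi_{x+\eta}=(I-S_x(\eta))^{-1}\pi_{\mathcal{T}_x}$ from the companion lemma, so your identity is consistent with the paper), and the positivity step you flag as delicate is the standard Federer estimate on principal curvatures for a set of reach $\tau$, which the paper also relies on. What your approach buys is generality --- it is the proof of the full differential of $\pi$, specialized to normal directions; what it costs is that for this particular statement it replaces a trivial observation (normal fibers are collapsed by $\pi$) with curvature machinery that is not needed here.
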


\begin{proof}
    As computed previously,
    \[
    D\pi_x v = \lim_{t\to 0} [\pi[x + tv] - \pi[x]]/t = 0.
    \]
\end{proof}

\begin{lemma}[Orthogonal Control]\label{lem: tan to orthog}
    Fix $x^*$ in the tubular neighborhood of $\mathcal{M}$. There exists $r>0$ small enough such that, for $\|x-x^*\| \leq r$, there exists a constant $C>0$ such that
    \[
    \|\pi_{\mathcal{N}_{x^*}} - \pi_{\mathcal{N}_{x}}\| \leq C \| \pi_{\mathcal{T}_{x^*}}[x^* - x]\|
    \]
\end{lemma}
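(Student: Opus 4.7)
The plan is to route through the two geometric lemmas already established: Lemma~\ref{lem: geodesic} controls $\|\pi_{\mathcal{N}_p} - \pi_{\mathcal{N}_{p^*}}\|$ by the geodesic distance $d(p,p^*)$ on $\mathcal{M}$, and Lemma~\ref{lem: proj diff} says that $D\pi$ annihilates directions normal to $\mathcal{M}$. Granting these, the real task reduces to showing that for $x$ near $x^*$ one has the Euclidean tangential control
\[
\|\pi(x) - \pi(x^*)\| \;\lesssim\; \|\pi_{\mathcal{T}_{x^*}}(x-x^*)\|,
\]
after which it remains only to convert the Euclidean displacement on $\mathcal{M}$ into a geodesic one and apply Lemma~\ref{lem: geodesic}.

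For the tangential control I would decompose $x - x^* = a + b$, with $a = \pi_{\mathcal{T}_{x^*}}(x-x^*)$ and $b = \pi_{\mathcal{N}_{x^*}}(x-x^*)$, and exploit the fact that $\pi$ is constant along normal fibers: because $b \in \mathcal{N}_{\pi(x^*)}$ and $x^* - \pi(x^*) \in \mathcal{N}_{\pi(x^*)}$, the point $x^* + b$ still lies on the normal fiber through $\pi(x^*)$, so $\pi(x^* + b) = \pi(x^*)$ as long as this point stays within the reach. The fundamental theorem of calculus along the path $s \mapsto x^* + b + sa$, $s\in[0,1]$, then yields
\[
\pi(x) - \pi(x^*) \;=\; \int_0^1 D\pi(x^* + b + sa)\, a\, ds,
\]
and since $\pi$ is $C^1$ on the tubular neighborhood, $\|D\pi\|$ is uniformly bounded on any small ball around $x^*$, producing the desired estimate with $C_1 := \sup \|D\pi\|$ taken over that ball.

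To finish, I would upgrade Euclidean distance on $\mathcal{M}$ to geodesic distance. For manifolds with positive reach there is a standard local comparison $d(p,q) \leq C_2 \|p-q\|$, valid whenever $\|p-q\|$ is small relative to $\tau$, obtained by comparing $\exp_{p^*}^{-1}$ to the identity in a normal-coordinate chart. Chaining this with Lemma~\ref{lem: geodesic} and the previous step will then give
\[
\|\pi_{\mathcal{N}_{x^*}} - \pi_{\mathcal{N}_x}\| \;\leq\; C\, d(\pi(x^*),\pi(x)) \;\leq\; C\, C_2\, \|\pi(x) - \pi(x^*)\| \;\leq\; C\, C_1\, C_2\, \|\pi_{\mathcal{T}_{x^*}}(x-x^*)\|,
\]
which is the claim.

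The main obstacle I expect is the bookkeeping around the path $x^* + b + sa$: one must ensure both that it stays inside the tubular neighborhood so that $D\pi$ is defined along the entire segment, and that a genuinely uniform bound on $\|D\pi\|$ is available there. This is automatic once $r$ is chosen small relative to both the reach $\tau$ and $\|x^* - \pi(x^*)\|$, but it needs to be checked carefully. The remaining pieces are classical consequences of the positive-reach assumption and go through mechanically.
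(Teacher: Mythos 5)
Your proposal is correct, and its overall skeleton matches the paper's: both reduce the claim via Lemma~\ref{lem: geodesic} to controlling $d(\pi[x^*],\pi[x])$, both invoke the local comparison between geodesic and Euclidean distance on a positive-reach manifold (the paper cites a reference for this), and both finish with the fundamental theorem of calculus applied to $\pi$. Where you genuinely diverge is in how the key estimate $\|\pi(x)-\pi(x^*)\| \lesssim \|\pi_{\mathcal{T}_{x^*}}(x-x^*)\|$ is obtained. The paper integrates $D\pi$ along the straight chord from $x^*$ to $x$; since the displacement $x-x^*$ may be almost entirely normal, it must invoke $D\pi_{y}v=0$ for $v\in\mathcal{N}_{\pi[y]}$ at every intermediate point $y$, and this forces a comparison between $\pi_{\mathcal{T}_{x^*+t(x-x^*)}}$ and $\pi_{\mathcal{T}_{x^*}}$ that is resolved by a self-referential absorption step (the error term $Cr\|\pi_{\mathcal{T}_{\cdot}}-\pi_{\mathcal{T}_{x^*}}\|$ is bounded by $C'r\|\pi[x^*]-\pi[x]\|$ via a second application of Lemma~\ref{lem: geodesic} and moved to the left-hand side for $r$ small). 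Your two-leg path --- first the purely normal displacement $b$, along which $\pi$ is exactly constant by the fiber structure of the reach neighborhood, then the tangential leg $sa$, along which the crude bound $\|D\pi\|\le C_1$ suffices --- dispenses with the absorption argument entirely and only uses the infinitesimal normal-invariance of $\pi$ through the exact statement $\pi(x^*+b)=\pi(x^*)$. The bookkeeping you flag (keeping $x^*+b+sa$ inside the tube and uniformly bounding $\|D\pi\|$ there) is handled the same way in the paper ("$\pi$ is smooth and we have restricted ourselves to a bounded domain") and is unproblematic for $r$ small relative to $\tau-\|x^*-\pi(x^*)\|$. Net effect: your argument is slightly cleaner at the cost of using the exact (rather than infinitesimal) fiber invariance; the paper's version is more self-contained in that it only leans on the differential statement of Lemma~\ref{lem: proj diff}.
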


\begin{proof}
    By Lemma \ref{lem: geodesic}, it suffices to show that $d(\pi[x^*], \pi[x]) = O(\| \pi_{\mathcal{T}_{x^*}}[x^* - x]\|).$ By \cite{garcia2019variational}, we have that $d(\pi[x^*], \pi[x]) = O(\| \pi[x^*] - \pi[x]\|)$, further reducing the problem. By our locality assumption, we can take convex combinations and remain within the tubular neighborhood, allowing us to decompose
    \[
    \|\pi[x^*] - \pi[x]\| \leq \left\|\int_0^1 D\pi_{x^* + t(x - x^*)}( x - x^*)\, dt   \right\| \leq \int_0^1 \|D\pi_{x^* + t(x - x^*)}( x - x^*)\|\, dt .
    \]
    By Lemma \ref{lem: proj diff},
    \begin{align*}
    &\|D\pi_{x^* + t(x - x^*)}( x - x^*)\| =  \|D\pi_{x^* + t(x - x^*)}(\pi_{\mathcal{T}_{x^* + t(x - x^*)}} [x - x^*])\| \\
    &\quad\quad \leq C \|\pi_{\mathcal{T}_{x^* + t(x - x^*)}}[x - x^*]\| \leq C\|[\pi_{\mathcal{T}_{x^* + t(x - x^*)}} - \pi_{\mathcal{T}_{x^*}}][x - x^*]\| + C\|\pi_{\mathcal{T}_{x^*}}[x - x^*]\| \\
    &\quad\quad \leq Cr\|\pi_{\mathcal{T}_{x^* + t(x - x^*)}} - \pi_{\mathcal{T}_{x^*}}\| + C\|\pi_{\mathcal{T}_{x^*}}[x - x^*]\|,
    \end{align*}
    as $\pi$ is smooth, and we have restricted ourselves to a bounded domain. In other words,
    \[
    \|\pi[x^*] - \pi[x]\| - Cr\|\pi_{\mathcal{T}_{x^* + t(x - x^*)}} - \pi_{\mathcal{T}_{x^*}}\| \leq C\|\pi_{\mathcal{T}_{x^*}}[x - x^*]\|,
    \]
    hence, if we can show that
    \[
    \|\pi_{\mathcal{T}_{x^* + t(x - x^*)}} - \pi_{\mathcal{T}_{x^*}}\| \leq C'  \|\pi[x^*] - \pi[x]\|,
    \]
    then for $r$ small enough the desired inequality will be achieved. Notice that $\pi_{\mathcal{T}_x} = I - \pi_{\mathcal{N}_x}$, hence by Lemma \ref{lem: geodesic}
    \[
    \|\pi_{\mathcal{T}_{x^* + t(x - x^*)}} - \pi_{\mathcal{T}_{x^*}}\| =
    \|\pi_{\mathcal{N}_{x^* + t(x - x^*)}} - \pi_{\mathcal{N}_{x^*}}\| \leq \Tilde{C} d(\pi[x^*], \pi[x^* + t(x - x^*)]) \leq C' \|\pi[x^*] - \pi[x]\|,
    \]
    completing the proof.
\end{proof}

We apply this to study $\nabla f$.

\begin{lemma}[Gradient Rotation]\label{lem: grad rot}
     Restricting ourselves to a region small enough that Lemma \ref{lem: tan to orthog} is valid, we have
     \[
     \|\pi_{\mathcal{N}_{x^*}}\nabla f(x)\| = O(\| \pi_{\mathcal{T}_{x^*}}[x^* - x]\| ).
     \]
\end{lemma}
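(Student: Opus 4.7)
The plan is to combine the two preceding facts about $\nabla f$ and the tangent/normal projections to get the bound almost for free. Specifically, by Lemma \ref{lem: derivatives}, because $f = f \circ \pi$, the gradient at any point $x$ lies in $\mathcal{T}_x$, i.e.\ $\pi_{\mathcal{N}_x} \nabla f(x) = 0$. This is the single geometric fact that will make the gradient's rotational sensitivity visible.

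First I would write the trivial identity
\[
\pi_{\mathcal{N}_{x^*}} \nabla f(x) = \pi_{\mathcal{N}_{x^*}} \nabla f(x) - \pi_{\mathcal{N}_{x}} \nabla f(x) = \bigl(\pi_{\mathcal{N}_{x^*}} - \pi_{\mathcal{N}_{x}}\bigr)\nabla f(x),
\]
which uses exactly that $\nabla f(x)\in\mathcal{T}_x$. Then I would take operator norms to obtain
\[
\|\pi_{\mathcal{N}_{x^*}}\nabla f(x)\| \leq \|\pi_{\mathcal{N}_{x^*}} - \pi_{\mathcal{N}_x}\|\,\|\nabla f(x)\|.
\]

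Next I would invoke Lemma \ref{lem: tan to orthog} to bound the projection difference by $C\|\pi_{\mathcal{T}_{x^*}}[x^* - x]\|$, and absorb $\|\nabla f(x)\|$ into the constant using the standing assumption from Section~2 that $f\in C^4$ with uniformly bounded derivatives. Combining these yields
\[
\|\pi_{\mathcal{N}_{x^*}}\nabla f(x)\| \leq C\,\|\nabla f\|_\infty\,\|\pi_{\mathcal{T}_{x^*}}[x^* - x]\| = O\bigl(\|\pi_{\mathcal{T}_{x^*}}[x^* - x]\|\bigr),
\]
which is the claim. There is no substantive obstacle here: all the heavy lifting (bounding projection differences by tangential displacement and extracting the tangency of $\nabla f$) has already been done in Lemmas \ref{lem: derivatives} and \ref{lem: tan to orthog}. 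The only subtlety to mention is that we must remain inside the radius $r$ of Lemma \ref{lem: tan to orthog} so that the projection comparison applies, which is consistent with the locality assumption of this subsection and the sample trimming discussion in Appendix~\ref{app: geometric}.
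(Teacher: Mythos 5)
Your proposal is correct and follows essentially the same argument as the paper: both use Lemma \ref{lem: derivatives} to write $\pi_{\mathcal{N}_{x^*}}\nabla f(x) = [\pi_{\mathcal{N}_{x^*}} - \pi_{\mathcal{N}_{x}}]\nabla f(x)$, bound this by $\|\pi_{\mathcal{N}_{x^*}} - \pi_{\mathcal{N}_{x}}\|$ times the uniformly bounded gradient norm, and then invoke Lemma \ref{lem: tan to orthog}. No gaps.
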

\begin{proof}
     By Lemmas \ref{lem: derivatives} and \ref{lem: tan to orthog},
     \[
     \|\pi_{\mathcal{N}_{x^*}}\nabla f(x)\|  = \|[\pi_{\mathcal{N}_{x^*}} - \pi_{\mathcal{N}_{x}}]\nabla f(x)\| \leq C \|\pi_{\mathcal{N}_{x^*}} - \pi_{\mathcal{N}_{x}}\| \leq C'  \| \pi_{\mathcal{T}_{x^*}}[x^* - x]\|.
     \]
\end{proof}

\subsubsection{Supporting Arguments for Lemma \ref{lem: grad form}}\label{sec: Lemma 5}

\begin{lemma}\label{lem: proj diff_2}
     Let $x \in\mathcal{M}$ and $\eta \in \mathcal{N}_x$. Then $D\pi_{x+\eta} = (I - S_x(\eta))^{-1} \pi_{\mathcal{T}_x}$.
\end{lemma}

\begin{proof}
    Clearly, for any normal vector $v \in \mathcal{N}_x$, $D\pi_{x + \eta}v = 0$, as projection is invariant to normal displacement. Thus it suffices to verify the formula for any tangent vector. Our proof has two primary parts. First, we pass to normal coordinates, equating the tubular neighborhood about $\mathcal{M}$ with the normal bundle, and compute the differential of the exponential map. From this parameterization, we can then analyze the projection via the chain rule.

    Starting from the normal bundle $N\mathcal{M}$, for $(x,v) \in N_x \mathcal{M}$, the exponential map $E(x,v) = x + v$ is a diffeomorphism to the tube. Let $\gamma(t) = (u(t), \xi(t)) \in N \mathcal{M}$, $\gamma'(t)\in T N \mathcal{M}$. We compute $\xi'(0) = -S_{x}(v) u'(0) + \nabla_{u'(0)}^\perp \xi(0)$, hence
    \[
    DE_{(x,v)} \gamma'(0) = u + [-S_x(v)u + \nabla_{u'(0)}^\perp \xi(0)] =  (I - S_x(v))u + \nabla_{u'(0)}^\perp \xi(0).
    \]
    Composing with the projection map, $\pi \circ E(x,v) = x$, hence
    \[
    D\pi_{x+v} D E_{(x,v)} \gamma'(0) = u'(0) \quad \Longleftrightarrow \quad D \pi_{x+v}\pi_{\mathcal{T}_x} [(I - S_x(v) u]  = u'(0),
    \]
    and inverting yields the claim, as $D \pi_{x+v}$ is a linear operator.
\end{proof}

\begin{proof}[Proof of Lemma \ref{lem: grad form}]

The first claim follows immediately from Lemma \ref{lem: proj diff_2} and the chain rule, indeed,
\[
\nabla f(x+v) = \nabla g(\pi(x+v)) = (I-S_x(v))^{-1} \nabla g(x).
\]
Let $u = (I - S_x(v))^{-1} \nabla g(x)$. Applying linearity of the shape operator, observe that $(I - S_x(v + \eta)) u = (I - S_x(v))u - S_x(\eta)(u) = \nabla g(x) - S_x(\eta) u$. Hence, if $S_x(\eta) u = 0$ then $u = \nabla f(x+v + \eta)$. Thus, the second claim follows if we can show that there is a $D-2d$ dimensional subspace $\operatorname{Null}$ that annihilates the solution vector $u$. For $w \in \mathcal{T}_x$,
\[
\langle S_x(\eta) u, w \rangle = \langle \mathrm{I\!I}(u, w), \eta \rangle
\]
for $\mathrm{I\!I}$ the second fundamental form. This is a bilinear map, and the map defined by fixing the $u$ argument, $\phi_{u}(w) := \mathrm{I\!I}(u, w)$ is a linear map from $\mathcal{T}_x \to \mathcal{N}_x$, and thus has image no larger than $d$. Its complement $\phi_u(\mathcal{T}_x)^\perp$ has dimension at least $\operatorname{dim}(\mathcal{N}_x)-d = D - 2d$, as desired.

This proves the invariance of the gradient along a space of dimension at least $D-2d$, which immediately implies $\nabla^2 f(x + v + \eta) z = \partial_z \nabla f(x + v) = 0.$
\end{proof}

\subsubsection{Taylor Expansion}

We first provide the generic Taylor expansion, which we then go on to refine for the noisy manifold setting.

\begin{proof}[Proof of Lemma \ref{lem: Generic Taylor}]
    We start by Taylor expanding the gradient. For a $D$ tuple $\xi$, as $\partial_{\xi}\partial_{i}f = \partial_i \partial_{\xi}f$, for $f$ sufficiently smooth we can express
    \[
    \partial_i f(x) = \sum_{|\xi| \leq q} \frac{1}{q!}\partial_i  [\partial_\xi f](x) \prod_{q_i\in \xi} x_i^{q_i} + O(\|x\|^{q+1}).
    \]
    In particular, we have
    \[
    \partial_i f(x) = \partial_i f(0) + \sum_{j} \partial_i \partial_j f(0) x_j + \frac{1}{2}\sum_{j,k} \partial_i \partial_j \partial_k f(0) x_j x_k + O(\|x\|^3).
    \]
    Working at the level of the gradient, this yields
    \[
    \nabla f(x) = \nabla f(0) + \nabla^2 f(0) x + T(x, x) + O(\|X\|^3)
    \]
    where $T$ is the bilinear for
    \[
    [T[x,y]]_i = \frac{1}{2}\sum_{j,k} \partial_i \partial_j \partial_k f(0) x_j y_k.
    \]
    We use the first three terms of this Taylor expansion as a starting point. Set $g:= \nabla f(0), H:= \nabla^2 f(0)$.

    The remainder of the Taylor expansion can be exactly expressed as
    \[
    \Tilde{R}(x) = \frac{1}{6} \int_0^1 (1-s)^2 \nabla^4 f(sx) [x,x,x]\, ds,\quad [\nabla^4 f(sx) [x,x,x]]_i:= \sum_{j,k,\ell} \partial_i \partial_j \partial_k \partial_\ell f(sx)x_i x_j x_k.
    \]
    Hence, under our uniform boundedness assumption, we have
    \[
    \|T[x,x]\| \leq C \|x\|^2,\quad \|\Tilde{R}(x)\| \leq C \|x\|^3
    \]
    for some constant $C$. Throughout the argument, for convenience we will absorb constants into $C$ if necessary. Hence,
    \[
    \mathcal{L}(\mu) = \int \nabla f \nabla f^T\, d\mu = gg^T + H\Sigma H + gT(\Sigma)^T + T(\Sigma) g^T + R(\Sigma),
    \]
    for
    \begin{align*}
    T(\Sigma)_i &:= \mathbb{E}_\mu[T(x,x)_i] = \frac{1}{2}\sum_{j,k} \partial_i \partial_j \partial_k f(0) \Sigma_{jk}\\
    R(\Sigma)&:=  \mathbb{E}_\mu[T[Z,Z] T[Z,Z]^T + HZ \Tilde{R}(Z)^T + \Tilde{R}(Z) (HZ)^T + \Tilde{R}(Z) \Tilde{R}(Z)^T].
    \end{align*}
    That $T$ is continuous and linear in $\Sigma$ is immediate. For the remainder, we have
    \begin{align*}
    \|R(\Sigma)\| &\leq  \mathbb{E}_\mu[\|T[Z,Z] T[Z,Z]^T\| + \|HZ \Tilde{R}(Z)^T\| + \|\Tilde{R}(Z) (HZ)^T\| + \|\Tilde{R}(Z) \Tilde{R}(Z)^T\|]\\
    &\leq C\mathbb{E}[\|Z\|^4 + \|Z\|^6] \leq C \mathbb{E}_\mu[\operatorname{tr}(\Sigma^2) + \operatorname{tr}(\Sigma)^2]
    \end{align*}
    where we apply Isserlis's Theorem \cite{isserlis1918formula} which verifies the vanishing of third order terms, and relates the higher moments to the norm of the covariance matrix. Our capped covariance assumption is used to compress all higher order terms to quadratic order. Our proof is completed upon observing
    \[
    \operatorname{tr}(\Sigma^2) = \|\Sigma\|_F^2,\quad \operatorname{tr}(\Sigma)^2 = \left(\sum_i \lambda_i\right)^2 \leq D \sum_i \lambda_i^2 = D \|\Sigma\|_F^2,
    \]
    and in finite dimensions the frobenius and operator norms are equivalent.
\end{proof}

\begin{proof}[Proof of Lemma \ref{lem: Taylor}]
    That $\operatorname{rank}(H)\leq 2d$ is immediate from Lemma \ref{lem: derivatives}. Indeed, the $\mathcal{N}\times \mathcal{N}$ block of $H$ is completely 0, hence at most $d$ rows and $d$ columns of $H$ in the $\mathcal{T}\oplus \mathcal{N}$ basis have nonzero entries. Let $B_r$ be

    We now apply Lemmas \ref{lem: derivatives} and \ref{lem: grad rot} to compute
    \begin{align*}
       &\|\pi_{\mathcal{N}} \mathcal{L}(\mu) \pi_{\mathcal{N}}\| = \left\|\int \pi_{\mathcal{N}}\nabla f\nabla f^T \pi_{\mathcal{N}}\, d\mu\right\| \leq \int\| \pi_{\mathcal{N}}\nabla f\nabla f^T \pi_{\mathcal{N}}\|\, d\mu \\
       &\quad\quad = \int\| [\pi_{\mathcal{N}} - \pi_{\mathcal{N}_x}]\nabla f(x)\nabla f(x)^T [\pi_{\mathcal{N}} - \pi_{\mathcal{N}_x}]\|\, d\mu\\
       &\quad\quad \leq C \int \|\pi_{\mathcal{N}} - \pi_{\mathcal{N}_x}\|^2\, d\mu \leq C \int \|\pi_{\mathcal{T}} x\|^2\, d\mu \leq C [\operatorname{tr}([\pi_{\mathcal{T}}\Sigma\pi_{\mathcal{T}}]^2) + \operatorname{tr}(\pi_{\mathcal{T}} \Sigma \pi_{\mathcal{T}})^2] \leq C \|\pi_{\mathcal{T}} \Sigma \pi_{\mathcal{T}}\|_F^2,
    \end{align*}
    where the final bounds from Isserlis's Theorem are similar to Lemma \ref{lem: Generic Taylor}. Similarly,
    \begin{align*}
       &\|\pi_{\mathcal{T}} \mathcal{L}(\mu) \pi_{\mathcal{N}}\|  \leq \int\| \pi_{\mathcal{T}}\nabla f(x)\nabla f(x)^T [\pi_{\mathcal{N}} - \pi_{\mathcal{N}_x}]\|\, d\mu \leq C \int \|\pi_{\mathcal{T}} x\|\, d\mu \\
       &\quad\quad \leq C \sqrt{\int \|\pi_{\mathcal{T}} x\|^2\, d\mu} = C \|\pi_{\mathcal{T}} \Sigma \pi_{\mathcal{T}}\|_F.
    \end{align*}
    As the operator and Frobenius norms are equivalent the bounds are as desired.
\end{proof}

\begin{proof}[Proof of Corollary \ref{cor: shifted taylor}]
    The basic claim is immediate from Lemma \ref{lem: Generic Taylor}, and that the terms stemming from higher order derivatives are uniformly bounded follows from the smoothness assumptions on $f$.
    Further, the nullspace of the Hessian and the 0 order expansion being the gradient $g = \nabla f(x^*)$ follow immediately from Lemma \ref{lem: grad form}.
\end{proof}

\subsubsection{Matrix Recurrence}\label{sec: recurrence}

We begin our study of the recurrence in Equation \ref{eq: oracle egop} by introducing a reduced form. We analyze this scheme, and then apply it to Local EGOP Learning, leveraging the results of Appendix \ref{app: geometric}. We divide our focus into two distinct settings, the first mirroring Equation \ref{eq: constant part} by introducing a constant shift, and the second mirroring Equation \ref{eq: homogeneous} a homogeneous equation.

\textbf{Constant Shift}

Let $J$ be an involution, $J = J^T = J^{-1}.$
We introduce the recurrence
\[
B_{t+1} = h_{t+1}J(G + \beta B_t + (1-\beta) B_{t-1} + E_t)^{-1}J
\]
where $E_t$ is an error term, $\beta \in (0, 1)$,  $G\succ 0$,  $B_0 = \alpha I$, $h_1 = \alpha^2$, $B_1 = h_1 J(G+B_0)^{-1}J$, $h_{t+1} = t_i r^2$, $r<1$.

\begin{lemma}[Perturbed Shift]\label{lem: shift}
    Suppose there exists a uniform constant $C$ such that $\|E_t\| \leq C\alpha$ for all $t$. Then, for $\alpha$ small enough,
    \[
    \|B_{t+1}\| = \Theta(h_t).
    \]
\end{lemma}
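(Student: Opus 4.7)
I would prove the lemma by strong induction on $i$, aiming for the two-sided bound $c_1 t_i \leq \|B_i\| \leq c_2 t_i$ with constants $c_1, c_2 > 0$ depending on $G,\beta,r,C$ but independent of $i$, provided $\alpha$ is chosen small enough. The desired $\Theta(t_i)$ conclusion follows immediately from this since $t_{i+1}/t_i = r^2$ is a fixed constant, so $\|B_{i+1}\| = \Theta(t_{i+1}) = \Theta(t_i)$.

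The central observation is that $J$ is orthogonal (being symmetric and involutive forces $J^T J = I$), so $\|J X J\| = \|X\|$ for every $X$. Hence
\[
\|B_{i+1}\| = t_{i+1}\,\|(G + P_i)^{-1}\|, \qquad P_i := \beta B_i + (1-\beta) B_{i-1} + E_i.
\]
Since $G \succ 0$ has $\lambda_{\min}(G) > 0$, as soon as $\|P_i\| \leq \lambda_{\min}(G)/2$ a Neumann-series estimate, $(G+P_i)^{-1} = G^{-1}\sum_{k\geq 0}(-G^{-1}P_i)^k$, yields
\[
\|(G+P_i)^{-1}\| \leq \frac{2}{\lambda_{\min}(G)}, \qquad \|(G+P_i)^{-1}\| \geq \frac{1}{\|G\|+\|P_i\|} \geq \frac{1}{\|G\|+\lambda_{\min}(G)/2},
\]
so both sides are bounded by constants, and the induction closes with $c_2 := 2/\lambda_{\min}(G)$ and $c_1 := 1/(\|G\|+\lambda_{\min}(G)/2)$.

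The base case handles $\|B_0\| = \alpha$ and $\|B_1\| = t_1\|(G+\alpha I)^{-1}\|$, both of order $\alpha^2 = t_1$ (up to constants depending on $G$) for $\alpha$ small. For the inductive step, assuming $\|B_k\| \leq c_2 t_k$ for all $k \leq i$, and using $t_k \leq t_1 = \alpha^2$,
\[
\|P_i\| \leq c_2\bigl(t_i + t_{i-1}\bigr) + C\alpha \leq c_2\bigl(1+r^{-2}\bigr)\alpha^2 + C\alpha,
\]
which is $O(\alpha)$ and can therefore be made smaller than $\lambda_{\min}(G)/2$ by choosing $\alpha$ small, independently of $i$. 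The Neumann bound then delivers $\|B_{i+1}\| \in [c_1 t_{i+1}, c_2 t_{i+1}]$, closing the induction.

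The main conceptual obstacle I would anticipate is ruling out blow-up of the implicit constants as $i$ grows, but this is circumvented because the perturbation $P_i$ stays uniformly $O(\alpha)$ rather than accumulating: the $B$-contributions to $P_i$ are already $O(\alpha^2)$ by the induction hypothesis, and the exogenous $E_i$ is capped at $O(\alpha)$ by assumption. A minor technical caveat is that $P_i$ need not be symmetric (the error $E_i$ is not assumed symmetric), which is why I phrase the argument purely in terms of operator norm via the Neumann series rather than using spectral monotonicity of PSD matrices.
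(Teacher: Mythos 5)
Your proof is correct and follows essentially the same route as the paper's: a strong induction showing that the perturbation $P_i=\beta B_i+(1-\beta)B_{i-1}+E_i$ stays uniformly $O(\alpha)$, so that $(G+P_i)^{-1}$ is comparable to $G^{-1}$ and hence $\|B_{i+1}\|\asymp t_{i+1}\asymp t_i$; your Neumann-series treatment of a possibly non-symmetric $E_i$ and your explicit two-sided bound are small refinements over the paper's Loewner-ordering argument, which only spells out the upper bound. One bookkeeping slip: $\|B_0\|=\alpha$ is \emph{not} of order $\alpha^2=t_1$, so at $i=1$ the displayed estimate $\|P_i\|\leq c_2(t_i+t_{i-1})+C\alpha$ does not apply and one should instead use $(1-\beta)\|B_0\|=(1-\beta)\alpha$ directly, which is still $O(\alpha)$ and leaves the conclusion intact.
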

\begin{proof}
    That $\alpha$ must be small is not necessary, but this will be a typical setting and will help expedite the proof.
    Let $\alpha<1$ be small enough that $\|B_t\| \leq \lambda_{\min}(G)/4$, $t=1,2$, and $C\alpha \leq \lambda_{\min}(G)/4$. We prove by strong induction that $\|B_{t+1}\|\leq h_{t+1} \frac{2}{\lambda_{\min}(G)}$. The base case is immediate, so we move on to the inductive step. We can simply compute
    \[
    G + \beta B_t + (1-\beta) B_{t-1} + E_t \succeq G/2.
    \]
    As $J$ is an orthogonal matrix, we have $ \|B_{t+1}\| \leq 2 h_{t+1}/\lambda_{\min}(G)$ and the claim follows.

    With the decay of $B_t$ being settled, it follows that $B_t= h_{t+1} JG^{-1}J + O(h_{t+1}^2 + h_{t+1}\alpha)$, hence, normalizing by $h_{t+1}$, we have exact convergence to $JG^{-1}J$ in the noiseless setting.
\end{proof}

\textbf{Homogeneous}

We move on to the recurrence
\[
A_{t+1} = h_{t+1}J(\beta A_t + (1-\beta) A_{t-1} + E_t)^{-1}J
\]
where $E_t$ is an error term, $J$ is an involution, $\beta \in (0, 1)$,  $G\succ 0$,  $B_0 = \alpha I$, $h_1 = \alpha^2$, $B_1 = h_1 J(G+B_0)^{-1}J$, $h_{t+1} = h_t r^2$, $r<1$.

We start by reducing this iteration scheme to an autonomous equation.

\begin{proposition}[Autonomous Recurrence]\label{prop: autonomous}
    Define $c(r, \beta) := \frac{1}{\beta/r + (1-\beta)/r^2}$ and let $X_t := A_t/\sqrt{c(r,\beta) h_t}$, $R_t := \sqrt{c(r,\beta)} E_t/\sqrt{h_{t+1}}$, and $\eta = \frac{\beta/r}{c(r,\beta)}$. Then $X_t$ satisfies the autonomous recurrence
    \[
    X_{t+1} = J( \eta X_t + (1-\eta) X_{t-1} + R_t)^{-1}J
    \]
\end{proposition}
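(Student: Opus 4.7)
The plan is a direct algebraic reduction: substitute $A_i = \sqrt{c(r,\beta)\, t_i}\, X_i$ into the defining recurrence and exploit the geometric schedule $t_{i+1} = r^2 t_i$ to collapse all time-varying scales into a single common factor that can be pulled outside the matrix inverse, yielding an autonomous equation.

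Concretely, I would plug the substitution into the recurrence to get
\[
\sqrt{c\, t_{i+1}}\, X_{i+1} \;=\; t_{i+1}\, J\bigl(\beta \sqrt{c\, t_i}\, X_i + (1-\beta)\sqrt{c\, t_{i-1}}\, X_{i-1} + E_i\bigr)^{-1} J,
\]
and then use $\sqrt{t_i} = \sqrt{t_{i+1}}/r$ and $\sqrt{t_{i-1}} = \sqrt{t_{i+1}}/r^2$ to rewrite the argument of the inverse as $\sqrt{c\, t_{i+1}}\bigl[(\beta/r) X_i + ((1-\beta)/r^2) X_{i-1}\bigr] + E_i$. Factoring the scalar $\sqrt{c\, t_{i+1}}$ out of the inverse (it commutes with $J$) reduces the external scalar to $t_{i+1}/(c\, t_{i+1}) = 1/c$, so
\[
X_{i+1} \;=\; \tfrac{1}{c}\, J\Bigl(\tfrac{\beta}{r} X_i + \tfrac{1-\beta}{r^2} X_{i-1} + \tfrac{E_i}{\sqrt{c\, t_{i+1}}}\Bigr)^{-1} J.
\]
The defining identity $\tfrac{\beta}{r} + \tfrac{1-\beta}{r^2} = 1/c$ lets me write $\tfrac{\beta}{r} = \eta/c$ and $\tfrac{1-\beta}{r^2} = (1-\eta)/c$, with $\eta$ as in the statement, so pulling the $1/c$ out of the remaining inverse cancels the external $1/c$ and leaves exactly the stated autonomous form with $R_i = \sqrt{c}\, E_i/\sqrt{t_{i+1}}$.

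There is no substantive obstacle: the proof is pure bookkeeping, relying only on the commutation of scalars with $J$ and on the geometric schedule matching the weights $\beta$ and $1-\beta$ through the two powers of $r$ in $c(r,\beta)$. The conceptual content already sits in the choice of $c$ itself, which is engineered precisely so that $\eta$ and $1-\eta$ sum to one; this convex-combination structure is what will make the autonomous form amenable to the downstream stability arguments.
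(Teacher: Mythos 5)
Your proof is correct and follows essentially the same route as the paper's: a direct scalar renormalization exploiting the geometric schedule $t_{i+1}=r^2 t_i$, carried out in one substitution where the paper does it in two stages (first dividing by $\sqrt{t_{i+1}}$ to pass to $C_i := A_i/\sqrt{t_i}$, then by $\sqrt{c(r,\beta)}$). Your reading $\eta = c(r,\beta)\,\beta/r$, which makes $\eta$ and $1-\eta$ the normalized convex weights, is the one the paper's own proof uses, even though the statement literally writes $\eta = (\beta/r)/c(r,\beta)$.
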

\begin{proof}
    We work this out by simple algebra. First, as an intermediate step, introduce $C_t := A_t/ \sqrt{h_{t}}$, $R_t' := E_t/\sqrt{h_{t+1}}$. Then we can write
    \begin{align*}
        &A_{t+1} = h_{t+1}J(\beta A_t + (1-\beta) A_{t-1} + E_i)^{-1}J \\
        &\quad\quad \Longleftrightarrow \frac{A_{t+1}}{\sqrt{h_{t+1}}} = J\left(\beta \frac{A_t}{\sqrt{h_{t+1}}} + (1-\beta) \frac{A_{t-1}}{\sqrt{h_{t+1}}} + \frac{E_t}{\sqrt{h_{t+1}}}\right)^{-1}J\\
        &\quad\quad \Longleftrightarrow C_{t+1} = J([\beta/r] C_t + [(1-\beta)/ r^2] C_{t-1} + R_t')^{-1}J \\
        &\quad\quad \Longleftrightarrow C_{t+1} = c(r,\beta) J(\eta C_t + (1-\eta) C_{t-1} + R_t' c(r,\beta))^{-1}J\\
        &\quad\quad \Longleftrightarrow \frac{C_{t+1}}{\sqrt{c(r,\beta)}} = J\left(\eta \frac{C_t}{\sqrt{c(r,\beta)}} + (1-\eta) \frac{C_t}{\sqrt{c(r,\beta)}} +  R_t' \sqrt{c(r,\beta)}\right)J \\
        &\quad\quad \Longleftrightarrow X_{t+1} = J(\eta X_t + (1-\eta)X_{t-1} + R_t)^{-1}J.
    \end{align*}
\end{proof}

Thus to study $A_t$ it suffices to study $X_t$, then multiply by the corresponding factors.
Our analysis proceeds as follows. First, leveraging the Thompson metric \citep{thompson1963certain}, we show that, for summable error $R_t$, the spectrum of $X_t$ stays bounded away from 0 and $\infty$. Then, utilizing this bound, we go on to show convergence of $X_t$.
Define the Thompson metric,
    \[
    d_T(A, B) := \log \max\{M(A/B), M(B/A)\},\quad M(A/B):= \inf \{\gamma >0 : A \preceq \gamma B\}.
    \]

\begin{lemma}[Spectral Band]\label{lem: spectral band}
    Suppose that $\sum_i \|R_i\| \leq C\alpha$ and $\max\{d_T(X_0, I), d_T(X_1, I)\} = L_0$. Then, for $\alpha$ sufficiently small,
    \[
    [e^{-L_0} - O(\alpha)]I \preceq X_t \preceq [e^{L_0} + O(\alpha)] I
    \]
    for all $i$.
\end{lemma}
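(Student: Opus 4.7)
The plan is to show the Thompson metric $d_T(X_i, I)$ stays near $L_0$, with drift controlled by the summable perturbations $R_i$, and then convert this metric bound into the stated operator sandwich. I will exploit three standard facts about $d_T$ on the PSD cone: inversion is an isometry; conjugation by an orthogonal matrix is an isometry (so conjugation by the involution $J$ is isometric, and $J I J = I$); and the ``weak convexity'' inequality
\[
d_T(\eta A + (1-\eta)C,\ \eta B + (1-\eta)D) \leq \max\{d_T(A,B), d_T(C,D)\}.
\]
These together imply that in the noiseless recurrence $\tilde{X}_{i+1} = J(\eta X_i + (1-\eta)X_{i-1})^{-1} J$, we have $d_T(\tilde{X}_{i+1}, I) \leq \max\{d_T(X_i,I), d_T(X_{i-1},I)\}$.

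Next I would quantify the effect of the perturbation $R_i$. Writing $Y_i := \eta X_i + (1-\eta)X_{i-1}$ and $\lambda_i := \lambda_{\min}(Y_i)$, the identity $Y_i + R_i = Y_i^{1/2}(I + Y_i^{-1/2} R_i Y_i^{-1/2})Y_i^{1/2}$ gives, whenever $\|R_i\|/\lambda_i < 1$,
\[
\bigl(1 - \|R_i\|/\lambda_i\bigr) Y_i \preceq Y_i + R_i \preceq \bigl(1 + \|R_i\|/\lambda_i\bigr) Y_i,
\]
so $d_T(Y_i + R_i, Y_i) \leq -\log\bigl(1 - \|R_i\|/\lambda_i\bigr)$. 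Combining with isometry of inversion and $J$-conjugation,
\[
d_T(X_{i+1}, I) \leq \max\{d_T(X_i,I), d_T(X_{i-1},I)\} - \log\bigl(1 - \|R_i\|/\lambda_i\bigr).
\]

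With this recursive inequality in hand, the main step is a simultaneous induction. Let $L_i := d_T(X_i, I)$. I claim
\[
L_i \leq L_0 + C'\sum_{j < i} \|R_j\|, \qquad C' := 2e^{L_0 + 1}.
\]
The base cases $i=0,1$ hold by hypothesis. For the inductive step, assume this bound holds up to $i$. By the summability assumption $\sum_j \|R_j\| \leq C\alpha$, choosing $\alpha$ small enforces $L_j \leq L_0 + 1$ for $j \leq i$, hence $X_j \succeq e^{-L_0 - 1} I$, so $\lambda_i \geq e^{-L_0 - 1}$. Then $\|R_i\|/\lambda_i \leq e^{L_0 + 1}\|R_i\|$, which for $\alpha$ small is at most $1/2$, and $-\log(1 - e^{L_0+1}\|R_i\|) \leq 2 e^{L_0+1}\|R_i\| = C' \|R_i\|$. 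Substituting into the recursive inequality closes the induction.

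Finally, I would convert back to operator order. By the induction, $L_i \leq L_0 + C' C\alpha =: L_0 + C''\alpha$ uniformly, so
\[
e^{-L_0 - C''\alpha} I \preceq X_i \preceq e^{L_0 + C''\alpha} I.
\]
Expanding the exponentials at $\alpha = 0$ gives $e^{\pm(L_0 + C''\alpha)} = e^{\pm L_0}(1 + O(\alpha))$, which yields the claimed sandwich. The main obstacle is the apparent circularity in the induction: the per-step perturbation bound depends on $\lambda_{\min}(Y_i)$, which itself depends on prior iterates. This is resolved exactly by choosing $\alpha$ small enough that the cumulative drift stays below a fixed constant (say $1$), which keeps $\lambda_i$ uniformly bounded below and allows the induction to close cleanly.
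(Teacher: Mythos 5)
Your proof is correct and follows essentially the same route as the paper's: the multiplicative congruence bound $(1\pm\xi_i)Y_i$ on the perturbed average, the resulting Thompson-metric drift estimate of order $\|R_i\|\cdot\lambda_{\min}(Y_i)^{-1}$ per step, and summability of the $R_i$ to keep the cumulative drift $O(\alpha)$. The only difference is bookkeeping — you telescope additively in $L_i$ with an explicit Lipschitz constant, while the paper telescopes in $e^{-L_i}$ via $e^{-L_{i+1}}\geq e^{-L_i}-2\|R_i\|$ — and both close the apparent circularity the same way, by taking $\alpha$ small enough that the spectrum stays in a fixed band.
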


\begin{proof}
    Notice that
    \[
    (1- \xi_t)[\eta X_t + (1-\eta) X_{t-1}] \preceq \eta X_t + (1-\eta) X_{t-1} + R_t \preceq (1+\xi_t) [\eta X_t + (1-\eta) X_{t-1}]
    \]
    for $\xi_t := \| [\eta X_t + (1-\eta) X_{t-1}]^{-1/2} R_t [\eta X_t + (1-\eta) X_{t-1}]^{-1/2}\|.$ It follows that
    \[
    d_T([\eta X_t + (1-\eta) X_{t-1}] + R_t, [\eta X_t + (1-\eta) X_{t-1}]) \leq \log \frac{1+\xi_t}{1-\xi_t}
    \]
    Hence, if $e^{-L_t}I\preceq X_t, X_{t-1} \preceq e^{L_t} I$, it follows that $\xi_t \leq e^{L_t} \|R_t\|$ and
    \[
    L_{t+1} \leq L_t + \log \frac{1 + \|R_t\| e^{L_t}}{1-\|R_t\| e^{L_t}} \Longleftrightarrow e^{-L_{t+1}} \geq e^{-L_t} \left(\frac{e^{-L_t} - \|R_t\|}{e^{-L_t}+ \|R_t\|}\right)\Longrightarrow e^{-L_{t+1}} \geq e^{-L_t} - 2\|R_t\|.
    \]
    Thus,
    \[
    e^{-L_{t}} \geq e^{-L_0} - 2 \sum_{i=1}^\infty \|R_t\| \geq e^{-L_0}/2
    \]
    choosing $\alpha$ sufficiently small.
\end{proof}

\begin{lemma}[Non-Expansive]\label{lem: non-expansive}
    Let $X_t$, $X_t'$ be autonomous homogeneous recurrences of the form of Proposition \ref{prop: autonomous}, with common conjugation matrix $J$. Then
    \[
    d_T(X_{t+1}, X_{t+1}') \leq \max\{d_T(X_{t}, X_{t}'), d_T(X_{t-1}, X_{t-1}')\} + O(\|R_t\| + \|R_t'\|)
    \]
\end{lemma}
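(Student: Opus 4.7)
The proof rests on three standard properties of the Thompson metric $d_T$ on the positive-definite cone: (i) congruence by an invertible matrix is an isometry, so in particular conjugation by the involution $J$ preserves $d_T$; (ii) matrix inversion is an isometry, $d_T(A^{-1}, B^{-1}) = d_T(A, B)$; and (iii) positive linear combinations are non-expansive, meaning that for $\lambda_1, \lambda_2 > 0$ one has $d_T(\lambda_1 A_1 + \lambda_2 A_2,\ \lambda_1 A_1' + \lambda_2 A_2') \le \max\{d_T(A_1, A_1'),\, d_T(A_2, A_2')\}$, which follows immediately from the definition of $d_T$ via the relation $M(\cdot/\cdot)$. Together with the uniform spectral bound from Lemma~\ref{lem: spectral band}, these suffice.

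First I would apply (i) and (ii) directly to the recurrence $X_{i+1} = J(\eta X_i + (1-\eta) X_{i-1} + R_i)^{-1} J$ to obtain
\[
d_T(X_{i+1}, X_{i+1}') = d_T\bigl(\eta X_i + (1-\eta) X_{i-1} + R_i,\ \eta X_i' + (1-\eta) X_{i-1}' + R_i'\bigr).
\]
Setting $A := \eta X_i + (1-\eta) X_{i-1}$ and $A' := \eta X_i' + (1-\eta) X_{i-1}'$, the triangle inequality gives
\[
d_T(A+R_i,\, A'+R_i') \le d_T(A, A+R_i) + d_T(A, A') + d_T(A', A'+R_i').
\]
Property (iii) (using $\eta, 1-\eta > 0$) bounds the middle term by $\max\{d_T(X_i, X_i'), d_T(X_{i-1}, X_{i-1}')\}$. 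For each outer term, Lemma~\ref{lem: spectral band} supplies a uniform lower bound $A, A' \succeq c I$ for $\alpha$ small, so that $(1-\|R_i\|/c) A \preceq A + R_i \preceq (1+\|R_i\|/c) A$. This yields $d_T(A, A+R_i) \le \log\!\tfrac{1+\|R_i\|/c}{1-\|R_i\|/c} = O(\|R_i\|)$, and analogously for $R_i'$. Summing the three contributions gives the advertised inequality.

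The main obstacle is ensuring that the uniform spectral lower bound $c > 0$ can be invoked simultaneously for both sequences $X_i$ and $X_i'$. This requires the hypotheses of Lemma~\ref{lem: spectral band} to hold for both iterations with a common initial Thompson-distance budget and a common summable error budget $\sum_i(\|R_i\| + \|R_i'\|) \lesssim \alpha$; modulo that mild strengthening, the rest of the argument is a purely formal application of the standard $d_T$ toolkit.
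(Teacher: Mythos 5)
Your proposal is correct and follows essentially the same route as the paper's proof: invariance of $d_T$ under conjugation and inversion, the triangle inequality splitting off the two perturbation terms, non-expansiveness of the positive average for the middle term, and the spectral band from Lemma~\ref{lem: spectral band} to get the $O(\|R_i\| + \|R_i'\|)$ control on the perturbations. Your closing caveat about invoking the spectral bound for both sequences simultaneously matches what the paper implicitly assumes as well.
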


\begin{proof}
    Set $M_t = \eta X_t + (1-\eta) X_{t-1}, M_t' = \eta X_t' + (1-\eta) X_{t-1}'$.
    Using the invariance of the Thompson metric to conjugate by orthogonal matrices and invert, we can compute
    \begin{align*}
        d_T(X_{t+1}, X_{t+1}')&= d_T(\eta X_t + (1-\eta) X_{t-1} + R_t, \eta X_t' + (1-\eta) X_{t-1}' + R_t') \\
        &\leq d_T(M_t, M_t') + d_T(M_t, M_t + R_t) + d_T(M_t', M_t' + R_t').
    \end{align*}
    Checking each term,
    \[
    d_T(M_t, M_t') \leq \max\{d_T(X_{t}, X_{t}'), d_T(X_{t-1}, X_{t-1}')\}
    \]
    as averaging in the Thompson metric is non-expansive.
    \[
    d_T(A, A + A) = \|\log A^{-1/2} (A + E) A^{-1/2} \| = \| \log (I + A^{-1/2} E A^{-1/2})\| = O(\|E\|)
    \]
    for $A$ with non-vanishing spectrum. Lemma \ref{lem: spectral band} verifies that the sequences $X_t,X_t'$ do not degenerate for $\alpha$ sufficiently small, thus showing the claim.
\end{proof}

\begin{lemma}[Definite Hessian]\label{lem: convergence}
    Let $J = I$. Then for $\alpha$ sufficiently small, $X_t \to I.$
\end{lemma}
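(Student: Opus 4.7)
The strategy is to identify $I$ as the unique positive fixed point of the noiseless recurrence, linearize around it, and exploit the resulting geometric contraction to drive $X_i \to I$.

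When $J = I$ and $R_i \equiv 0$, setting $X_{i-1} = X_i = X^*$ forces $X^* = (X^*)^{-1}$, so $X^* = I$ is the unique positive solution. Defining $Z_i := X_i - I$ and invoking Lemma \ref{lem: spectral band} to keep the orbit in a bounded subset of the positive cone, a Neumann expansion of $(I + \eta Z_i + (1-\eta) Z_{i-1} + R_i)^{-1}$ yields the perturbed linear recurrence
\begin{equation*}
Z_{i+1} + \eta Z_i + (1-\eta)Z_{i-1} \;=\; -R_i + Q_i,\qquad \|Q_i\| = O\bigl(\|Z_i\|^2 + \|Z_{i-1}\|^2\bigr).
\end{equation*}
The homogeneous part has characteristic polynomial $\lambda^2 + \eta \lambda + (1-\eta)$; for $\eta \in (0,1)$ both roots satisfy $|\lambda| < 1$ (equal to $\sqrt{1-\eta}$ in the complex case and real with values in $(-1,0)$ otherwise), giving a linear contraction at rate $\rho := \max\{|\lambda_\pm|\} < 1$. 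Note that at $\eta = 1$ one root sits at $-1$, recovering the instability observed in Example \ref{exp: momentum}.

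A discrete Duhamel expansion in the two linearly independent solutions of the characteristic equation then yields
\begin{equation*}
\|Z_i\| \;\lesssim\; \rho^i(\|Z_0\| + \|Z_1\|) + \sum_{j<i} \rho^{i-j}\bigl(\|R_j\| + \|Z_j\|^2\bigr).
\end{equation*}
Once $\max_j \|Z_j\|$ sits below the effective spectral gap, the quadratic remainder is absorbed by the linear contraction; the summability $\sum \|R_i\| = O(\alpha)$ drives the forcing contribution to zero as $\alpha\to 0$, and a standard bootstrap gives $Z_i \to 0$.

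The main obstacle is bridging the ambient bound of Lemma \ref{lem: spectral band} — which confines $X_i$ only to a fixed compact set — to the small neighborhood of $I$ where the local bootstrap closes. One route is to sharpen Lemma \ref{lem: non-expansive} by taking the reference sequence $X_i' \equiv I$ (itself a fixed point of the recurrence) and upgrading it from non-expansion to \emph{strict} contraction of $\max\{d_T(X_i,I), d_T(X_{i-1},I)\}$; this strictness comes from the fact that matrix inversion is a $d_T$-isometry about $I$ while the weighted average $\eta X_i + (1-\eta)X_{i-1}$ is strictly contractive toward $I$ whenever $X_i \ne X_{i-1}^{-1}$, by strict convexity of $\lambda \mapsto e^\lambda$ applied spectrally. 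Combining this global attraction with the local Duhamel argument above then yields $X_i \to I$.
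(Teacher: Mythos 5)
Your local analysis near $I$ is correct and is genuinely different from the paper's argument: linearizing $X_{i+1}=(\eta X_i+(1-\eta)X_{i-1}+R_i)^{-1}$ about $I$ does give $Z_{i+1}+\eta Z_i+(1-\eta)Z_{i-1}=-R_i+O(\|Z_i\|^2+\|Z_{i-1}\|^2)$, the roots of $\lambda^2+\eta\lambda+(1-\eta)$ are Schur-stable for $\eta\in(0,1)$ with the marginal root $-1$ appearing exactly at $\eta=1$, and the Duhamel bootstrap would close once the orbit enters a sufficiently small neighborhood of $I$. This is a nice quantitative complement to the paper, which instead runs a LaSalle invariance argument on the Lyapunov function $V(x,y)=\max\{d_T(x,I),d_T(y,I)\}$, rules out nontrivial invariant sets by tracking extremal eigenpairs across iterations, and handles the perturbations via the theory of asymptotically autonomous systems.

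The gap is the global step, and your proposed fix is not merely incomplete but incorrect as stated. You claim the averaging step is strictly $d_T$-contractive toward $I$ ``whenever $X_i\neq X_{i-1}^{-1}$.'' Take $X_i=X_{i-1}=e^{L}I$ with $L>0$: then $X_i\neq X_{i-1}^{-1}=e^{-L}I$, yet $X_{i+1}=e^{-L}I$ and $d_T(X_{i+1},I)=L=\max\{d_T(X_i,I),d_T(X_{i-1},I)\}$, so there is no contraction. The correct obstruction is that $d_T(X_{i+1},I)$ fails to strictly decrease precisely when $X_i$ and $X_{i-1}$ share an extremal eigenvector at the \emph{same} extreme $e^{\pm L}$ (inversion is a $d_T$-isometry about $I$, so only the average can contract, and $\lambda_{\max}(\eta X_i+(1-\eta)X_{i-1})=e^{L}$ forces a common top eigenvector). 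Moreover, even with the correct criterion, one-step strict decrease of $V$ is impossible in general because $V_{i+1}$ still carries the term $d_T(X_i,I)$, and the amount of decrease over any fixed number of steps degenerates as the pair approaches the bad configurations, so no uniform contraction factor exists. This is exactly why the paper does not attempt a Banach-type argument: it shows $V$ is nonincreasing, invokes the discrete LaSalle principle to restrict attention to invariant sets with constant $V$, and then derives a contradiction by showing that any extremal eigenpair of such an orbit must recur with reciprocal eigenvalue two steps later, whence Jensen's inequality ($-\log(\eta e^{c}+(1-\eta)e^{-c})\in(-c,c)$) forces the extremal value to eventually drop. To repair your proof you would need to replace the claimed strict contraction with a multi-step eigenvector-tracking argument of this kind, or some other compactness-based mechanism that funnels the orbit into the basin where your linearization applies; as written, the bridge from the compact band of Lemma~\ref{lem: spectral band} to a neighborhood of $I$ is missing.
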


\begin{proof}
    Define $T(x,y) = ((\eta x + (1-\eta)y)^{-1}, x)$, hence $T(X_t, X_{t-1}) = (X_{t+1}, X_t)$. Define
    \[
    V(x,y) = \max \{d_T(x,I), d_T(y,I)\} = \max \{|\log \lambda_{\max}(x)|, |\log \lambda_{\max}(y)|, |\log \lambda_{\min}(x)|, |\log \lambda_{\min}(y)|\}.
    \]
    Notice that
    \[
    \min\{\lambda_{\min}(x), \lambda_{\min}(y)\} \leq \lambda_{\min}(\eta x + (1-\eta)y)^{-1} \leq \lambda_{\max}(\eta x + (1-\eta)y)^{-1} \leq \max\{\lambda_{\max}(x), \lambda_{\max}(y)\}
    \]
    hence $V(T(X_{t}, X_{t-1})) \leq V(X_t, X_{t-1})$. Thus the noiseless recurrence is autonomous, and the discrete Lasalle invariance principle \citet[Theorem 2]{mei2017lasalle} yields the existence of a set $E_c$, $T(E_c) \subseteq E_c$ such that for all $(X,Y)\in E_c$, $V(E_c) = c$ and $\lim_{t\to \infty} \inf_{E\in E_c} \|(X_t, X_{t-1}) - E\| \to 0$. To verify that $X_t \to I$ in the noiseless case, it suffices to show $E_0 = \{(I,I)\}$ is the only invariant set with constant value function.

    We argue by contradiction. Let $E_c$ be such a limit set, $c> 0$. Take $(X_1, X_0) \in E_c$, and consider $(X_3, X_2) = T^2(X_1, X_0)$. By the condition on the value function $V$, either the largest eigenvalue of these matrices is $e^c$ or the smallest is $e^{-c}$. Without loss of generality, assume that $X_3$ achieves the eigenvalue $e^c$.

    For this to occur, it must be that both $X_2, X_1$ have eigenvalue $e^{-c}$ with a common eigenvector $v$. As the sequence is rational in $X_2, X_1$, $v$ will be an eigenvector for all subsequent matrices $X_t$. Notice $\log \lambda_v(X_4) = -\log (\eta e^c + (1-\eta) e^{-c}) \in (-c, c)$, hence $\sup_{t>0}|\log \lambda_v(X_{3+t})|<c$, and this eigenspace can never again be extremal.
    Indeed, for $\lambda_v(X_{3+t})=e^{\pm c}$, we require $\lambda_v(X_{2+t}) =  \lambda_v(X_{1+t}) = e^{\mp c}$,
    thus any extremal eigenspace for $T^2(X_1,X_0)$ will decay upon further iterations, and non-extremal eigenspaces cannot become extremal. In particular, $V(T^k(X_1, X_0))$ must decrease as $k\to \infty$, contradicting that $E_c$ is invariant with fixed value function $V(E) = c$ for all $E\in E_c$.

    In the noisy setting, Lemma \ref{lem: spectral band} verifies that the recurrence remains compact for $\alpha$ sufficiently small. Hence, we can apply \citet[Theorem 1.8]{mischaikow1995asymptotically} to verify that the perturbed recurrence converges to the same limit sets as the autonomous recurrence, i.e. $I$ is the unique limit point.
\end{proof}

\begin{lemma}[Generic J]\label{lem: generic J}
    Suppose that $J$ has signature $(k, j)$. Then, for $\alpha$ sufficiently small, $X_i \to L\in \{X: (XJ)^2 = I\}$. Further, $L$ has at most $2 \min\{k, j\}$ eigenvalues not equal to 1.
\end{lemma}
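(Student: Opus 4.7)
\emph{Step 1: Fixed points.} A pair $(L,L)$ is a fixed point of the noiseless map $T(X,Y)=(J(\eta X+(1-\eta)Y)^{-1}J,\,X)$ iff $L=JL^{-1}J$, equivalently $LJL=J$ (using $J^2=I$), equivalently $(LJ)^2=I$. Denote this set by $\mathcal{F}$.

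\emph{Step 2: Convergence.} For any chosen $L^{*}\in\mathcal{F}$ the plan is to run the Lyapunov--LaSalle argument from the proof of Lemma \ref{lem: convergence} with distance to $I$ replaced by distance to $L^{*}$. Concretely I set $V_{L^{*}}(X,Y):=\max\{d_T(X,L^{*}),\,d_T(Y,L^{*})\}$. Using the invariance of the Thompson metric under orthogonal conjugation and inversion, combined with $JL^{*-1}J=L^{*}$, one gets $d_T(T(X,Y)_1,L^{*})=d_T(\eta X+(1-\eta)Y,L^{*})$. The weak convexity of $d_T$ under convex averages (already exploited in the proof of Lemma \ref{lem: non-expansive}) then yields $V_{L^{*}}(T(X,Y))\le V_{L^{*}}(X,Y)$. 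Combined with Lemma \ref{lem: spectral band}, which keeps the orbit precompact in the positive definite cone for $\alpha$ small, the discrete LaSalle principle applies. Replicating the extremal--eigenspace-decay argument of Lemma \ref{lem: convergence} (now with extremal generalized eigenvalues of $L^{*-1/2}X_iL^{*-1/2}$ in place of the eigenvalues of $X_i$) forces the $\omega$-limit set onto the diagonal $\{(L,L):L\in\mathcal{F}\}$. For the perturbed recurrence, the summability $\sum_i\|R_i\|=O(\alpha)$ (immediate from geometric $t_i$-decay and the standing bound $\|E_i\|=O(\alpha)$) together with the Mischaikow asymptotically-autonomous theorem already cited in Lemma \ref{lem: convergence} gives convergence to this same limit set.

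\emph{Step 3: Eigenvalue count.} In a basis aligned with the $\pm 1$-eigenspaces of $J$, write $J=\operatorname{diag}(I_k,-I_j)$ and $L=\begin{pmatrix}A&B\\ B^{T}&C\end{pmatrix}$. Unfolding $LJL=J$ gives the three block equations $A^{2}-BB^{T}=I_k$, $C^{2}-B^{T}B=I_j$, and $AB=BC$. Take the SVD $B=U\Sigma V^{T}$ with $r=\operatorname{rank}(B)\le\min(k,j)$; the change of basis $(x_1,x_2)\mapsto(U^{T}x_1,V^{T}x_2)$ acts orthogonally within each $J$-eigenspace and therefore preserves $J$. In the new basis $A$ and $C$ are the (unique) positive square roots of diagonal matrices, hence themselves diagonal with entries $\sqrt{1+\sigma_{\ell}^{2}}$ in the first $r$ positions and $1$ elsewhere; the relation $AB=BC$ is then automatic. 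Each nonzero $\sigma_{\ell}$ couples the corresponding coordinates into a $2\times 2$ subblock $\begin{pmatrix}\sqrt{1+\sigma_{\ell}^{2}}&\sigma_{\ell}\\ \sigma_{\ell}&\sqrt{1+\sigma_{\ell}^{2}}\end{pmatrix}$ with eigenvalues $\sqrt{1+\sigma_{\ell}^{2}}\pm\sigma_{\ell}$, both $\neq 1$, while the remaining $D-2r$ coordinates contribute eigenvalue $1$. Hence at most $2r\le 2\min\{k,j\}$ eigenvalues of $L$ differ from $1$, as claimed.

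\emph{Main obstacle.} The delicate point is Step 2: extending the LaSalle analysis of Lemma \ref{lem: convergence} from the rigid case $J=I$ (where $\mathcal{F}=\{I\}$) to the present continuous family $\mathcal{F}$. The auxiliary function $V_{L^{*}}$ depends on a preselected $L^{*}$, but the limit the orbit actually approaches is not known in advance. My plan is to replace $V_{L^{*}}$ by $f(i):=\inf_{L\in\mathcal{F}}V_{L}(X_i,X_{i-1})$, which inherits monotonicity, and to show $f(i)\to 0$ by the same extremal-eigenspace argument applied to whichever $L\in\mathcal{F}$ achieves (or nearly achieves) the infimum. The key strictness statement — that the $\eta$-averaging cannot preserve the distance to $L^{*}$ unless the two preceding iterates share a common extremal eigenspace with $L^{*}$, a configuration destroyed on the next iterate — is precisely the mechanism used in the proof of Lemma \ref{lem: convergence} and transfers once the convexity facts for $\log\lambda_{\max}$ and $\log\lambda_{\min}$ (under $L^{*}$-weighted Rayleigh quotients) are recorded.
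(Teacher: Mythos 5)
Your Step 1 (characterization of fixed points as $\{L\succ 0: LJL=J\}$) and Step 3 (the eigenvalue count) are correct. Step 3 is in fact more explicit than the paper's argument: the paper reasons abstractly about eigenpair $2$-cycles $(\lambda,v)\leftrightarrow(1/\lambda,Jv)$ and the signature of $J$ restricted to their span, whereas your block computation $A^2-BB^T=I_k$, $C^2-B^TB=I_j$, $AB=BC$ followed by the SVD of $B$ pins down the limit set completely and makes the bound $2\operatorname{rank}(B)\le 2\min\{k,j\}$ transparent. That part I would accept as a cleaner replacement for the paper's structural argument.

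The genuine gap is Step 2, and it is exactly the step you flag as the "main obstacle." Your plan is to transfer the LaSalle argument of Lemma \ref{lem: convergence} by replacing $d_T(\cdot,I)$ with $d_T(\cdot,L^*)$ (or the infimum over $L\in\mathcal{F}$) and invoking the same "extremal eigenspace is destroyed on the next iterate" mechanism. But that mechanism does \emph{not} transfer to general $J$: for $J\neq I$ the map $T$ admits invariant configurations in which extremal eigenpairs persist forever, cycling as $(\lambda,v)\mapsto(1/\lambda,Jv)$ — this is precisely why the limit $L$ can have eigenvalues different from $1$, unlike the $J=I$ case where the limit is forced to be $I$. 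Consequently, the largest invariant set on which your Lyapunov function is constant is not obviously contained in the diagonal of $\mathcal{F}$, and the strict-decrease claim you defer ("destroyed on the next iterate") is false as stated in this generality. The paper handles this by a different and more laborious route: it first shows that on any invariant level set the extremal eigenpairs must be \emph{fixed} subspaces shared by all iterates, then truncates those $\kappa$ extremal directions, observes the truncated sequence is asymptotically autonomous, reapplies the Mischaikow theorem, and inducts on the remaining dimensions to extract a single limit $L$ satisfying $L=JL^{-1}J$. Your distance-to-$\mathcal{F}$ Lyapunov function might ultimately yield a cleaner proof, but you would still need to (i) prove the invariant-set classification relative to a moving $L^*\in\mathcal{F}$ (the non-compactness of $\mathcal{F}$ and the fact that the achieving $L$ changes with $i$ both need care), and (ii) upgrade "orbit approaches the set $\mathcal{F}$" to convergence to a single point, e.g.\ by noting that once some $(L_1,L_1)$ with $L_1\in\mathcal{F}$ lies in the $\omega$-limit set, the monotone quantity $V_{L_1}(X_i,X_{i-1})$ has a subsequence tending to $0$ and hence tends to $0$. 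Neither of these is carried out, so as written the convergence claim is unproved. (A minor additional slip: the summability $\sum_i\|R_i\|=O(\alpha)$ does not follow from $\|E_i\|=O(\alpha)$ after the rescaling $R_i=\sqrt{c(r,\beta)}\,E_i/\sqrt{t_{i+1}}$; one needs $\|E_i\|=O(t_i)$, which is what the paper actually establishes for the homogeneous block.)
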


\begin{proof}
    We update the recurrence in Lemma \ref{lem: convergence} by $J$ congruence. $V$ is still monotone in the autonomous noiseless recurrence as $J$ is orthogonal. We begin by investigating the structure of any invariant set $E_c$.

    As in the previous proof, if we initialize with a pair in $E_c$, then for any $t \geq 2$, and all extremal eigenpairs $(\lambda_\ell, v_\ell)$, it must be that for both $j=t-1,t-2$, the matrices $X_j$ have matching eigenpairs $(1/\lambda_\ell, Jv_\ell)$. Repeating this argument for $X_{t+1}$, for any extremal eigenpairs $(\nu_\ell, u_\ell)$ it follows that $X_t, X_{t-1}$ have matching eigenpairs $(1/\nu_{\ell}, Ju_{\ell})$. Combining this information, for each extremal eigenpair $(\nu_\ell, u_{\ell})$ of $X_{t+1}$, $(1/\nu_\ell, J u_{\ell})$ must be an eigenpair of both $X_t$ and $X_{t-1}$. As $(\lambda_\ell, v_\ell) = (1/\nu_{\ell},  Ju_{\ell})$ is an extremal eiegenpair of $X_t$, it must also be that $(\nu_{\ell}, u_{\ell})$ is an eigenpair of $X_{t-1}$. Hence the matrix $X_{t-1}$ has both $(1/\nu_\ell, J u_\ell)$ and $(\nu_{\ell}, u_{\ell})$ as extremal eigenpairs, and for them to pass forward in the recurrence they must be shared by $X_{t-1}.$ Thus we see that for all $m \geq t$, $(1/\nu_\ell, J u_\ell)$ and $(\nu_{\ell}, u_{\ell})$ are fixed eigenvectors of $X_m$.

    Thus we have proven that any limiting set in $E_c$ has fixed, cyclical extremal eigenpairs. Suppose that $X_t$ has $\kappa$ such limiting maximal eigenvalues. Let $\operatorname{Trunc}_{\kappa}(X)$ be the map that identifies $X$ with the $(D - \kappa) \times (D - \kappa)$ matrix formed by removing the principal $\kappa$ components.
    Thus, $\operatorname{Trunc}_{\kappa}T(X_{t+1}, X_t) = T(\operatorname{Trunc}_{\kappa}(X_{t}, X_{t-1})) + o(1)$, and the sequence $\operatorname{Trunk}_\kappa(X_t, X_{t-1})$ is asymptotically autonomous and monotone in the value function $V$. Hence \citet[Theorem 1.8]{mischaikow1995asymptotically} yields that the truncated sequence has asymptotically cyclical extremal eigenvectors.

    Inductively applying this argument verifies that $X_t \to L$ for a fixed matrix $L$ in the noiseless case. By definition
    \[
     L = \lim_{t\to \infty} X_{t+1} = \lim_{t\to \infty} J(\eta X_{t+1} + (1-\eta) X_t)^{-1} J = JL^{-1}J,
    \]
    hence $L \in \{X : (LJ)^2 = I\}.$ In the perturbed setting, Lemma \ref{lem: spectral band} verifies that the recurrence remains compact for $\alpha$ sufficiently small. Hence, we can once again apply \citet[Theorem 1.8]{mischaikow1995asymptotically} to verify that the recurrence converges to the same limit sets as the autonomous recurrence.

    We now study the structure of such $L$. As we demonstrated, every eigenpair $(\lambda, v)$ of $L$ must have matching eigenpair $(1/\lambda, Jv)$. Let $v_1,\dots, v_\kappa, Jv_1, \dots, J v_\kappa, u_1,\dots, u_{D-2\kappa}$, where $Ju_i = \pm u_i$ are invariant under $J$. Displaying $J$ in this basis, we have
    \[
    J =
    \begin{bmatrix}
        0 & I & 0\\
        I & 0 & 0\\
        0 & 0 & D
    \end{bmatrix}
    \]
    The first $2\kappa \times 2\kappa$ block has signature $(\kappa, \kappa)$, yielding the desired claim.
\end{proof}

\begin{lemma}[Commuting Seeds]\label{lem: second order}
    Let $r \in (0, 1)$, $\|R_t\| \leq C\alpha r^t$, $X_0, X_1$ commute with $J$. Then, for any $\varepsilon>0$, there exists $\delta > 0$ such that $X_t \to L_{\alpha}$, $\|L_\alpha - I\| < \varepsilon$.
\end{lemma}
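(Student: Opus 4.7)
The plan is to exploit the commuting assumption $X_0 J = J X_0$, $X_1 J = J X_1$ to reduce the problem to a controlled perturbation of the $J = I$ case handled in Lemma \ref{lem: convergence}, and then to quantify how the geometric noise $\|R_i\| \leq C \alpha r^i$ drives the limit off the identity by an amount of order $\alpha$.

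First I would pass to the eigenbasis of $J$, in which $J = \operatorname{diag}(I_k, -I_j)$ and conjugation $M \mapsto J M J$ preserves block-diagonal entries while negating off-block entries. Decompose each iterate as $X_i = D_i + O_i$ and the noise as $R_i = R_i^D + R_i^O$ into their block-diagonal and off-block parts. The commuting hypothesis forces $O_0 = O_1 = 0$, and the noiseless iteration preserves this zero off-block structure, so on each $J$-block the noiseless dynamics reduces exactly to the $J = I$ recurrence of Lemma \ref{lem: convergence}, already known to converge to $I$.

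Next I would use a Neumann expansion to separate the two halves of the noisy recurrence. Setting $\bar{D}_i := \eta D_i + (1-\eta) D_{i-1} + R_i^D$ and $\bar{O}_i := \eta O_i + (1-\eta) O_{i-1} + R_i^O$,
\[
(\bar{D}_i + \bar{O}_i)^{-1} = \bar{D}_i^{-1} - \bar{D}_i^{-1} \bar{O}_i \bar{D}_i^{-1} + O(\|\bar{O}_i\|^2),
\]
and projecting onto block-diagonal and off-block parts yields, to first order,
\[
D_{i+1} = \bar{D}_i^{-1} + O(\|O_i\|^2), \qquad O_{i+1} = \bar{D}_i^{-1} \bar{O}_i \bar{D}_i^{-1} + O(\|O_i\|^2).
\]
The block-diagonal equation is the $J = I$ recurrence of Lemma \ref{lem: convergence} perturbed by a term of size $O(\alpha^2)$, so its conclusion persists up to a limit shift of at most $O(\alpha^2)$. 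The off-block equation linearizes at $\bar{D}_i \to I$ to $O_{i+1} = \eta O_i + (1-\eta) O_{i-1} + R_i^O + \mathrm{h.o.t.}$, whose characteristic polynomial $\lambda^2 - \eta \lambda - (1-\eta)$ has roots $1$ and $-(1-\eta)$. Setting $Y_i := O_i - O_{i-1}$ eliminates the unit root and leaves the stable recurrence $Y_{i+1} = -(1-\eta) Y_i + R_i^O + \mathrm{h.o.t.}$; with $Y_1 = 0$ and $\|R_i^O\| \leq C \alpha r^i$, a standard convolution bound gives $\|Y_i\| = O(\alpha \max(1-\eta, r)^i)$, so $O_i \to O_\infty$ with $\|O_\infty\| = O(\alpha)$.

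To close the coupling between the two halves I would run an induction on $i$ with the invariant $\|D_i - I\| + \|O_i\| \leq K \alpha$, for a constant $K$ depending only on $\eta$ and $r$. Lemma \ref{lem: spectral band} supplies the uniform spectral band on $\bar{D}_i^{-1}$ needed to validate the Neumann expansion, and the strict contractivity of the block-diagonal linearization at $I$ (valid for $\eta < 1$, which holds because $\beta \in (0,1)$) combined with the rate $\max(1-\eta, r)$ of the differenced off-block recurrence let the invariant propagate once $\alpha$ is small. The limit $L_\alpha := \lim_i X_i$ then satisfies $\|L_\alpha - I\| \leq K \alpha$, and taking $\delta := \varepsilon/K$ finishes the proof. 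The main obstacle is precisely this bootstrap: one must show that the higher-order couplings generated by the Neumann expansion between $D_i$ and $O_i$ do not amplify across iterations, which is ensured by the two contraction rates above absorbing the couplings into the same $K \alpha$ envelope uniformly in $i$, provided $\alpha$ is small enough to validate the linearization throughout.
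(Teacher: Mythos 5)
Your first reduction is exactly the paper's: since $X_0,X_1$ commute with $J$ and the noiseless map preserves the commutant (equivalently, block-diagonality in the eigenbasis of $J$), the conjugation by $J$ cancels and the unperturbed dynamics is the $J=I$ recurrence of Lemma~\ref{lem: convergence}. Where you diverge is the perturbation step: the paper passes the noise through the Thompson-metric machinery (Lemmas~\ref{lem: spectral band} and~\ref{lem: non-expansive} give a uniform spectral band and non-expansiveness, so the perturbed orbit tracks the unperturbed one within $O(\sum_i\|R_i\|)=O(\alpha)$ and inherits its limit set via the asymptotically autonomous argument), whereas you linearize at $I$, split into block-diagonal and off-block components via a Neumann expansion, and run a bootstrap. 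The linearized picture is a genuinely informative supplement --- in particular your observation that the off-block recurrence has characteristic roots $1$ and $-(1-\eta)$, so that only differencing yields a contraction, makes quantitative something the paper leaves implicit.

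The gap is in the initialization of your bootstrap. The invariant $\|D_i-I\|+\|O_i\|\le K\alpha$ cannot hold at $i=0,1$: the lemma assumes only that $X_0,X_1$ commute with $J$ and (via Lemma~\ref{lem: spectral band}) lie in a fixed spectral band $[e^{-L_0},e^{L_0}]$; in the actual application the seeds are $\Theta(1)$ matrices such as $W^T(\bar QH^2\bar Q)^{-1}W/\sqrt{c}$, nowhere near $I$. Every ingredient of your quantitative phase --- the validity of the Neumann truncation, the strict contractivity of $E_{i+1}=-\eta E_i-(1-\eta)E_{i-1}$, and the differencing trick for $O_i$ --- is a statement about the linearization at $I$ and so only applies after the orbit has already entered a small neighborhood of $I$. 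You need a preliminary global phase (precisely the LaSalle argument of Lemma~\ref{lem: convergence}, with the perturbed orbit held within $O(\alpha)$ of the unperturbed one over the finitely many steps it takes to reach that neighborhood, e.g.\ by Lemma~\ref{lem: non-expansive}) before the local bootstrap can start. A secondary issue: because the undifferenced off-block recurrence is only marginally stable, the coupling terms of size $O(\|\bar D_i^{-1}-I\|)\,\|O_i\|$ that enter the differenced equation must be summable; the invariant $\|D_i-I\|\le K\alpha$ alone gives a non-decaying $O(K\alpha^2)$ per step, so you must additionally record that $\|D_i-I\|$ decays geometrically (which the contractive linearization plus $\|R_i^D\|\le C\alpha r^i$ does provide, but which your stated invariant omits).
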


\begin{proof}
    That $X_t$ has a limit $L_\alpha$ for $\alpha$ small follows immediately from Lemma \ref{lem: generic J}.
    Thus it suffices to verify that $\lim_{t\to \infty} X_i = I$ for the unperturbed sequence. Indeed, if for $t$ large enough, $\|X_{t} - I\| < \varepsilon, \|X_{t-1} - I\| < \varepsilon$, because $X_t$ is rational in $X_0, X_1, R_j$, $j = 1,\dots, i-1$, it follows that it is smooth for $\alpha$ small, thus incurring additional $O(\alpha)$ error. In combination with Lemma \ref{lem: spectral band}, this verifies the claim.

    Without perturbations, the whole sequence commutes, and
    \[
    J( \eta X_t + (1-\eta) X_{t-1})^{-1}J = ( \eta X_t + (1-\eta) X_{t-1})^{-1} J^2 = ( \eta X_t + (1-\eta) X_{t-1})^{-1}
    \]
    hence $J$ factors out, and the convergence is a consequence of Lemma \ref{lem: convergence}.
\end{proof}

\subsubsection{Full Rank Hessian}
Our main proof strategy is to reduce the Local EGOP iterations to the form of Proposition \ref{prop: autonomous}, so that we can guarantee stable first and second order convergence rates by Lemmas \ref{lem: shift} and \ref{lem: generic J}. Let $H$ have diagonalization $H= U \Lambda U^T$. A straightforward change of variables yields the following.

\begin{proposition}[H Reduction]\label{prop: H reduction}
    Let $J = \operatorname{sign}(\Lambda)$ be the diagonal signature matrix and $W = U |\Lambda|^{1/2}$.

    Define the transformation
    \begin{gather*}
        X_t := W^T \Sigma_t W, \qquad \tilde{g} := J W^{-1} g, \\
        \tilde{T}(X) := J W^{-1} T(W^{-T} X W^{-1}) W^{-T} J, \qquad \tilde{R}(X) := J W^{-1} R(W^{-T} X W^{-1}) W^{-T} J.
    \end{gather*}
    Then the recurrence for $X_t$ depends only on $J$:
    \begin{align*}
    X_{t+1} = t_{t+1} J \Big( &\tilde{g} \tilde{g}^T + \beta X_t + (1-\beta)X_{t-1} \\
    &+ \beta [ \tilde{g} \tilde{T}(X_t)^T + \tilde{T}(X_t) \tilde{g}^T + \tilde{R}(X_t) ] \\
    &+ (1-\beta)[ \tilde{g} \tilde{T}(X_{t-1})^T + \tilde{T}(X_{t-1}) \tilde{g}^T + \tilde{R}(X_{t-1}) ] \Big)^{-1} J
    \end{align*}
\end{proposition}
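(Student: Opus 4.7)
The plan is a direct substitution via the change of variables $\Sigma_i = W^{-T} X_i W^{-1}$, reducing the recurrence in equation \ref{eq: oracle egop} to one whose dependence on $H$ is only through its signature $J$. The central algebraic identity is $H = W J W^T$, which follows from writing $\Lambda = |\Lambda|^{1/2} J |\Lambda|^{1/2}$ (valid since $J$ is the sign pattern of $\Lambda$) and substituting into $H = U\Lambda U^T = U|\Lambda|^{1/2} J |\Lambda|^{1/2} U^T = WJW^T$. From this identity one immediately gets the two workhorse relations $W^{-1}H = JW^T$ and $HW^{-T} = WJ$, which make the quadratic-in-$H$ term collapse cleanly.

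First I would invert the change of variables: since $\Sigma_{i+1} = t_{i+1} B_i^{-1}$ for $B_i$ the bracketed matrix in equation \ref{eq: oracle egop}, we have $X_{i+1} = W^T \Sigma_{i+1} W = t_{i+1}(W^{-1} B_i W^{-T})^{-1}$. Next I would evaluate $W^{-1} B_i W^{-T}$ summand by summand. The rank-one piece gives $W^{-1}(gg^T)W^{-T} = (W^{-1}g)(W^{-1}g)^T = J\tilde{g}\tilde{g}^T J$ after inserting $J^2 = I$ and using $\tilde{g} = JW^{-1}g$. The quadratic piece gives $W^{-1}(H\Sigma_i H)W^{-T} = (JW^T)(W^{-T}X_iW^{-1})(WJ) = JX_iJ$ by direct cancellation. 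The mixed gradient--trilinear terms $gT(\Sigma_i)^T + T(\Sigma_i)g^T$ and the remainder $R(\Sigma_i)$ transform by applying the same conjugation to each tensorial factor, which is exactly the content of the definitions of $\tilde{T}(X)$ and $\tilde{R}(X)$ given in the statement.

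Finally, since every transformed summand has acquired an outer pair of $J$'s, I would factor these out to obtain
\[
W^{-1}B_iW^{-T} = J\Big(\tilde{g}\tilde{g}^T + \beta[X_i + \tilde{g}\tilde{T}(X_i)^T + \tilde{T}(X_i)\tilde{g}^T + \tilde{R}(X_i)] + (1-\beta)[\,\cdots\,]\Big)J.
\]
Inverting both sides and using $J^{-1} = J$ yields the claimed recurrence, which manifestly depends on $H$ only through $J$. I expect the proof to be essentially bookkeeping; the only structural input is the sign-decomposition $H = WJW^T$. The main obstacle is simply keeping the $J$-factors straight and verifying that the symmetric cross-term structure $\tilde{g}\tilde{T}(\cdot)^T + \tilde{T}(\cdot)\tilde{g}^T$ is preserved under conjugation --- a consequence of the symmetry of the original $T_g$ term together with the fact that conjugation by $W^{-1}$ applied separately to each outer factor commutes with transposition.
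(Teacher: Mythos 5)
Your proof is correct and is exactly the ``straightforward change of variables'' the paper invokes without writing out: the identity $H = WJW^T$ together with the relations $W^{-1}H = JW^T$ and $HW^{-T} = WJ$ does all the work, and the rest is the bookkeeping you describe. One small remark: since $T$ maps into $\mathbb{R}^D$, the stated definition $\tilde{T}(X) = JW^{-1}T(W^{-T}XW^{-1})W^{-T}J$ is dimensionally inconsistent (the trailing $W^{-T}J$ should be dropped), and your verification of the cross terms implicitly uses the corrected $\tilde{T}(X) = JW^{-1}T(W^{-T}XW^{-1})$, which is the intended reading.
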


Thus we can reduce our recurrence to that studied in Section \ref{sec: recurrence}.

\begin{lemma}[Schur Complement]\label{lem: schur}
    Define $u := g/\|g\|$, $\ \bar\pi := u u^T$, $\ \bar Q:= I - \bar\pi$, $H_{\bar Q} = \bar Q H \bar Q$. Let
    \begin{gather*}
    \alpha_t := u^T \Sigma_t u, \quad \alpha_t' := u^T H\Sigma_tH u, \quad \eta^2 := \|g\|^2, \quad b_t := \bar Q \Sigma_t u, \quad b_t' := \bar Q H \Sigma_t H u, \\
    C_t := \bar Q \Sigma_t \bar Q, \quad C_t' := \bar Q H \Sigma_t H \bar Q, \quad R_{b,t} := \eta \bar Q T(\Sigma_t) + \bar Q R(\Sigma_t) u,\\
    \bar R_{\alpha, t} := \beta[2 \eta u^T T(\Sigma_t) + u^T R(\Sigma_t) u] + (1-\beta)[2 \eta u^T T(\Sigma_{t-1}) + u^T R(\Sigma_{t-1}) u],\\
    \bar b_t = \beta[b_t' + R_{b,t}] + (1-\beta) [b_{t-1}' + R_{b,t-1}], \qquad \bar R_{C,t} := \beta\bar Q R(\Sigma_t)\bar Q + (1-\beta) \bar Q R(\Sigma_{t-1})\bar Q,\\
    \bar S_t := \beta H_{\bar Q} C_t H_{\bar Q} + (1-\beta) H_{\bar Q}C_{t-1} H_{\bar Q} + \bar R_{C,t} - \frac{1}{\eta^2 + \alpha_t + \bar R_{\alpha, t}}\bar b_t \bar b_t^T.
\end{gather*}
    Then, in the orthonormal basis $\{u\}\oplus \mathrm{range}(\bar Q)$,
    \begin{gather*}
    \begin{bmatrix}
        \alpha_{t+1} & b_{t+1}^T\\
        b_{t+1} & C_{t+1}
    \end{bmatrix}
    = t_{t+1}\\
    \begin{bmatrix}
        \displaystyle \frac{1}{\eta^2 + \alpha_t' + \bar R_{\alpha, t} } +  \left(\frac{1}{\eta^2 + \alpha_t' + \bar R_{\alpha, t} }\right)^2 \bar b_t^T \bar S_t^{-1} \bar b_t
        &
        \displaystyle - \frac{1}{\eta^2 + \alpha_t' + \bar R_{\alpha, t} }  \bar b_t ^T \bar S_t^{-1} \\
        \displaystyle - \bar S_t^{-1} \bar b_t  \frac{1}{\eta^2 + \alpha_t' + \bar R_{\alpha, t} }
        &
        \displaystyle \Big(\beta C_t' + (1-\beta)  C_{t-1}' + \bar R_{C,t} - \frac{1}{\eta^2 + \alpha_t + \bar R_{\alpha, t}}\bar b_t \bar b_t^T\Big)^{-1}\!
    \end{bmatrix}.
    \end{gather*}
\end{lemma}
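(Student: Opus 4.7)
The proof is a direct Schur complement calculation applied to the matrix being inverted in the recurrence. The plan is to write
$$M_i := gg^T + \beta[H\Sigma_i H + T_g(\Sigma_i) + R(\Sigma_i)] + (1-\beta)[H\Sigma_{i-1} H + T_g(\Sigma_{i-1}) + R(\Sigma_{i-1})]$$
in the orthonormal basis $\{u\}\oplus \operatorname{range}(\bar Q)$, identify the resulting block structure, invert it blockwise, and then multiply by $t_{i+1}$ to recover $\Sigma_{i+1}$.

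First, I will compute each block of $M_i$ in the chosen basis. The rank-one piece $gg^T$ contributes $\eta^2$ to the $(1,1)$ entry and zero elsewhere. The Hessian-sandwich terms $H\Sigma_{\cdot}H$ contribute the scalars $\alpha_i',\alpha_{i-1}'$ to the $(1,1)$ block, the vectors $b_i',b_{i-1}'$ to the off-diagonal, and the matrices $C_i',C_{i-1}'$ to the $(2,2)$ block, weighted by $\beta$ and $1-\beta$. The crucial algebraic observation is that the symmetric rank-two term $T_g(\Sigma) = g\,T(\Sigma)^T + T(\Sigma)\,g^T$ is annihilated by the sandwich $\bar Q\,\cdot\,\bar Q$ because $\bar Q g = 0$; hence $T_g$ contributes nothing to the $(2,2)$ block, while the surviving tangent contribution gives $\eta\,\bar Q T(\Sigma)$ in the off-diagonal (the $\eta \bar Q T(\Sigma_i)$ piece of $R_{b,i}$) and $2\eta\,u^T T(\Sigma)$ to the $(1,1)$ scalar (the piece of $\bar R_{\alpha,i}$). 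Packaging the remaining projections of $R(\Sigma_i),R(\Sigma_{i-1})$ into $\bar R_{\alpha,i}$, $R_{b,i}$, and $\bar R_{C,i}$ exactly as in the statement yields the block form $M_i = \begin{bmatrix} A & B^T\\ B & D\end{bmatrix}$ with $A$ a positive scalar collecting $\eta^2$, the Hessian contributions, and $\bar R_{\alpha,i}$; $B = \bar b_i$; and $D$ collecting the Hessian block contributions and $\bar R_{C,i}$.

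Second, with $A$ a positive scalar, I invoke the standard Schur complement inversion identity
$$\begin{bmatrix} A & B^T \\ B & D\end{bmatrix}^{-1} = \begin{bmatrix} A^{-1} + A^{-2} B^T S^{-1} B & -A^{-1} B^T S^{-1} \\ -S^{-1} B A^{-1} & S^{-1}\end{bmatrix},$$
where $S = D - A^{-1} B B^T$ is precisely $\bar S_i$ after substituting $B = \bar b_i$. Multiplying through by $t_{i+1}$ and reading off the four blocks produces the claimed expressions for the scalar, vector, and matrix blocks of $\Sigma_{i+1}$ in the basis $\{u\}\oplus\operatorname{range}(\bar Q)$. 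The positive definiteness of $M_i$, needed both to justify inversion and to guarantee $A > 0$ and $\bar S_i \succ 0$, follows from the isotropic initialization $\Sigma_0 = \alpha I$ and inductive control on the magnitudes of $T$ and $R$ via Lemma \ref{lem: Generic Taylor}, for $\alpha$ sufficiently small.

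The argument is mechanical and the only real obstacle is bookkeeping. The one-step lookback interleaves $i$ and $i-1$ contributions across every block, and the various $T_g$ and $R$ pieces must be carefully aggregated into the lemma's compact remainder notation; once this accounting is in place, the identity is immediate from the block matrix inversion formula and no further ideas are required.
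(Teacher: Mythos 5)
Your proposal is correct and takes essentially the same route as the paper, whose entire proof is the one-line observation that the identity is the Schur complement pivoted on the top-left scalar of $M_i = \beta\mathcal{L}(\mu_i) + (1-\beta)\mathcal{L}(\mu_{i-1})$; your version simply spells out the block bookkeeping (in particular that $\bar Q g = 0$ kills $T_g$ in the $(2,2)$ block and halves it in the off-diagonal) that the paper leaves implicit.
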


\begin{proof}
    This is the Schur complement with pivot on the top-left scalar applied to $M_t := [\beta \mathcal{L}(\mu_t) + (1-\beta)\mathcal{L}(\mu_{t-1})] ^{-1}$.
\end{proof}

\begin{corollary}[Base Case]\label{cor: base case}
    Suppose that $\Sigma_0 = \alpha I$ and $t_1 = \alpha^2$. Then, in the $(u,\bar Q)$ basis,
    \[
    \Sigma_1 =
    \begin{bmatrix}
        \frac{\alpha^2}{\eta^2} + O(\alpha^3) & O(\alpha^2)\\
        O(\alpha^2) & \alpha (\bar Q H^2 \bar Q)^{-1} + O(\alpha^2)
    \end{bmatrix},\quad
    \Sigma_2 =
    \begin{bmatrix}
        \frac{\alpha^2}{\eta^2} + O(\alpha^3) & O(\alpha^2)\\
        O(\alpha^2) & \alpha H_{\bar Q}^{-1} H^2 H_{\bar Q}^{-1} + O(\alpha^2)
    \end{bmatrix}
    \]
\end{corollary}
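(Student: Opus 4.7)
The plan is a two-stage direct computation: apply the generic Taylor expansion of Lemma \ref{lem: Generic Taylor} to $\mathcal{L}(\mu_0)$ and $\mathcal{L}(\mu_1)$, and invert each block by block using the Schur complement set up in Lemma \ref{lem: schur}. Throughout, write $u = g/\eta$, $\bar{Q} = I - uu^T$, $\delta = \bar{Q} H u$, and $H_{QQ} = H_{\bar{Q}} = \bar{Q} H \bar{Q}$, so that $\bar{Q} H = \delta u^T + H_{QQ}$ and $H \bar{Q} = u\delta^T + H_{QQ}$. I will use linearity of $T$ with $\|T(A)\| = O(\|A\|)$ and the bound $\|R(A)\| = O(\|A\|^2)$ from Lemma \ref{lem: Generic Taylor}.

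For $\Sigma_1 = t_1 \mathcal{L}(\mu_0)^{-1}$, Lemma \ref{lem: Generic Taylor} applied at $\Sigma_0 = \alpha I$ yields $\mathcal{L}(\mu_0) = gg^T + \alpha H^2 + \alpha [gT(I)^T + T(I)g^T] + O(\alpha^2)$. In the $(u,\bar{Q})$ block basis this has $uu$-entry $\eta^2 + O(\alpha)$, off-diagonal $O(\alpha)$, and $\bar{Q} \times \bar{Q}$ block $\alpha \bar{Q} H^2 \bar{Q} + O(\alpha^2)$. Pivoting Schur complement on the scalar $uu$ entry, the Schur correction is of order $O(\alpha^{-1}) O(\alpha) O(1) O(\alpha) O(\alpha^{-1}) = O(1)$, so $(\mathcal{L}(\mu_0)^{-1})_{QQ} = \alpha^{-1}(\bar{Q} H^2 \bar{Q})^{-1} + O(1)$, the off-diagonal of the inverse is $O(1)$, and the $uu$-entry is $1/\eta^2 + O(\alpha)$. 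Multiplying by $t_1 = \alpha^2$ gives the stated $\Sigma_1$, and in particular $\alpha_1 = O(\alpha^2)$, $b_1 = O(\alpha^2)$, $C_1 = \alpha N^{-1} + O(\alpha^2)$, writing $N := \bar{Q} H^2 \bar{Q}$.

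For $\Sigma_2 = t_1 \mathcal{L}(\mu_1)^{-1}$, apply Lemma \ref{lem: Generic Taylor} again using this block profile of $\Sigma_1$. Decomposing $\Sigma_1 = \alpha_1 uu^T + ub_1^T + b_1 u^T + C_1$ and expanding $\bar{Q} H \Sigma_1 H \bar{Q}$ via $\bar{Q} H = \delta u^T + H_{QQ}$ produces four terms: $\alpha_1 \delta\delta^T$, $\delta b_1^T H_{QQ}$, $H_{QQ} b_1 \delta^T$, and $H_{QQ} C_1 H_{QQ}$. With the scalings just established, only the last is $O(\alpha)$, and $\bar{Q} R(\Sigma_1) \bar{Q} = O(\alpha^2)$; hence $(\mathcal{L}(\mu_1))_{QQ} = \alpha H_{QQ} N^{-1} H_{QQ} + O(\alpha^2)$, while the $uu$-entry remains $\eta^2 + O(\alpha)$ (dominated by $gg^T$) and the off-diagonal stays $O(\alpha)$ (leading contributions from $\bar{Q} H \Sigma_1 H u$ and $\eta \bar{Q} T(\Sigma_1)$). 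A second Schur complement with the same scalings gives $(\mathcal{L}(\mu_1)^{-1})_{QQ} = \alpha^{-1} H_{QQ}^{-1} N H_{QQ}^{-1} + O(1)$, and multiplying by $t_1 = \alpha^2$ produces $\alpha H_{\bar{Q}}^{-1} (\bar{Q} H^2 \bar{Q}) H_{\bar{Q}}^{-1} + O(\alpha^2)$, which matches the claimed formula under the convention that $H_{\bar{Q}}^{-1}$ denotes the Moore--Penrose pseudoinverse supported on $\mathrm{range}(\bar{Q})$. The $uu$ and off-diagonal entries are extracted identically to the $\Sigma_1$ case.

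The main obstacle is the bookkeeping in the second stage: one must verify that the cross-terms $\alpha_1 \delta\delta^T$, $\delta b_1^T H_{QQ}$, $H_{QQ} b_1 \delta^T$ generated by the non-trivial structure of $\Sigma_1$, together with the $T(\Sigma_1)$ and $R(\Sigma_1)$ contributions, are all pinned at $O(\alpha^2)$, so that the $\bar{Q} \times \bar{Q}$ block of $\mathcal{L}(\mu_1)$ has a clean leading term $\alpha H_{QQ} N^{-1} H_{QQ}$. A secondary technical point is the implicit nondegeneracy: we need $\bar{Q} H^2 \bar{Q}$ and $H_{QQ}$ to be invertible on $\mathrm{range}(\bar{Q})$ for the expressions to make sense, which is generic when $H$ is full rank and is already implicit in the statement. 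Once the block orders are pinned down, the scale matching in the Schur inversion is routine.
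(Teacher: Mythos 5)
Your proposal is correct and follows essentially the same route as the paper: a direct block computation via the Schur complement of Lemma \ref{lem: schur}, with the key observation that the isotropic $\Sigma_0$ yields $\bar Q H \Sigma_0 H \bar Q = \alpha\,\bar Q H^2 \bar Q$ at the first step, while at the second step the higher-order $u$-block of $\Sigma_1$ forces $\bar Q H \Sigma_1 H \bar Q = H_{\bar Q}\Sigma_1 H_{\bar Q} + O(\alpha^2)$. Your write-up simply makes explicit the order bookkeeping that the paper's one-line proof leaves implicit.
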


\begin{proof}
    This can be directly computed from the formula in Lemma \ref{lem: schur}. As a subtle detail, at the first step, the covariance is isotropic, hence in the $\bar Q \times \bar Q$ block the dominant terms is $\alpha( \bar Q H (I) H \bar Q)^{-1} = \alpha ( \bar Q  H^2 \bar Q)^{-1} $, however, after this step the gradient element of the covariance is a higher order residual, being of the Lemma \ref{lem: shift} constant shift form, and thus subsequently $\bar Q H \Sigma_1 H \bar Q = (\bar Q H \bar Q) \Sigma_1 (\bar Q \Sigma_1 \bar Q) + O(\alpha^2) =: H_{\bar Q} \Sigma_1 H_{\bar Q} + O(\alpha^2).$
\end{proof}

\begin{proof}[Proof of Theorem \ref{thm: Generic Rate}]
     We seek to inductively apply Lemmas \ref{lem: shift} and \ref{lem: second order} to the $u\times u$ and $\bar Q \times \bar Q$ blocks respectively, necessitating a careful tabulation of the residuals and off-diagonals. Our inductive hypothesis is that
     \begin{align*}
         \alpha_t = \Theta(h_t),\quad b_t = O(\sqrt{h_t}),\quad C_t = \Theta(\sqrt{h_t}), \quad \bar R_{C,t} = O(h_t),\quad \bar R_{b,t} = O(\sqrt{h_t}),\quad \bar R_{\alpha,t} = O(\sqrt{h_t}).
     \end{align*}
     We argue by strong induction, with the base case being covered in Corollary \ref{cor: base case}. We argue one block at a time. First note that $\bar b_t^T \bar S_t^{-1} \bar b_t = O(\sqrt{h_t})$, hence
     \[
     \frac{1}{\eta^2 + \alpha_t' + \bar R_{\alpha, t} } +  \left(\frac{1}{\eta^2 + \alpha_t' + \bar R_{\alpha, t} }\right)^2 \bar b_t^T \bar S_t^{-1} \bar b_t = \frac{1}{\eta^2} + O(\sqrt{h_t}) \Longrightarrow \alpha_{t+1} = \frac{h_{t+1}}{\eta^2} + O\left(h_{t+1}^{3/2}\right).
     \]
     Thus, taking $\alpha$ sufficiently small, we get a uniform $O(h_t)$ bound by Lemma \ref{lem: shift} and Proposition \ref{prop: H reduction}.

     For the $\bar Q \times \bar Q$ block,
     \[
     C_{t+1}
     =
     h_{t+1}\Big(\beta H_{\bar Q}C_t H_{\bar Q} + (1-\beta)H_{\bar Q} C_{t-1} H_{\bar Q} + \Big[\bar Q H \bar \pi (\beta \Sigma_t + (1-\beta) \Sigma_{t-1}) \bar \pi H \bar Q + \bar R_{C,t} - \tfrac{1}{\eta^2+\alpha_t+\bar R_{\alpha,t}}\bar b_t\bar b_t^T\Big] \Big)^{-1}.
     \]
     As the rescaled residual
     \[
        \Big[\bar Q H \bar \pi (\beta \Sigma_t + (1-\beta) \Sigma_{t-1}) \bar \pi H \bar Q + \bar R_{C,t} - \tfrac{1}{\eta^2+\alpha_t+\bar R_{\alpha,t}}\bar b_t\bar b_t^T\Big]/\sqrt{h_t} \leq C \sqrt{h_t} = C \alpha r^t
     \]
     is uniformly summable in $\alpha$, we can apply Propositions \ref{prop: autonomous} and \ref{prop: H reduction}, and Lemma \ref{lem: generic J} to verify $C_{t+1} = \Theta(\sqrt{h_t})$.

     Moving on to the off-diagonal, $- \frac{1}{\eta^2 + \alpha_t + \bar R_{\alpha, t} }  \bar b_t^T \bar S_t^{-1} = O(1)$, thus upon multiplying by $h_{t+1}$, this is $O(h_t)$, whereas the subsequent remainder terms $R_{b,t+1} = \Theta(\|\Sigma_{t+1}\|) = O(\sqrt{h_{t+1}})$. Thus, the contribution is negligible at the following iteration, and the uniform rate follows.

     We can immediately plug these rates into our risk bounds to yield $\sqrt{\det \Sigma_t} = \Theta(h_t^{(D+1)/4})$, $W(\mu_t) = O(h_t)$, thus yielding the desired rate.
\end{proof}

\subsubsection{Noisy Manifold}

We start off by iterating Corollary \ref{cor: shifted taylor} as described in the text. 
Let $\Sigma_{\operatorname{Null}^\perp} := \pi_{\operatorname{Null}^\perp} \Sigma \pi_{\operatorname{Null}^\perp}$, $\Sigma_{\operatorname{Null}} := \pi_{\operatorname{Null}} \Sigma \pi_{\operatorname{Null}}$.

\begin{lemma}[Weak Dependence]\label{lem: weak dependence}
    Suppose that for all $v \in \operatorname{Null}$, $\eta \in \operatorname{Null}^\perp$, $\eta^T \Sigma v = O(\|\Sigma_{\operatorname{Null}^\perp}\|)$, and $\Sigma_{\Null} = \Theta(1)$. There exists $H_{\Sigma_{\operatorname{Null}}}, T_{\Sigma_{\operatorname{Null}}}$ such that
    \[
    \mathcal{L}(\mu) = gg^T + H_{\Sigma_{\operatorname{Null}}} \Sigma_{\operatorname{Null}^\perp} H_{\Sigma_{\operatorname{Null}}} + g T_{\Sigma_{\operatorname{Null}}}(\Sigma_{\operatorname{Null}^\perp})^T + T_{\Sigma_{\operatorname{Null}}}(\Sigma_{\operatorname{Null}^\perp})g^T + R_{\Sigma_{\operatorname{Null}}}(\Sigma_{\operatorname{Null}^\perp}),
    \]
    $\|T_{\Sigma_{\operatorname{Null}}}(\Sigma_{\operatorname{Null}^\perp})\| = O(\|\Sigma_{\operatorname{Null}^\perp}\|),\ \|R_{\Sigma_{\operatorname{Null}}}(\Sigma_{\operatorname{Null}^\perp})\| = O(\|\Sigma_{\operatorname{Null}^\perp}\|^2)$.
\end{lemma}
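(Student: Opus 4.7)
The plan is to use iterated expectation to reduce the EGOP to a fibrewise integral over $\operatorname{Null}$, apply Corollary~\ref{cor: shifted taylor} on each fibre, and then average. Let $\nu$ denote the marginal of $\mu = N(x^*,\Sigma)$ under the map $X \mapsto \pi_{\operatorname{Null}}(X-x^*)$, and let $\mu_v$ denote the conditional distribution of $X$ given that this projection equals $v$. By the tower property,
\[
\mathcal{L}(\mu) \;=\; \int_{\operatorname{Null}} \mathcal{L}(\mu_v)\, d\nu(v),
\]
and Corollary~\ref{cor: shifted taylor} gives
\[
\mathcal{L}(\mu_v) \;=\; gg^T + H_v \Sigma_v H_v + g T_v(\Sigma_v)^T + T_v(\Sigma_v) g^T + R_v(\Sigma_v),
\]
with $H_v, T_v, R_v$ uniformly bounded and continuous in $v$, and $\operatorname{Null}\subseteq\operatorname{nullspace}(H_v)$.

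The second step is to replace $\Sigma_v$ by $\Sigma_{\operatorname{Null}^\perp}$. The Gaussian conditional covariance formula gives
\[
\Sigma_v \;=\; \Sigma_{\operatorname{Null}^\perp} \;-\; \Sigma_{\operatorname{Null}^\perp,\operatorname{Null}}\, \Sigma_{\operatorname{Null}}^{-1}\, \Sigma_{\operatorname{Null},\operatorname{Null}^\perp},
\]
and the hypothesis $\eta^T\Sigma v = O(\|\Sigma_{\operatorname{Null}^\perp}\|^2)$ directly bounds $\|\Sigma_{\operatorname{Null}^\perp,\operatorname{Null}}\| = O(\|\Sigma_{\operatorname{Null}^\perp}\|^2)$. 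Since the regime of interest keeps $\Sigma_{\operatorname{Null}}$ bounded below (recall the analysis restricts to small covariance cap $\zeta$ with $\Sigma_{\operatorname{Null}}$ kept $\Theta(1)$), the discrepancy $\Delta_v := \Sigma_v - \Sigma_{\operatorname{Null}^\perp}$ satisfies $\|\Delta_v\| = O(\|\Sigma_{\operatorname{Null}^\perp}\|^4)$. Using linearity of $T_v$ in its argument together with uniform bounds on $H_v, T_v, R_v$, every term that contains $\Delta_v$ is of order $\|\Sigma_{\operatorname{Null}^\perp}\|^2$ or smaller and can be absorbed into the remainder.

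Finally, integrate the fiberwise expansion over $\nu$: define
\[
T_{\Sigma_{\operatorname{Null}}}(\Sigma_{\operatorname{Null}^\perp}) := \int T_v(\Sigma_{\operatorname{Null}^\perp})\, d\nu(v), \qquad H_{\Sigma_{\operatorname{Null}}}\, \Sigma_{\operatorname{Null}^\perp}\, H_{\Sigma_{\operatorname{Null}}} := \int H_v\, \Sigma_{\operatorname{Null}^\perp}\, H_v\, d\nu(v),
\]
where the second expression is read as the effective bounded linear map on symmetric matrices parameterized by $\nu$ (equivalently, by $\Sigma_{\operatorname{Null}}$). Collect $\int R_v(\Sigma_{\operatorname{Null}^\perp})\, d\nu(v)$ together with all $\Delta_v$-induced errors into $R_{\Sigma_{\operatorname{Null}}}(\Sigma_{\operatorname{Null}^\perp})$. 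The norm bounds $\|T_{\Sigma_{\operatorname{Null}}}(\cdot)\| = O(\|\Sigma_{\operatorname{Null}^\perp}\|)$ and $\|R_{\Sigma_{\operatorname{Null}}}(\cdot)\| = O(\|\Sigma_{\operatorname{Null}^\perp}\|^2)$ then follow immediately by pulling norms inside the integrals and invoking the uniform constant $C_{\operatorname{Null}}$ from Corollary~\ref{cor: shifted taylor}.

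The main obstacle is the syntactic identification of the quadratic term with a single sandwich $H_{\Sigma_{\operatorname{Null}}} \cdot H_{\Sigma_{\operatorname{Null}}}$: the $\nu$-average of $H_v \Sigma_{\operatorname{Null}^\perp} H_v$ is in general only a bounded linear map on symmetric matrices, not a genuine matrix sandwich. Either one reads the notation abstractly as above, or one sets $H_{\Sigma_{\operatorname{Null}}}$ to be the positive square root of $\int H_v^2\, d\nu(v)$ and absorbs the resulting discrepancy into $R_{\Sigma_{\operatorname{Null}}}$ via the Lipschitz bound $\|H_v - H_0\| = O(\|v\|)$ combined with $\int \|v\|^2\, d\nu(v) = \operatorname{tr}(\Sigma_{\operatorname{Null}}) = O(1)$. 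Either interpretation suffices for the downstream use in Theorem~\ref{thm: manifold rate}, where only the scaling in $\Sigma_{\operatorname{Null}^\perp}$ matters.
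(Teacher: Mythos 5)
Your proof is correct and follows essentially the same route as the paper's: iterated expectation over the $\operatorname{Null}$ fibre, the Gaussian conditional-covariance formula combined with the weak-dependence hypothesis to replace $\Sigma_v$ by $\Sigma_{\operatorname{Null}^\perp}$ up to an absorbable error, and averaging the fibrewise operators supplied by Corollary~\ref{cor: shifted taylor}. Your observation that $\int H_v\,\Sigma_{\operatorname{Null}^\perp}\,H_v\,d\nu(v)$ is not literally a single matrix sandwich is a genuine subtlety the paper elides; note only that replacing it by a fixed sandwich (say $H_0\,\Sigma_{\operatorname{Null}^\perp}\,H_0$) incurs an error of order $\|\Sigma_{\operatorname{Null}^\perp}\|\int\|v\|\,d\nu(v)$, which is \emph{first} order in $\|\Sigma_{\operatorname{Null}^\perp}\|$ and hence only absorbable into $R_{\Sigma_{\operatorname{Null}}}$ after invoking the smallness of the covariance cap $\zeta$ (so that $\int\|v\|\,d\nu(v)=O(\sqrt{\zeta})$), not unconditionally as your second fix suggests.
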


\begin{proof}
    Applying our Gaussian ansatz,
    \[
    \Sigma_v := \operatorname{Cov}_{N(x^*, \Sigma)}( X | \pi_{\operatorname{Null}}[X - x^*] = v) = \Sigma_{\operatorname{Null}^\perp} - [\pi_{\operatorname{Null}^\perp}\Sigma \pi_{\operatorname{Null}} ]\Sigma_{\operatorname{Null}}^{-1} [\pi_{\operatorname{Null}}\Sigma \pi_{\operatorname{Null}^\perp}] = \Sigma_{\operatorname{Null}^\perp} + O(\|\Sigma_{\operatorname{Null}^\perp}\|^2).
    \]
    Thus, by Corollary \ref{cor: shifted taylor},
    \begin{align*}
    \mathcal{L}(\mu_v)    &= gg^T + H_v \Sigma_v H_v + g T_v(\Sigma_v)^T + T_v(\Sigma_v) g^T + R_v(\Sigma_v)\\
    &= gg^T + H_v \Sigma_{\operatorname{Null}^\perp} H_v + g T_v(\Sigma_{\operatorname{Null}^\perp})^T + T_v(\Sigma_{\operatorname{Null}^\perp}) g^T + R_v(\Sigma_{\operatorname{Null}^\perp}) + O(\|\Sigma_{\operatorname{Null}^\perp}\|^2),
    \end{align*}
    and this latter term can be absorbed into the remainder.
    Now, integrating out $v$ the relevant operators are
    \begin{align*}
        H_{\Sigma_{\operatorname{Null}}} = \int H_{\pi_{\operatorname{Null} [x - x^*]}}\, d\mu(x),\quad T_{\Sigma_{\operatorname{Null}}} = \int T_{\pi_{\operatorname{Null} [x - x^*]}}\, d\mu(x),\quad R_{\Sigma_{\operatorname{Null}}} = \int R_{\pi_{\operatorname{Null} [x - x^*]}}\, d\mu(x).
    \end{align*}
\end{proof}

While this matrix depends on the $\operatorname{Null}$ component of $\Sigma$, we will develop a recurrence where this is consistent across iterations. We now expand Lemma \ref{lem: schur} in the low-rank Hessian setting.

\begin{lemma}[Low-rank Schur]\label{lem: noisy schur}
For $\Sigma = \alpha I$ operators are defined as in Lemma \ref{lem: schur}, otherwise their equivalents as introduced in Lemma \ref{lem: weak dependence}.
 Decomposing in the $(\bar Q_{\operatorname{Null}^\perp}, \bar Q_{\operatorname{Null}}) := (\operatorname{col}(\overline{Q}) \cap \operatorname{Null}^\perp, \operatorname{col}(\bar Q) \cap \operatorname{Null})$ basis, let
 \begin{gather*}
     \tilde R_{C,t} := \bar R_{C,t} - \tfrac{1}{\eta^2+\alpha_t+\bar R_{\alpha,t}}\bar b_t\bar b_t^T,\quad \bar C_t := \beta C_t' + (1-\beta) C_t',\quad
     A_t = \bar Q_{\operatorname{Null}^\perp} \bar C_t \bar Q_{\operatorname{Null}^\perp} +  \bar Q_{\operatorname{Null}^\perp} \tilde R_{C, t} \bar Q_{\operatorname{Null}^\perp},\\
     W_t = \bar Q_{\operatorname{Null}} \tilde R_{C, t} \bar Q_{\operatorname{Null}} - \bar Q_{\operatorname{Null}} \tilde R_{C, t} \bar Q_{\operatorname{Null}^\perp} \bar [Q_{\operatorname{Null}^\perp} \bar C_t \bar Q_{\operatorname{Null}^\perp}]^{-1} \bar Q_{\operatorname{Null}^\perp} \tilde R_{C, t} \bar Q_{\operatorname{Null}}.
 \end{gather*}
\[
    C_{t+1} = h_{t+1}
    \begin{bmatrix}
        A_t^{-1} + A_t^{-1} [\bar Q_{\operatorname{Null}^\perp} \tilde R_{C, t} \bar Q_{\operatorname{Null}}] W_t^{-1} [\bar Q_{\operatorname{Null}} \tilde R_{C, t} \bar Q_{\operatorname{Null}^\perp}] A_t^{-1} & -A_t^{-1} \bar Q_{\operatorname{Null}^\perp} \tilde R_{C, t} \bar Q_{\operatorname{Null}} W_t^{-1}\\
        -W_t^{-1} Q_{\operatorname{Null}} \tilde R_{C, t} \bar Q_{\operatorname{Null}^\perp} A_t^{-1} & W_t^{-1}
    \end{bmatrix}
\]
\end{lemma}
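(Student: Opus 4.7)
The plan is to directly invoke the rank-deficiency of the Hessian that is guaranteed by Corollary \ref{cor: shifted taylor} (and applied at the level of covariance in Lemma \ref{lem: weak dependence}), then mechanically unwind a block matrix inverse by Schur complement. Starting from the $\bar Q\times\bar Q$ formula in Lemma \ref{lem: schur},
\[
C_{i+1} \;=\; t_{i+1}\Big(\beta C_i' + (1-\beta) C_{i-1}' + \bar R_{C,i} - \tfrac{1}{\eta^2+\alpha_i+\bar R_{\alpha,i}}\,\bar b_i\bar b_i^T\Big)^{-1}
\;=\; t_{i+1}\bigl(\bar C_i + \tilde R_{C,i}\bigr)^{-1},
\]
the matrix to be inverted decomposes naturally relative to $(\bar Q_{\operatorname{Null}^\perp},\bar Q_{\operatorname{Null}})$.

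The first step is to localize the Hessian contribution. By Corollary \ref{cor: shifted taylor}, $\operatorname{Null}\subseteq\operatorname{nullspace}(H_v)$ for every relevant fiber $v$, so after iterated integration as in Lemma \ref{lem: weak dependence}, the operator $H_{\Sigma_{\operatorname{Null}}}$ that defines $C_i',C_{i-1}'$ annihilates $\bar Q_{\operatorname{Null}}$. Consequently $\bar Q_{\operatorname{Null}}\bar C_i \bar Q_{\operatorname{Null}} = 0$, $\bar Q_{\operatorname{Null}^\perp}\bar C_i \bar Q_{\operatorname{Null}} = 0$, and $\bar Q_{\operatorname{Null}}\bar C_i \bar Q_{\operatorname{Null}^\perp}=0$, so that $\bar C_i$ contributes only to the $(\operatorname{Null}^\perp,\operatorname{Null}^\perp)$ block. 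All cross and $\operatorname{Null}$-diagonal contributions to $M := \bar C_i + \tilde R_{C,i}$ come from $\tilde R_{C,i}$ alone, yielding the block form
\[
M \;=\;
\begin{bmatrix}
\bar Q_{\operatorname{Null}^\perp} \bar C_i \bar Q_{\operatorname{Null}^\perp} + \bar Q_{\operatorname{Null}^\perp}\tilde R_{C,i}\bar Q_{\operatorname{Null}^\perp} & \bar Q_{\operatorname{Null}^\perp}\tilde R_{C,i}\bar Q_{\operatorname{Null}}\\[2pt]
\bar Q_{\operatorname{Null}}\tilde R_{C,i}\bar Q_{\operatorname{Null}^\perp} & \bar Q_{\operatorname{Null}}\tilde R_{C,i}\bar Q_{\operatorname{Null}}
\end{bmatrix}
\;=\;
\begin{bmatrix} A_i & B_i\\ B_i^T & D_i\end{bmatrix},
\]
with $A_i$ precisely the quantity named in the statement, and $B_i := \bar Q_{\operatorname{Null}^\perp}\tilde R_{C,i}\bar Q_{\operatorname{Null}}$, $D_i := \bar Q_{\operatorname{Null}}\tilde R_{C,i}\bar Q_{\operatorname{Null}}$.

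The second step is the block inverse. Applying the standard Schur complement identity with pivot on $A_i$,
\[
M^{-1} \;=\;
\begin{bmatrix}
A_i^{-1} + A_i^{-1} B_i S_i^{-1} B_i^T A_i^{-1} & -A_i^{-1} B_i S_i^{-1}\\[2pt]
-S_i^{-1} B_i^T A_i^{-1} & S_i^{-1}
\end{bmatrix},
\qquad S_i := D_i - B_i^T A_i^{-1} B_i,
\]
and multiplying by $t_{i+1}$ yields the claimed shape. It remains to identify $S_i$ with $W_i$: since $A_i = \bar Q_{\operatorname{Null}^\perp}\bar C_i\bar Q_{\operatorname{Null}^\perp} + \bar Q_{\operatorname{Null}^\perp}\tilde R_{C,i}\bar Q_{\operatorname{Null}^\perp}$ and, along the recurrence, $\|\tilde R_{C,i}\|$ is higher-order compared to the Hessian block $\|\bar C_i\|$ (as tracked in Lemma \ref{lem: weak dependence} and Proposition \ref{prop: H reduction}), the substitution of $[\bar Q_{\operatorname{Null}^\perp}\bar C_i\bar Q_{\operatorname{Null}^\perp}]^{-1}$ for $A_i^{-1}$ in the definition of $W_i$ captures the leading behavior of $S_i$, with the discrepancy absorbable into the residual bookkeeping used in Theorem \ref{thm: manifold rate}.

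The principal obstacle is purely bookkeeping: carefully matching $\bar C_i$ to what is genuinely of order $\|\pi_{\operatorname{Null}^\perp}\Sigma_i\pi_{\operatorname{Null}^\perp}\|$ versus what enters $\tilde R_{C,i}$ as a subdominant piece, and verifying that the weak-dependence hypothesis of Lemma \ref{lem: weak dependence}, namely $\eta^T\Sigma_i v = O(\|\Sigma_{\operatorname{Null}^\perp,i}\|^2)$ for $v\in\operatorname{Null},\eta\in\operatorname{Null}^\perp$, persists across iterations so that we may legitimately invoke the block-zero structure of $\bar C_i$ at each step. This will be carried inductively alongside the rate claims of Theorem \ref{thm: manifold rate}, in parallel with the inductive control established for the full-rank Hessian case in Appendix \ref{sec: recurrence}.
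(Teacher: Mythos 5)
Your proof is correct and takes essentially the same route as the paper, whose entire proof is the one-line observation that $C_{i+1} = t_{i+1}(\bar C_i + \tilde R_{C,i})^{-1}$ is the block Schur complement pivoted on the $\bar Q_{\operatorname{Null}^\perp}\times\bar Q_{\operatorname{Null}^\perp}$ block; your preliminary verification that $\bar C_i$ annihilates $\operatorname{Null}$ (via Corollary \ref{cor: shifted taylor} and Lemma \ref{lem: weak dependence}) is precisely the structural fact the paper leaves implicit, and it is genuinely needed so that the off-diagonal and $\operatorname{Null}\times\operatorname{Null}$ entries of the matrix being inverted reduce to $\tilde R_{C,i}$ alone. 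One remark: you correctly flag that the stated $W_i$ uses $[\bar Q_{\operatorname{Null}^\perp}\bar C_i\bar Q_{\operatorname{Null}^\perp}]^{-1}$ where the exact Schur complement requires $A_i^{-1}$; since the paper asserts the identity exactly, this is most plausibly a typo in the lemma's definition of $W_i$ rather than an approximation to be absorbed into residuals as you propose, though your reading is harmless for the downstream rate arguments.
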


\begin{proof}
    Recall that $C_{t+1} := t_{t+1}(\bar C_t  + \tilde R_{C,t})^{-1}$, hence this is the block Schur complement in the specified basis pivoted on the $\bar Q_{\operatorname{Null}^\perp} \times \bar Q_{\operatorname{Null}^\perp}$ block.
\end{proof}

\begin{corollary}[$\bar Q$ block]\label{cor: q block}
  Suppose that $\Sigma_0 = \alpha I$ and $t_1 = \alpha^2$. Then, in the $(\bar Q_{\operatorname{Null}^\perp}, \bar Q_{\operatorname{Null}})$  basis
  \begin{gather*}
  C_{1} =
  \begin{bmatrix}
     \alpha (\bar Q_{\operatorname{Null}^\perp}H^2 \bar Q_{\operatorname{Null}^\perp})^{-1} + O(\alpha^2) &  O(\alpha)\\
     O(\alpha) &  \alpha^2 [\bar Q_{\operatorname{Null}}\tilde R_{C,0} \bar Q_{\operatorname{Null}}]^{-1} +  O(\alpha)
  \end{bmatrix},\\
   C_{2} =
  \begin{bmatrix}
     \alpha H_{\bar Q_{\operatorname{Null}^\perp}}^{-1} H^2 H_{\bar Q_{\operatorname{Null}^\perp}}^{-1} + O(\alpha^2) &  O(\alpha)\\
     O(\alpha) &  \alpha^2 [\bar Q_{\operatorname{Null}}\tilde R_{C,1} \bar Q_{\operatorname{Null}}]^{-1} +  O(\alpha)
  \end{bmatrix}.
  \end{gather*}
  The matrix $\alpha^2 [\bar Q_{\operatorname{Null}}\tilde R_{C,t} \bar Q_{\operatorname{Null}}]^{-1} = \Theta(1)$, $t=0,1$.
\end{corollary}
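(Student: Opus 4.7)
The plan is to substitute $\Sigma_0 = \alpha I$ into the block Schur decomposition of Lemma \ref{lem: noisy schur} and compute each entry to leading order in $\alpha$. The formal structure mirrors Corollary \ref{cor: base case}; the new ingredient is that the top-left of the Hessian pencil vanishes in the $\bar Q_{\operatorname{Null}}$ direction, so the corresponding block of $\Sigma_1$ is governed by the higher-order remainder $R(\alpha I)$ rather than by $H$.

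First I would instantiate the definitions of Lemma \ref{lem: schur} at $i = 0$. With $\Sigma_0 = \alpha I$, Lemma \ref{lem: Generic Taylor} gives
\[
\mathcal{L}(\mu_0) = gg^T + \alpha H^2 + 2\alpha\, g\, T(I)^T_{\mathrm{sym}} + R(\alpha I),
\quad \|R(\alpha I)\| = O(\alpha^2),
\]
so $\alpha_0 = \alpha$, $b_0 = 0$, $C_0 = \alpha \bar Q$, $\bar R_{\alpha,0} = O(\alpha)$, $\bar b_0 = O(\alpha)$, and $\bar R_{C,0} = \bar Q R(\alpha I) \bar Q = O(\alpha^2)$. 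Consequently $\tilde R_{C,0} = \bar R_{C,0} - (\eta^2)^{-1} \bar b_0 \bar b_0^T + O(\alpha^3) = O(\alpha^2)$. Turning to the top-left block $A_0 = \bar Q_{\operatorname{Null}^\perp}(\alpha H^2) \bar Q_{\operatorname{Null}^\perp} + O(\alpha^2) = \alpha\, \bar Q_{\operatorname{Null}^\perp} H^2 \bar Q_{\operatorname{Null}^\perp}[1 + O(\alpha)]$, which is invertible of order $\alpha$ by the assumption $\operatorname{rank}(H) = 2d$ and the orthogonal decomposition; so $t_1 A_0^{-1} = \alpha^2 \cdot \alpha^{-1}(\bar Q_{\operatorname{Null}^\perp} H^2 \bar Q_{\operatorname{Null}^\perp})^{-1} + O(\alpha^2)$. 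This gives the stated top-left entry of $C_1$.

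Next, I compute $W_0$. Because $H$ annihilates $\operatorname{Null}$, we have $\bar Q_{\operatorname{Null}}\bar C_0 = 0$, so $W_0 = \bar Q_{\operatorname{Null}} \tilde R_{C,0} \bar Q_{\operatorname{Null}} - [\bar Q_{\operatorname{Null}} \tilde R_{C,0} \bar Q_{\operatorname{Null}^\perp}] A_0^{-1} [\bar Q_{\operatorname{Null}^\perp}\tilde R_{C,0} \bar Q_{\operatorname{Null}}] = \bar Q_{\operatorname{Null}}\tilde R_{C,0}\bar Q_{\operatorname{Null}} + O(\alpha^3)$, since the correction is of order $(\alpha^2)^2/\alpha = \alpha^3$. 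Hence $t_1 W_0^{-1} = \alpha^2 [\bar Q_{\operatorname{Null}}\tilde R_{C,0} \bar Q_{\operatorname{Null}}]^{-1} + O(\alpha)$, matching the bottom-right entry of $C_1$. The off-diagonals follow from the Schur formula: $-t_1 A_0^{-1}[\bar Q_{\operatorname{Null}^\perp}\tilde R_{C,0}\bar Q_{\operatorname{Null}}] W_0^{-1} = O(\alpha^2\cdot \alpha^{-1} \cdot \alpha^2 \cdot \alpha^{-2}) = O(\alpha)$, giving the stated $O(\alpha)$ off-diagonal.

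To obtain $C_2$, I repeat the argument with $i = 1$. Using the just-computed form of $\Sigma_1$, the $\bar Q_{\operatorname{Null}^\perp}$ part of $\Sigma_1$ is $\alpha (\bar Q_{\operatorname{Null}^\perp} H^2 \bar Q_{\operatorname{Null}^\perp})^{-1} + O(\alpha^2)$, and the $\operatorname{Null}$ part is $\Theta(1)$. Because the $g$-block of $\Sigma_1$ is now already of order $\alpha^2$ (not $\alpha$), the only leading contribution to $\bar C_1$ in $\bar Q_{\operatorname{Null}^\perp}$ is $H_{\bar Q_{\operatorname{Null}^\perp}} C_1 H_{\bar Q_{\operatorname{Null}^\perp}}$, which explains the change from $\bar Q_{\operatorname{Null}^\perp} H^2 \bar Q_{\operatorname{Null}^\perp}$ to $H_{\bar Q_{\operatorname{Null}^\perp}}^{-1} H^2 H_{\bar Q_{\operatorname{Null}^\perp}}^{-1}$ in the top-left of $C_2$ (this is the same transition noted in Corollary \ref{cor: base case}). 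The $\bar Q_{\operatorname{Null}}$ block is handled identically to the $i=0$ case and yields the $\alpha^2[\bar Q_{\operatorname{Null}}\tilde R_{C,1}\bar Q_{\operatorname{Null}}]^{-1}+O(\alpha)$ expression.

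The main obstacle is justifying $\alpha^2 [\bar Q_{\operatorname{Null}}\tilde R_{C,i} \bar Q_{\operatorname{Null}}]^{-1} = \Theta(1)$. The upper bound is easy: $\tilde R_{C,i} = O(\alpha^2)$ forces the inverse to be at least $\Omega(\alpha^{-2})$ and the product $\Omega(1)$. The lower bound requires showing that the $\operatorname{Null} \times \operatorname{Null}$ block of $R(\alpha I)$ is genuinely of order $\alpha^2$, not $o(\alpha^2)$. This amounts to a non-degeneracy assertion about the third-order Taylor tensor of $f$ restricted to $\operatorname{Null}$: concretely, unpacking the definition of $R$ in the proof of Lemma \ref{lem: Generic Taylor}, the leading $\operatorname{Null}\times\operatorname{Null}$ contribution is a specific bilinear form $\alpha^2 \Phi$ built from $\nabla^3 f$ pairings, which is nonsingular generically and can be included as part of the non-degeneracy hypothesis already in force ($\operatorname{rank}(H) = 2d$ together with regularity of $f$ transverse to $\operatorname{Null}$). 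I would state this lower bound as a standing non-degeneracy assumption on the restriction of the higher-order Taylor tensor to $\operatorname{Null}$, which is automatic for generic $f$ satisfying the standing $C^4$ hypothesis.
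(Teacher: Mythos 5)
Your computation follows the same route as the paper: substitute $\Sigma_0=\alpha I$ into the block Schur complement of Lemma \ref{lem: noisy schur}, observe that $\bar C_0$ is linear in $\Sigma_0$ and hence $\Theta(\alpha)$ on $\bar Q_{\operatorname{Null}^\perp}$ while $\tilde R_{C,0}=O(\alpha^2)$, and read off the four blocks; your treatment of the $C_1\to C_2$ transition (the switch from $\bar Q_{\operatorname{Null}^\perp}H^2\bar Q_{\operatorname{Null}^\perp}$ to $H_{\bar Q_{\operatorname{Null}^\perp}}^{-1}H^2H_{\bar Q_{\operatorname{Null}^\perp}}^{-1}$ once the $g$-component of the covariance has dropped to higher order) matches the paper's as well. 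The leading-order bookkeeping, including the $O(\alpha^3)$ size of the Schur correction to $W_0$ and the $O(\alpha)$ off-diagonals, is correct.

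The one place you diverge is the final claim $\alpha^2[\bar Q_{\operatorname{Null}}\tilde R_{C,i}\bar Q_{\operatorname{Null}}]^{-1}=\Theta(1)$. You correctly isolate that the direction requiring work is the upper bound on the inverse, i.e.\ a lower bound $\lambda_{\min}(\bar Q_{\operatorname{Null}}\tilde R_{C,i}\bar Q_{\operatorname{Null}})=\Omega(\alpha^2)$, and you propose to secure it by adding a non-degeneracy hypothesis on the restriction of the third-order Taylor tensor of $f$ to $\operatorname{Null}$. The paper does not introduce any such hypothesis: its mechanism, made explicit in the proof of Lemma \ref{lem: filtered} (``it is of constant order by our covariance cap''), is the standing assumption $\|\Sigma\|\le\zeta$, which truncates the $\operatorname{Null}\times\operatorname{Null}$ block from above, while the bound $\tilde R_{C,i}=O(\alpha^2)$ already gives the matching $\Omega(1)$ direction. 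Your route is logically sound but strengthens the hypotheses of the result (a genericity condition on $\nabla^3 f$ that the paper never assumes and that would propagate into Corollary \ref{cor: noisy base case} and Theorem \ref{thm: manifold rate}), whereas the paper's capping argument buys the two-sided bound for free from assumptions already in force. If you keep your version, you should flag the extra assumption explicitly as a deviation from the stated setting; otherwise, replace that step with an appeal to the covariance cap.
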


\begin{proof}
    We can once again check this via direct computation utilizing Lemma \ref{lem: noisy schur}. As $\bar C_0$ comprises terms linear in $\Sigma_0$, it is $\Theta(\alpha)$, meanwhile the remainder terms $\tilde R_{C,0} = O(\alpha^2)$. Hence $A_0 = \bar Q_{\operatorname{Null}^\perp} \bar C_0 \bar Q_{\operatorname{Null}^\perp} + O(\alpha^2)$, $W_i = \bar Q_{\operatorname{Null}} \tilde R_{C, 0} \bar Q_{\operatorname{Null}} + O(\alpha^3)$, and the remaining approximations directly follow. As in the proof of Lemma \ref{cor: base case}, at the first step, the covariance is isotropic, hence in the $\bar Q_{\operatorname{Null}^\perp} \times \bar Q_{\operatorname{Null}^\perp}$ block the dominant terms is $\alpha (\bar Q_{\operatorname{Null}^\perp} H (I) H \bar Q_{\operatorname{Null}^\perp})^{-1} = \alpha (\bar Q_{\operatorname{Null}^\perp} H^2 \bar Q_{\operatorname{Null}^\perp})^{-1}$, however, after this step the gradient element of the covariance is a higher order residual, being of the Lemma \ref{lem: shift} constant shift form, and thus subsequently $\bar Q_{\operatorname{Null}^\perp} H \Sigma_1 H \bar Q_{\operatorname{Null}^\perp} = (\bar Q_{\operatorname{Null}^\perp} H \bar Q_{\operatorname{Null}^\perp}) \Sigma_1 (\bar Q_{\operatorname{Null}^\perp} H \bar Q_{\operatorname{Null}^\perp}) + O(\alpha^2) =: H_{\bar Q_{\operatorname{Null}^\perp}} \Sigma_1 H_{\bar Q_{\operatorname{Null}^\perp}} + O(\alpha^2).$
\end{proof}

\begin{corollary}[Noisy Manifold Base Case]\label{cor: noisy base case}
    Suppose that $X_0 = \alpha I$ and $h_1 = \alpha^2$. In the basis $(g, \overline{Q}_{\operatorname{Null}^\perp}, \overline{Q}_{\operatorname{Null}}),$ we have
    \begin{gather*}
    \Sigma_1 =
    \begin{bmatrix}
        \frac{\alpha^2}{\eta^2} + O(\alpha^3) & O(\alpha^2) & O(\alpha) \\
        O(\alpha^2) & \alpha (\bar Q_{\operatorname{Null}^\perp}H^2 \bar Q_{\operatorname{Null}^\perp})^{-1} + O(\alpha^2) & O(\alpha)\\
        O(\alpha) & O(\alpha) & \Theta(1)
    \end{bmatrix},\\
    \Sigma_2 =
    \begin{bmatrix}
        \frac{\alpha^2}{\eta^2} + O(\alpha^3) & O(\alpha^2) & O(\alpha) \\
        O(\alpha^2) & \alpha H_{\bar Q_{\operatorname{Null}^\perp}}^{-1} H^2 H_{\bar Q_{\operatorname{Null}^\perp}}^{-1} + O(\alpha^2) & O(\alpha)\\
        O(\alpha) & O(\alpha) & \Theta(1)
    \end{bmatrix}.
    \end{gather*}
\end{corollary}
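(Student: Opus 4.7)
This corollary is an assembly result: it combines the $u$--$\bar Q$ Schur decomposition from the Schur Complement Lemma (instantiated in the full-rank Base Case Corollary) with the $\bar Q_{\operatorname{Null}^\perp}$--$\bar Q_{\operatorname{Null}}$ decomposition from the Low-rank Schur Lemma (instantiated in the q block Corollary). The plan is to apply the two pivots in sequence, substitute $\Sigma_0 = \alpha I$ with $t_1 = \alpha^2$, and track orders block by block.

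First I would re-run the argument of the full-rank Base Case to populate the $u \times u$ entry and the $u \times \bar Q$ cross row of both $\Sigma_1$ and $\Sigma_2$. At initialization we have $\alpha_0' = \alpha \|Hu\|^2$, $\bar b_0 = O(\alpha)$, and $\bar R_{\alpha,0} = O(\alpha^2)$. Direct substitution into the Schur Complement Lemma gives $\alpha_1 = \alpha^2/\eta^2 + O(\alpha^3)$ and the full row $b_1 = -t_1\,\bar S_0^{-1} \bar b_0/(\eta^2 + O(\alpha)) = O(\alpha^2)$. The $u\times\bar Q_{\operatorname{Null}}$ sub-entry is thus $O(\alpha^2) \subseteq O(\alpha)$, matching the (looser) statement. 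Replacing $\Sigma_0$ by $\Sigma_1$ in the same recurrence yields the corresponding entries of $\Sigma_2$ verbatim, using only that the dominant behavior of the pivot denominator is $\eta^2$.

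Next I would invoke the q block Corollary to fill in the $\bar Q \times \bar Q$ block. The $\bar Q_{\operatorname{Null}^\perp}\times \bar Q_{\operatorname{Null}^\perp}$ sub-block is $\alpha(\bar Q_{\operatorname{Null}^\perp}H^2\bar Q_{\operatorname{Null}^\perp})^{-1}$ in $C_1$ because $\Sigma_0$ is isotropic and so $H\Sigma_0 H = \alpha H^2$, and $\alpha H_{\bar Q_{\operatorname{Null}^\perp}}^{-1} H^2 H_{\bar Q_{\operatorname{Null}^\perp}}^{-1}$ in $C_2$ since by the previous step the gradient component of $\Sigma_1$ is already $O(\alpha^2)$ and contributes only to the residual. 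The $\bar Q_{\operatorname{Null}}\times\bar Q_{\operatorname{Null}}$ sub-block is $\Theta(1)$ because $\operatorname{Null}$ lies in the kernel of the active Hessian-like operator built in the Weak Dependence Lemma, so $\tilde R_{C,i}|_{\operatorname{Null}\times\operatorname{Null}}$ is driven by the pure remainder $R(\Sigma_i) = O(\|\Sigma_i\|^2) = O(\alpha^2)$, giving $t_1/W_0 = \Theta(1)$. The off-diagonal $\bar Q_{\operatorname{Null}^\perp}\times\bar Q_{\operatorname{Null}}$ is read directly from the Low-rank Schur formula as $-t_1\,A_0^{-1}\bar Q_{\operatorname{Null}^\perp}\tilde R_{C,0}\bar Q_{\operatorname{Null}} W_0^{-1}$, which scales as $\alpha^2\cdot\alpha^{-1}\cdot\alpha^2\cdot\alpha^{-2} = O(\alpha)$.

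The main obstacle is the $C_2$ claim: despite $\Sigma_1$ carrying $\Theta(1)$ mass along $\operatorname{Null}$, the second-step $\bar Q_{\operatorname{Null}^\perp}$ block must still exhibit the full-rank form $\alpha H_{\bar Q_{\operatorname{Null}^\perp}}^{-1} H^2 H_{\bar Q_{\operatorname{Null}^\perp}}^{-1}$. This requires showing that the $\operatorname{Null}$ mass does not leak into $H_{\Sigma_{\operatorname{Null}}}\Sigma_1 H_{\Sigma_{\operatorname{Null}}}$ at leading order, which follows from $\operatorname{Null}\subseteq\ker H_v$ uniformly in $v\in\operatorname{Null}$ (Gradient Geometry Lemma): the Hessian annihilates every $\operatorname{Null}$-block contribution, so the cross term $H_{\bar Q_{\operatorname{Null}^\perp}}\Sigma_1|_{\operatorname{Null}^\perp\times\operatorname{Null}}H_{\bar Q_{\operatorname{Null}}}$ vanishes and the $\operatorname{Null}^\perp$ sub-recurrence reduces exactly to the full-rank Base Case restricted to $\operatorname{Null}^\perp$. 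Assembling the three blocks then gives the claimed form of $\Sigma_1$ and $\Sigma_2$.
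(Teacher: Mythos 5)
Your overall strategy — assembling the corollary from the $u\times\bar Q$ Schur pivot (full-rank Base Case) and the $\bar Q_{\operatorname{Null}^\perp}\times\bar Q_{\operatorname{Null}}$ pivot (the $\bar Q$-block corollary), then tracking orders block by block — is exactly the paper's route, and your treatment of the $u\times u$ entry, the bottom-right $2\times 2$ block, and the $C_2$ step (using $\operatorname{Null}\subseteq\ker H$ so the $\operatorname{Null}^\perp$ sub-recurrence reduces to the full-rank case) all match.

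There is, however, a genuine error in your cross-row computation. You claim the \emph{full} row $b_1 = -t_1\,\bar S_0^{-1}\bar b_0/(\eta^2+O(\alpha)) = O(\alpha^2)$ and then dismiss the statement's $O(\alpha)$ entry for $g\times\bar Q_{\operatorname{Null}}$ as merely ``looser.'' That bound of $O(\alpha^2)$ requires $\bar S_0^{-1}=O(\alpha^{-1})$, which holds only when $H_{\bar Q}$ is invertible on all of $\operatorname{range}(\bar Q)$. In the noisy-manifold setting $H_{\bar Q}$ annihilates $\operatorname{Null}$, so the $\operatorname{Null}\times\operatorname{Null}$ block of $\bar S_0$ is driven entirely by the remainders and is only $\Theta(\alpha^2)$ — this is precisely the degeneracy you correctly exploit two sentences later to get $t_1 W_0^{-1}=\Theta(1)$ on the diagonal. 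The same $\Theta(1)$ block of $t_1\bar S_0^{-1}$ acts on the $O(\alpha)$ component $\pi_{\operatorname{Null}}\bar b_0$ (which does not vanish: $R_{b,0}$ contains $\eta\,\bar Q\,T(\Sigma_0)=O(\alpha)$ along $\operatorname{Null}$), so the $g\times\bar Q_{\operatorname{Null}}$ entry is genuinely $O(\alpha)$, not $O(\alpha^2)$. The paper makes this explicit by writing the product block-wise as
\[
\begin{bmatrix} O(\alpha^2)\\ O(\alpha)\end{bmatrix}
=\begin{bmatrix} O(\alpha) & O(\alpha)\\ O(\alpha) & O(1)\end{bmatrix}
\begin{bmatrix} O(\alpha)\\ O(\alpha)\end{bmatrix}.
\]
Your final matrices are still consistent with the corollary (you only over-claimed an upper bound), but the reasoning is internally inconsistent, and the erroneous $O(\alpha^2)$ would be dangerous downstream: the inductive hypothesis in the filtered recurrence keeps $b_i=O(\sqrt{t_i})$ precisely because the $\operatorname{Null}$ component cannot be improved to $O(t_i)$.
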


\begin{proof}
    Corollary \ref{cor: q block} accounts for the bottom right $2\times 2$ block, what remains is to verify the top and left rows. Specifically, we must verify how
    \[
    C_1(b_0 + R_{b,0}) := \alpha^2 \left(\alpha H_{\bar Q}^2  + \bar R_{C,0} - \frac{1}{\eta^2 + \alpha_0 + \bar R_{\alpha, 0}}\bar b_0 \bar b_0^T\right)^{-1} (b_0 + R_{b,0})
    \]
    decomposes across $\bar Q_{\operatorname{Null}^\perp}, \bar Q_{\operatorname{Null}}$. Applying Corollary \ref{cor: q block}, we can express this block-wise as
    \[
    \begin{bmatrix}
        O(\alpha^2)\\
        O(\alpha)
    \end{bmatrix}
    =
    \begin{bmatrix}
        O(\alpha) & O(\alpha)\\
        O(\alpha) & O(1)
    \end{bmatrix}
    \begin{bmatrix}
        O(\alpha)\\
        O(\alpha)
    \end{bmatrix}
    \]
    yielding the desired rates. The proof for $\Sigma_2$ is essentially the same.
\end{proof}

Define $\mu(\zeta) = N(x^*, [\Sigma - \Sigma_{\operatorname{Null}^\perp}]+ \zeta \bar Q_{\operatorname{Null}})$, i.e. the distribution with $\operatorname{Null} \times \operatorname{Null}$ component fixed to $\zeta$.

\begin{lemma}[Filtered $\operatorname{Null}$]\label{lem: filtered}
        Assume that $D \geq 2d$ and $\operatorname{rank}(H) = 2d$. Let
        \[
        \Sigma_{t+1} = t_{t+1}[\beta \mathcal{L}(\mu_t(\zeta)) + (1-\beta) \mathcal{L}(\mu_{t-1}(\zeta)]^{-1}
        \]
        $\sqrt{t_{t}} = \alpha r^t$, $0<r<1,\ \beta > 0$. Then, for $\alpha, \zeta$ sufficiently small, $g^T \Sigma_t g = \Theta(h_t)$, for $v\in \operatorname{Null}$, $v^T \Sigma_t v = \Theta(1)$, and for $\eta \in \operatorname{Null}^\perp \cap g^\perp$, $\eta^T \Sigma \eta = \theta(\sqrt{h_t})$.
\end{lemma}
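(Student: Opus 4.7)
The plan is to reduce the filtered recurrence to the full-rank Hessian setting of Theorem \ref{thm: full rank Hessian rate}, carried out inside the effective ambient space $\operatorname{Null}^\perp$, which under the hypotheses $D \geq 2d$ and $\operatorname{rank}(H) = 2d$ has dimension exactly $2d$. The filter is what makes this possible: by freezing the $\operatorname{Null}\times\operatorname{Null}$ block of $\Sigma$ at $\zeta$, the effective dynamics operate only on the $\operatorname{Null}^\perp$ block (and the cross-blocks), and the rank-$2d$ restriction of $H_\zeta$ to $\operatorname{Null}^\perp$ is invertible for $\zeta$ sufficiently small, by Corollary \ref{cor: shifted taylor} together with continuity of $H_\zeta$ in $\zeta$ and the fact that $H_0 = H$ has rank $2d$ with nullspace $\operatorname{Null}$.

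Given this setup, I would invoke Lemma \ref{lem: weak dependence} to rewrite $\mathcal{L}(\mu_i(\zeta))$ as a Taylor expansion in $\Sigma_{i,\operatorname{Null}^\perp}$ alone, namely
\[
\mathcal{L}(\mu_i(\zeta)) = gg^T + H_\zeta\,\Sigma_{i,\operatorname{Null}^\perp}\,H_\zeta + g\,T_\zeta(\Sigma_{i,\operatorname{Null}^\perp})^T + T_\zeta(\Sigma_{i,\operatorname{Null}^\perp})\,g^T + R_\zeta(\Sigma_{i,\operatorname{Null}^\perp}),
\]
with remainders of the required orders. This is exactly the expansion analyzed in the proof of Theorem \ref{thm: full rank Hessian rate}, now in the $2d$-dimensional restricted space, and so I would apply Proposition \ref{prop: H reduction} to pass to the $J$-conjugate form, then invoke the Schur decomposition of Lemma \ref{lem: schur} with pivot on $u = g/\|g\|$. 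The scalar recurrence for the $g\times g$ block falls under the constant-shift analysis of Lemma \ref{lem: shift}, yielding $g^T \Sigma_i g = \Theta(t_i)$. The complementary block inside $\operatorname{Null}^\perp \cap g^\perp$ is governed by the homogeneous autonomous recurrence of Proposition \ref{prop: autonomous} and converges via Lemmas \ref{lem: generic J} and \ref{lem: second order} (with $J$ the signature of $H_\zeta|_{\operatorname{Null}^\perp}$), yielding $\eta^T \Sigma_i \eta = \Theta(\sqrt{t_i})$ for $\eta \in \operatorname{Null}^\perp \cap g^\perp$. The $\operatorname{Null}$ contribution $v^T \Sigma_i v = \Theta(1)$ is immediate from the filter. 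The base case is supplied by Corollary \ref{cor: noisy base case}, which already exhibits the target scalings.

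The main obstacle is verifying the hypothesis of Lemma \ref{lem: weak dependence} at every iteration, i.e.\ propagating the cross-block estimate $\|\pi_{\operatorname{Null}^\perp}\Sigma_i \pi_{\operatorname{Null}}\| = O(\|\Sigma_{i,\operatorname{Null}^\perp}\|^2) = O(t_i)$. This must be done as a joint strong induction together with the two diagonal scalings, using the explicit Schur formula of Lemma \ref{lem: noisy schur}: the off-diagonal of $C_{i+1}$ equals $-t_{i+1} A_i^{-1}[\bar Q_{\operatorname{Null}^\perp}\tilde R_{C,i}\bar Q_{\operatorname{Null}}] W_i^{-1}$. Inserting the inductive rates $A_i = \Theta(\sqrt{t_i})$, $W_i = \Theta(t_{i+1})$ (the latter forced by the frozen $\Theta(1)$ Null block via $t_{i+1}W_i^{-1} \sim \zeta$), and $\tilde R_{C,i} = O(t_i)$ from Lemma \ref{lem: Taylor}, gives an off-diagonal of order $t_{i+1}\cdot t_i / (\sqrt{t_i}\cdot t_{i+1}) = O(\sqrt{t_i}\cdot t_i)$, which is dominated by $\|\Sigma_{i,\operatorname{Null}^\perp}\|^2 = \Theta(t_i)$ and so closes the induction. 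Taking $\alpha$ and $\zeta$ small sharpens the constants in $R_\zeta$ and in the contraction estimates of Lemmas \ref{lem: spectral band} and \ref{lem: second order}, ensuring the remainder is summable geometrically as required for all stability arguments of Section \ref{sec: recurrence} to apply.
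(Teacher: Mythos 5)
Your high-level architecture matches the paper's: reduce via Lemma \ref{lem: weak dependence} to a Taylor expansion in $\Sigma_{\operatorname{Null}^\perp}$ with $H_\zeta$ nondegenerate on $\operatorname{Null}^\perp$, run the Schur/constant-shift machinery for the $g\times g$ block and the autonomous homogeneous recurrence for $\operatorname{Null}^\perp\cap g^\perp$, with Corollary \ref{cor: noisy base case} as base case. However, two steps as you state them do not go through.

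First, the $\operatorname{Null}$ block. The filter freezes the $\operatorname{Null}\times\operatorname{Null}$ block of the \emph{input measure} $\mu_i(\zeta)$ at $\zeta$; it does not freeze the $\operatorname{Null}$ block of the \emph{output} $\Sigma_{i+1}$, which is a full matrix inverse. So ``$v^T\Sigma_i v=\Theta(1)$ is immediate from the filter'' is not justified, and your assignment $W_i=\Theta(t_{i+1})$ ``forced by the frozen $\Theta(1)$ Null block via $t_{i+1}W_i^{-1}\sim\zeta$'' is circular --- you are assuming the conclusion to size $W_i$. The actual content is geometric: by Corollary \ref{cor: shifted taylor} (descending from Lemma \ref{lem: Taylor}), the $\operatorname{Null}\times\operatorname{Null}$ block of the remainder satisfies $\bar Q_{\operatorname{Null}}\tilde R_{C,i}\bar Q_{\operatorname{Null}}=O(\|\Sigma_{i,\operatorname{Null}^\perp}\|^2)=O(t_i)$, whence $t_{i+1}W_i^{-1}=\Omega(1)$ (since $t_{i+1}/t_i=r^2$); the matching $O(1)$ upper bound comes only from the covariance cap $\zeta$, not from the recurrence. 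Omitting this is omitting the one place where the noisy-manifold hypothesis actually enters the dynamics.

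Second, the cross-block induction does not close as you set it up. With $A_i^{-1}=\Theta(t_i^{-1/2})$, $W_i^{-1}=\Theta(t_{i+1}^{-1})$, and $\bar Q_{\operatorname{Null}^\perp}\tilde R_{C,i}\bar Q_{\operatorname{Null}}=O(t_i)$, the off-diagonal is
\[
t_{i+1}\cdot t_i^{-1/2}\cdot t_i\cdot t_{i+1}^{-1}=t_i^{1/2},
\]
not $O(t_i^{1/2}\,t_i)$ as you write. Since $\sqrt{t_i}\gg t_i$, this is \emph{not} dominated by $\|\Sigma_{i,\operatorname{Null}^\perp}\|^2=\Theta(t_i)$, so the hypothesis you impose for Lemma \ref{lem: weak dependence} ($\eta^T\Sigma v=O(\|\Sigma_{\operatorname{Null}^\perp}\|^2)$) is not propagated by your own estimate. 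The paper's induction carries the weaker, achievable rate $\|\bar Q_{\operatorname{Null}^\perp}\Sigma_i\bar Q_{\operatorname{Null}}\|=O(\sqrt{t_i})$ (and moreover the cross-blocks of the conditional covariance $\Sigma_v$ enter only through their square via the Gaussian conditioning formula). You would need to either redo the weak-dependence reduction under an $O(\sqrt{t_i})$ cross-block hypothesis or find a genuinely sharper off-diagonal bound; as written the argument is inconsistent.
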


\begin{proof}
Applying Corollary \ref{cor: shifted taylor}, we can express
\[
\mathcal{L}(\mu) = \int \mathcal{L}(\mu_v)\, d\nu(v) = \int H_{v} \Sigma_v H_v  + gT_v(\Sigma_v)^T + T(\Sigma_v)^T + R_v(\Sigma_v)\, d\nu(v),
\]
where $\nu$ is the marginal density $\nu(v) = p_{\mu}(\pi_{\operatorname{Null}}[X - x^*] = v)$.

 We can apply Corollary \ref{cor: noisy base case} to verify that the desired anisotropy is present starting from an isotropic initialization. We first study the modified recurrence where the metric $M$ is in terms of $\mathcal{L}'(\mu)$ rather than the EGOP. We proceed by induction to verify that these rates are maintained. Our inductive hypothesis is that
     \begin{align*}
         \alpha_t = \Theta(h_t),\quad b_t = O(\sqrt{h_t}),\quad C_t = \Theta(\sqrt{h_t}), \quad \bar R_{C,t} = O(h_t),\\
         \bar R_{b,t} = O(\sqrt{h_t}),\quad \bar R_{\alpha,t} = O(\sqrt{h_t}),\quad  \| \bar Q_{\operatorname{Null}^\perp} \Sigma_t \bar Q_{\operatorname{Null}} \| = O(\sqrt{h_t}).
     \end{align*}
The weak $\operatorname{Null}$, $\operatorname{Null}^\perp$ correlation allows us to apply Lemma \ref{lem: weak dependence} to reduce the recurrence to the marginal form
\[
\mathcal{L}(\mu_{\zeta}) = gg^T + H_\zeta \Sigma_{\operatorname{Null}^\perp} H_{\zeta} + g T_\zeta(\Sigma_{\operatorname{Null}^\perp})^T + T_{\zeta}(\Sigma_{\operatorname{Null}^\perp})g^T + R_{\zeta}(\Sigma_{\operatorname{Null}^\perp}).
\]

By assumption $H_0 = H$ is maximal rank, hence for $\zeta$ sufficiently small this property is maintained for $H_\zeta$, and we restrict our analysis to this neighborhood.
The verification for the $g\times g$ block is exactly the same as verified in Theorem \ref{thm: Generic Rate} as it is of constant shift form (see Lemma \ref{lem: shift}), thus we move on to the remaining terms. Explicit computations are similar to those previously presented in Corollaries \ref{cor: q block} and \ref{cor: noisy base case}. First, we immediately have that
\[
A_t = H_{\bar Q_{\operatorname{Null}^\perp}} [\beta C_{t} + (1-\beta) C_{t-1}] H_{\bar Q_{\operatorname{Null}^\perp}} + O(h_t),\quad W_t = \bar Q_{\operatorname{Null}} \tilde R_{C, t} \bar Q_{\operatorname{Null}} + O(h_t^{3/2}).
\]
Thus for the middle block, applying Lemma \ref{lem: noisy schur} yields
\[
\bar Q_{\operatorname{Null}^\perp} C_{t+1} \bar Q_{\operatorname{Null}^\perp} = t_{t+1} \{H_{\bar Q_{\operatorname{Null}^\perp}} [\beta (\bar Q_{\operatorname{Null}^\perp} C_{t} \bar Q_{\operatorname{Null}^\perp}) + (1-\beta) (\bar Q_{\operatorname{Null}^\perp} C_{t-1} \bar Q_{\operatorname{Null}^\perp})] H_{\bar Q_{\operatorname{Null}^\perp}} + O(h_t)\}^{-1}.
\]
by the reductions from Propostions \ref{prop: autonomous} and \ref{prop: H reduction}, we can reduce this to an autonomous recurrence in $Q_{\operatorname{Null}^\perp} C_{t} \bar Q_{\operatorname{Null}^\perp}$ that converges at the desired rate by Lemma \ref{lem: generic J}. See the the proof of Theorem \ref{thm: Generic Rate} for more details.

Applying Lemma \ref{lem: noisy schur} again yields
\[
\bar Q_{\operatorname{Null}} C_{t+1} \bar Q_{\operatorname{Null}} = h_{t+1} W_t^{-1} = h_{t+1}(\bar Q_{\operatorname{Null}} \tilde R_{C, t} \bar Q_{\operatorname{Null}})^{-1} + O(\sqrt{h_t}).
\]
As $\bar Q_{\operatorname{Null}} \tilde R_{C, t} \bar Q_{\operatorname{Null}} = O(\|\Sigma_{\operatorname{Null}^\perp}\|^2) = O(h_t)$, it follows that $h_{t+1}(\bar Q_{\operatorname{Null}} \tilde R_{C, t} \bar Q_{\operatorname{Null}})^{-1} = \Omega(1)$, and thus it is of constant order by our covariance cap. Thus this rate is an immediate consequence of the second order anisotropy, and the orthogonal component and other remainders only contribute negligibly, justifying the uniform rate. The remaining terms in the Lemma \ref{lem: noisy schur} block decomposition are
\[
\bar Q_{\operatorname{Null}^\perp} \Sigma_{t+1} \bar Q_{\operatorname{Null}}  = \bar Q_{\operatorname{Null}^\perp} C_{t+1} \bar Q_{\operatorname{Null}} = - h_{t+1} A_t^{-1} (\bar Q_{\operatorname{Null}^\perp} \tilde R_{C, t} \bar Q_{\operatorname{Null}}) W_t^{-1},\quad  b_{t+1} = u^T \Sigma_{t+1} \bar Q =  C_{t+1}(b_t + R_{b,t}),
\]
for $u = g/\|g\|$ the normalized gradient vector. In the first term, the product $(\bar Q_{\operatorname{Null}^\perp} \tilde R_{C, t} \bar Q_{\operatorname{Null}}) W_t^{-1} = \Theta(1)$, hence the rate is dominated by the second order anisotropy $h_{t+1} A_t^{-1} = \Theta(\sqrt{h_t})$. For the final vector, $\|b_t + R_{b,t}\| = O(\|\Sigma_{\operatorname{Null}^\perp}\|)$, hence we can combine this with our previous analysis to yield
\[
\bar Q_{\operatorname{Null}} C_{t+1}(b_t + R_{b,t}) = \Theta(1) O(\sqrt{h_t})) = O(\sqrt{h_{t+1}}),\quad  \bar Q_{\operatorname{Null}^\perp} C_{t+1}(b_t + R_{b,t}) = \Theta(h_{t+1}) O(\sqrt{h_t}) = O(h_{t+1}).
\]
Note that the scalar inflation to go from $h_t$ to $h_{t+1}$ does not aggregate across iterations as the initial bound was driven by the second order anisotropy term, to which the remainders only contribute negligibly.
\end{proof}

This verifies the claim for the recurrence in $\mathcal{L}(\mu_{\zeta})$. Our goal now is to lift this analysis to $\mathcal{L}(\mu)$ with covariance cap $\zeta$. The distinction between these recurrences is that the first a priori filters the $\operatorname{Null}$ direction out of the dynamics, whereas the latter trims any eigenvalues above a set threshold $\zeta$. These recurrences coincide if the $\operatorname{Null} \times \operatorname{Null}$ component never falls below the threshold $\zeta$, thus we seek to identify such a threshold.

\begin{proof}[Proof of Theorem \ref{thm: manifold rate}]
    Lemma \ref{lem: filtered} verifies the specified dynamics, particularly the $\operatorname{Null}\times \operatorname{Null}$ component of $\Sigma_{i,\zeta}$ remains $\Theta(1)$, and thus there is some floor $\gamma(\zeta)$ which it never drops below. We now will argue that $\gamma(\zeta)>0$ as $\zeta \to 0$, hence there exists a threshold $\zeta^*$ such that the filtered and cap dynamics coincide.

    As $\zeta \to 0$, $H_{\zeta} \to H$, as the distribution concentrates about $x^*$, and in particular, for $\zeta$ small enough, the signature of $H_\zeta$ is the same as that of $H$. This follows from the maximal rank assumption on $H$. By Propositions \ref{prop: autonomous} and \ref{prop: H reduction}, upon transforming $C_{t,\zeta} \mapsto \frac{W_{\zeta}^T C_{t,\zeta} W_{\zeta}}{\sqrt{c(r,\beta) h_t}} =: X_{t,\zeta}$, $W_\zeta := U_{\zeta} |\Lambda_{\zeta}|^{1/2}$, for the diagonalization $H_{\zeta} = U_{\zeta} \Lambda_\zeta U_\zeta^T$, these recurrences are reduced to the same canonical form, with seeds depending smoothly on $\zeta$. Applying Lemma \ref{lem: non-expansive}, it follows that $X_{t,\zeta} = X_{t,0} + O(\zeta + \alpha)$.

    Inverting the transformation, this yields $C_{t, \zeta} = C_{t,0} + O(\sqrt{h_t}(\delta + \alpha)).$ Returning to the proof of Lemma \ref{lem: filtered},
    \[
    \Sigma_{t,\operatorname{Null}, \zeta} = h_{t+1}(\bar Q_{\operatorname{Null}} \tilde R_{C, t, \zeta}(\Sigma_{t, \operatorname{Null}^\perp, \zeta}) \bar Q_{\operatorname{Null}})^{-1} + O(\sqrt{h_t}) = \Sigma_{t, \operatorname{Null}^\perp, 0} + O(\sqrt{h_t}(\zeta+ \alpha)),
    \]
    hence for $\zeta, \alpha$ sufficiently small the noise floor is non-vanishing as desired.

    Thus the rates of Lemma \ref{lem: filtered} is achieved. Plugging these rates into our risk bounds yields $\sqrt{\det \Sigma_t} = \Theta(h_t^{(2d+1)/4})$, $W(\mu_t) = O(h_t)$, thus yielding the desired rate.
\end{proof}

\section{EGOP Approximation}\label{app: numerical}

In this section, we show that metric anisotropy not only improves point regression, but accelerates the estimation of the EGOP matrix in our two model settings.
We argue that the smoothed gradients, while potentially biased estimators at individual points, in the aggregate form a matrix that differs from the population quantity by a benign perturbation, leading to an equivalent convergence analysis at adequate sample sizes. 

Define the smoothed gradient and EGOP as
\[
\nabla_\Sigma f(x) = \underset{v}{\operatorname{argmin}} \min_a \int \|f(z) - a - v(z-x)\|^2\, dN(x, \Sigma),\quad \mathcal{L}_\Sigma(\mu) = \int \nabla_\Sigma f \nabla_\Sigma f^T\, d\mu.
\]
We study the covariance recurrence
\begin{align}\label{eq: smooth EGOP}
    \Sigma_{t+1} = h_{t+1} [\beta \hat{\mathcal{L}}_{\Sigma_t}(\mu_t) + (1-\beta) \hat{\mathcal{L}}_{\Sigma_{t-1}}(\mu_{t-1})]^{-1} = : h_{t+1} [\hat M_t]^{-1}.
\end{align}

\begin{lemma}[Smoothed Gradient]\label{lem: smoothed grad}
There exists $T_{x}, C$, $T_{x}(\Sigma) \leq C \|\Sigma\|$ such that, for $\mu = N(x^*, \Sigma),$
\[
\nabla_\Sigma f = \nabla f + T_{x}(\Sigma),\quad \mathcal{L}_\Sigma(\mu) = \mathcal{L}(\mu) + \mathbb{E}_{\mu}[gT_X(\Sigma)^T + T_X(\Sigma) g^T] + O(\|\Sigma\|^{2}).
\]
\end{lemma}

\begin{proof}
    The standard OLS solution yields
    \[
    \nabla_{\Sigma} f(x) = \Sigma^{-1} \operatorname{Cov}_{N(x,\Sigma)}(X, f(X)) = \mathbb{E}_{N(x,\Sigma)}[\nabla f(X)],
    \]
    with the last equality following from Stein's Identity \citep{stein1981estimation}. Applying a Taylor expansion, we see that 
    \[
    \mathbb{E}_{N(x,\Sigma)}[\nabla f(X)] = \int \nabla f(x) + H(y-x) + T_x(y-x,y-x)\, dN(0, \Sigma) =: \nabla f(x) + T_x(\Sigma),
    \]
    where $T_x$ is a second order remainder. For the second claim, the proof follows by a simple expansion,
    \begin{align*}
    \mathcal{L}_\Sigma(\mu) &= \int \nabla_\Sigma f \nabla_\Sigma f^T\, d\mu\\
    &= \int \nabla f \nabla f^T\, d\mu + \int T_x(\Sigma) \nabla f^T + \nabla f T_x(\Sigma)\, d\mu\\
    &= \mathcal{L}(\mu) + \int T_x(\Sigma) g^T + g T_x(\Sigma)\, d\mu + + \int T_x(\Sigma) (\nabla f-g)^T + (\nabla f-g) T_x(\Sigma)\, d\mu,
    \end{align*}
    satisfying the claim, with full details being similar to those in the proof of Lemma \ref{lem: Generic Taylor}.
\end{proof}

This expansion is satisfactory for the full-rank setting, indeed, one can substitute $\mathcal{L}_\Sigma$ into the recurrence of Equation \ref{eq: oracle egop}, and arrive at the same expansion of Lemma \ref{lem: Generic Taylor}. A more refined expansion is necessary for the continuous index setting. Let $\operatorname{Null}_x$ denote the null subspace at point $x$.

\begin{lemma}(Anisotropic Smoothed Gradient)\label{lem: aniso smoothed grad}
    Let $(X,Y)$ satisfy the supervised noisy manifold hypothesis, $D>2d$. There exists $T_{x,\Sigma_{\operatorname{Null}_x}}$ and a uniform constant $C$ satisfying $T_{x,\Sigma_{\operatorname{Null}_x}}(A) \leq C\|A\|$,  such that
    \[
    \nabla_{\Sigma} f(x) = \nabla f(x) + T_{x, \Sigma_{\Null_x}}(\Sigma_{\Null_x^\perp} - [\pi_{\operatorname{Null}_x^\perp}\Sigma \pi_{\operatorname{Null}_x} ]\Sigma_{\operatorname{Null}_x}^{-1} [\pi_{\operatorname{Null}_x}\Sigma \pi_{\operatorname{Null}_x^\perp}])
    \]
\end{lemma}

\begin{proof}
    We combine Stein's identity with Lemma \ref{cor: shifted taylor} to yield
    \begin{align*}
        \nabla_\Sigma f(x) = \mathbb{E}_{N(x,\Sigma)}[\nabla f(X)] &= \int \nabla f(y)\, d N(y;x, \Sigma)\\
        &= \int \left(\int \nabla f(y)\, d\mu_v\right)\, d\nu(v) = \nabla f(x) + \int T_v(\Sigma_v) d\nu(v)\\
        &=: \nabla f(x) + T_{x, \Sigma_{\Null_x}}(\Sigma_{\Null_x^\perp} - [\pi_{\operatorname{Null}_x^\perp}\Sigma \pi_{\operatorname{Null}_x} ]\Sigma_{\operatorname{Null}_x}^{-1} [\pi_{\operatorname{Null}_x}\Sigma \pi_{\operatorname{Null}_x^\perp}])  .
    \end{align*}    
\end{proof}

\begin{lemma}[Anisotropic Rotation]\label{lem: aniso rot}
    For $x$ within a neighborhoood of $x^*$,
    \begin{gather*}
    \|\pi_{\operatorname{Null}_x^\perp}\Sigma \pi_{\operatorname{Null}_x}\| = O(\|\pi_{\Null^\perp_{x^*}}[x-x^*]\| +  \|\pi_{\Null^\perp} \Sigma \pi_{\Null}\|),\quad   \|\Sigma_{\Null^\perp_x}^{-1}\|  \leq 2/\lambda_{\min} (\Sigma_{\Null^\perp}),\\
    \|\Sigma_{\Null^\perp_x}\| = O(\|\pi_{\Null^\perp_{x^*}}[x-x^*]\|^2 +  \|\Sigma_{\Null^\perp}\| + \|\pi_{\Null^\perp_{x^*}}[x-x^*]\| \|\Sigma \pi_{\Null^\perp} \|).
    \end{gather*}
\end{lemma}

\begin{proof}
    We first show that
    \[
    \|\pi_{\Null_x} - \pi_{\Null_{x^*}}\| = O(\pi_{\Null_{x^*}^\perp}[x-x^*]),
    \]
    from which the desired result follows easily. As characterized in the proof of Lemma \ref{lem: grad form}, $\Null_x = \phi_{\pi(x),\nabla f(x)}(\mathcal{T}_x)^\perp$, for $\phi_{\pi(x),u}(w):= \mathrm{I\!I}_{\pi(x)}(u, w)$. Thus we see that this subspace changes smoothly in the displacement of the basepoint $\pi(x)$ or the gradient $\nabla f(x)$, i.e. along $\Null^\perp_x$.

    Thus, setting $D_x := \pi_{\Null_{x}^\perp} - \pi_{\Null_{x^*}^\perp}$,
    \begin{align*}
    \|\pi_{\operatorname{Null}_x^\perp}\Sigma \pi_{\operatorname{Null}_x}\| &= \|(\pi_{\Null_{x^*}^\perp} + D_x) \Sigma (\pi_{\Null_{x^*}} - D_x)\|\\
    & \leq \|\pi_{\Null^\perp} \Sigma \pi_{\Null}\| + C \|D_x\| + C'\|D_x\|^2\\
    &= O(\|\pi_{\Null^\perp} \Sigma \pi_{\Null}\| + \|\pi_{\Null^\perp_{x^*}}[x-x^*]\|),
    \end{align*}
    as desired. The third claim follows in the exact same way, stopping at the second line of the derivation. The second claim follows by continuity.
\end{proof}

It is again necessary to localize and extend $f$ so that these constraints are globally active, or contract the covariance cap $\zeta_t \to 0$ at a logarithmic rate.

\begin{lemma}[Anisotropic Gradient Bound]\label{lem: aniso grad bound}
    Let $(X,Y)$ satisfy the supervised noisy manifold hypothesis, $D>2d$. There exists $T_{x,\Sigma_{\operatorname{Null}_x}}$ and a uniform constant $C$ satisfying $T_{x,\Sigma_{\operatorname{Null}_x}}(A) \leq C\|A\|$,  such that
    \[
    \nabla_{\Sigma} f(x) = \nabla f(x) + T_{x, \Sigma_{\Null_x}}(\Sigma_{\Null^\perp}) + R(x),
    \]
    for $R(x)= O(\|\pi_{\Null^\perp}[x - x^*]\|^2 + \|\pi_{\Null^\perp}[x - x^*]\|\|\pi_{\Null^\perp} \Sigma\| + \|\pi_{\Null^\perp} \Sigma\|^2 + \|\Sigma_{\Null^\perp}\|)$.
\end{lemma}

\begin{proof}
    By Lemma \ref{lem: aniso smoothed grad} we can express
    \begin{align*}
    \nabla_{\Sigma} f(x) &= \nabla f(x) + T_{x, \Sigma_{\Null_x}}(\Sigma_{\Null_x^\perp} - [\pi_{\operatorname{Null}_x^\perp}\Sigma \pi_{\operatorname{Null}_x} ]\Sigma_{\operatorname{Null}_x}^{-1} [\pi_{\operatorname{Null}_x}\Sigma \pi_{\operatorname{Null}_x^\perp}])\\
    &=:\nabla f(x) + T_{x, \Sigma_{\Null_x}}(\Sigma_{\Null^\perp}) + R(x).
    \end{align*}
    That $R(x)$ satisfies the appropriate bound is an immediate consequence Lemma \ref{lem: aniso rot} and the Lipschitz continuity of $T$.
\end{proof}

\begin{lemma}[Anisotropic Smoothed EGOP]\label{lem: aniso egop}
    Let $(X,Y)$ satisfy the supervised noisy manifold hypothesis, $D>2d$. Then
    \[
        \mathcal{L}_\Sigma(\mu) = \mathcal{L}(\mu) + \mathbb{E}_\mu[g T_{X, \Sigma_{\Null_{X}}}(\Sigma_{\Null^\perp})^T + T_{X, \Sigma_{\Null_{X}}}(\Sigma_{\Null^\perp})g^T ] + R(\Sigma),
    \]
    for 
    \[
    R(\Sigma) = O(\|\Sigma_{\Null^\perp}\| + \|\pi_{\Null^\perp} \Sigma\|^2 + \sqrt{\|\Sigma_{\Null^\perp}\|} \|\pi_{\Null^\perp} \Sigma\|).
    \]
\end{lemma}

\begin{proof}
    This result is an immediate consequence of Lemma \ref{lem: aniso grad bound} in combination with Isserlis's Theorem \cite{isserlis1918formula}, see Lemma \ref{lem: Generic Taylor} for a similar computation.
\end{proof}

Thus we have achieved a remainder control that is tight under anisotropy, where $\Sigma$ may stretch in the $\Null$ direction without severe consequence. Our primary assumption will be regarding the accuracy of gradient estimation.

\begin{lemma}[EGOP Approximation]\label{lem: egop approx}
    Suppose that $\|\nabla_{\Sigma} \hat f - \nabla_{\Sigma} f\|_{L^2(\mu)} = O(n^{-1}\det(\Sigma_t^{-1/2}))$. Then, 
    \[
    \hat{\mathcal{L}}_{\Sigma}(\mu) = \mathcal{L}_{\Sigma}(\mu) + O\left(\frac{1}{n\sqrt{\det(\Sigma_t)}}\right).
    \]
\end{lemma}
\begin{proof}
    We compute,
    \begin{align*}
        \|\hat{\mathcal{L}}_{\Sigma}(\mu) - \mathcal{L}_{\Sigma}(\mu)\| &= \left\|\int \nabla_{\Sigma} \hat f \nabla_{\Sigma} \hat f^T - \nabla_{\Sigma} f \nabla_{\Sigma} f^T\, d\mu \right\|\\
        &\leq \int \|\nabla_{\Sigma} \hat f - \nabla_{\Sigma} f\|\|\nabla_{\Sigma} f\| + \|\nabla_{\Sigma} \hat f - \nabla_{\Sigma} f\|\|\nabla_{\Sigma} \hat f\|\, d\mu\\
        & = O\left(\frac{1}{n\sqrt{\det(\Sigma_t)}}\right).
    \end{align*}
\end{proof}

We note that the indicated rate is typical when utilizing standard nonparametric estimators, such as via local linear regression, and can be achieved by taking the covariance cap $\zeta \to 0$ at an appropriate rate. This condition is less easily satisfied for fully anisotropic $\Sigma_t$.

\begin{proposition}[Full Rank Empirical Guarantee]\label{prop: Full Rank Emp}
    Let $H$ be full rank, and suppose that $\|\nabla_{\Sigma_t} \hat f - \nabla_{\Sigma_t} f\|_{L^2(\mu_t)} \leq C n^{-1}\det(\Sigma_t^{-1/2})$. Adopt the assumptions of Theorem \ref{thm: Generic Rate}. Then, for $\Sigma_t$ satisfying Equation \ref{eq: smooth EGOP}, for $t_n = \frac{4 \log n}{(D+5) \log(1/r)}$, 
    \[
         \mathbb{E}\left[ \int (\hat{P}_{\hat M_{t_n}}(f) - f)^2\, d\mu_{t_n} \right] = O\left(n^{-\frac{4}{D+5}}\right).
    \]
    
\end{proposition}

\begin{proof}
    By Lemma \ref{lem: smoothed grad} we see that $\hat{\mathcal{L}}(\mu)$ satisfies an equivalent expansion to Lemma \ref{lem: Generic Taylor} up to $O(n^{-1}\det(\Sigma_t^{-1/2}))$ approximation error. As argued in the proof of Theorem \ref{thm: Generic Rate}, the second order recurrence is asymptotically autonomous so long as the empirical error is negligible, satisfying $n^{-1}\det(\Sigma_t^{-1/2}) = O(h_t)$. Evaluating,
    \[
    n^{-1}\det(\Sigma_{t_n}^{-1/2}) = n^{-1} h_{t_n}^{-(d+1)/4} = O(h_{t_n}).
    \]
\end{proof}

\begin{proposition}[Continuous Index Empirical Guarantee]\label{prop: cont ind emp}
    Let $(X,Y)$ satisfy the supervised noisy manifold hypothesis, and  $\operatorname{rank}(H) = 2d$. Suppose that $\|\nabla_{\Sigma_t} \hat f - \nabla_{\Sigma_t} f\|_{L^2(\mu_t)} \leq C n^{-1}\det(\Sigma_t^{-1/2})$. Adopt the assumptions of Theorem \ref{thm: manifold rate}. Then, for the recurrence
    \[
    \Sigma_t = h_{t+1} [\beta \hat{\mathcal{L}}_{\Sigma}(\mu_t) + (1-\beta) \hat{\mathcal{L}}_{\Sigma}(\mu_{t-1})]^{-1},
    \]
    selecting $t_n = \frac{4 \log n}{(2d+5) \log(1/r)}$,
    \[
         \mathbb{E} \left[ \int (\hat{P}_{M_{t_n}}(f) - f)^2\, d\mu_{t_n} \right] = O\left(n^{-\frac{4}{2d+5}}\right).
    \]
    
\end{proposition}

\begin{proof}
    The proof again follows immediately from our previous analysis, exploiting the robustness to perturbations we already verified. First however, we must verify that the expansions of Lemma \ref{lem: aniso egop} and Lemma \ref{lem: weak dependence} are equivalent, i.e. we must verify that 
    \begin{align}\label{eq: aniso condition}
     \|\pi_{\Null^\perp} \Sigma_t\|^2, \sqrt{\|\Sigma_{t,\Null^\perp}\|} \|\pi_{\Null^\perp} \Sigma_t\| = O(\|\Sigma_{t,\Null^\perp}\|),
    \end{align}
    along the smoothed recurrence. We do this first neglecting noise. The base case is the same, and we adopt the inductive hypothesis in the proof of Theorem \ref{thm: manifold rate}. 
    Condition \eqref{eq: aniso condition} is weaker than this inductive hypothesis, and the remainder of the argument is identical.
    Thus, it suffices to check that empirical error is negligible, satisfying $n^{-1}\det(\Sigma_t^{-1/2}) = O(h_t)$. Evaluating,
    \[
    n^{-1}\det(\Sigma_{t_n}^{-1/2}) = n^{-1} h_{t_n}^{-(2d+1)/4} = O(h_{t_n}).
    \]
\end{proof}

\subsection{Additional Settings of Interest}
\begin{example}[Spheres]
        Built into Theorem \ref{thm: manifold rate} are assumptions on the ambient dimension and derivatives of $f$. The sphere $S^{D-1}$ is a typical toy example, however $D\geq 2(D-1)$ fails for $D\geq 3$. In these settings, the hessian has the form
        \[
        H =
        \begin{bmatrix}
            T & b\\
            b^T & 0
        \end{bmatrix}
        \]
        in the $\mathcal{T} \times \mathcal{N}$ basis. Thus the noisy manifold hypothesis only guarantees that a single entry of the Hessian is 0. Hence, $\pi_{g^\perp} H \pi_{g^\perp}$ is generically full rank, driving second order anisotropy along the tangent and orthogonal, as demonstrated in Figure \ref{fig: sphere}.
\end{example}

    \begin{figure}[ht]
        \centering
        \includegraphics[width=0.5\linewidth]{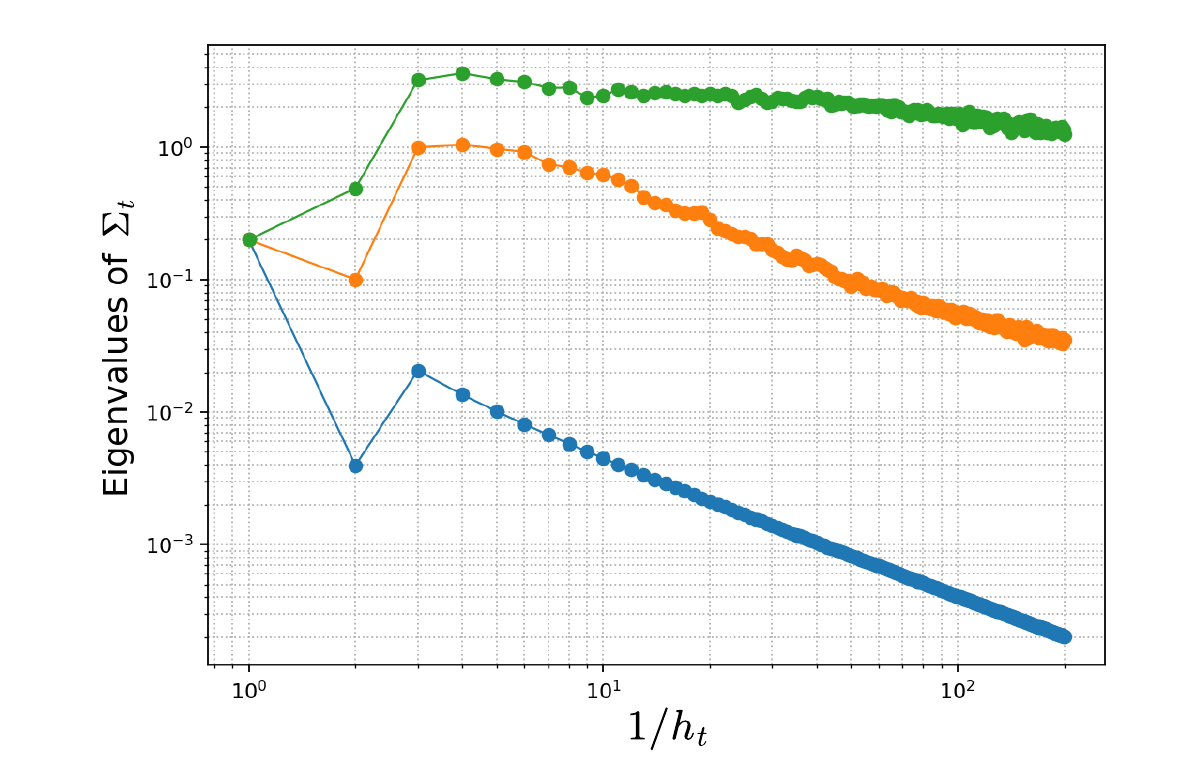}
        \caption{Eigenvalue decay of $\Sigma_t$ with Local EGOP Learning applied to data satisfying the noisy manifold hypothesis about $S^2$. The orthogonal exhibits light decay, approaching second order anisotropy asymptotically.}
        \label{fig: sphere}
    \end{figure}

\end{document}